\DeclareMathOperator{\E}{\mathbb{E}}
\DeclareMathOperator{\Tr}{Tr}
\DeclareMathOperator{\nul}{null}
\DeclareMathOperator{\diag}{diag}
\DeclareMathOperator*{\argmax}{arg\,max}
\DeclareMathOperator{\vect}{vec}
\DeclareMathOperator{\vech}{vech}
\DeclareMathOperator{\logm}{\mathbf{logm}}
\newtheorem{prop}{Proposition}
\newtheorem{lemma}{Lemma}
\newtheorem{corollary}{Corollary}
\font\myfont=cmr12 at 14pt
\title{\myfont System Identification for Continuous-time Linear Dynamical Systems}
\author[1]{Peter Halmos}
\author[2]{Jonathan Pillow}
\author[3-6]{David A. Knowles}
\affil[1]{\textit{Department of Computer Science, Princeton University}}
\affil[2]{\textit{Princeton Neuroscience Institute}}
\affil[3]{\textit{Department of Computer Science, Columbia University}}
\affil[4]{\textit{Department of Systems Biology, Columbia University}}
\affil[5]{\textit{Data Science Institute, Columbia University}}
\affil[6]{\textit{New York Genome Center}}
\pgfplotsset{compat=1.18}
\begin{document}
\maketitle

\begin{abstract}
System identification for the Kalman filter, relying on the expectation-maximization (EM) procedure to learn the underlying parameters of a dynamical system, has largely been studied assuming that observations are sampled at equally-spaced time points. However, in many applications this is an unrealistic assumption. We provide an algorithm for system identification for the continuous-time Kalman filter. This is done by relying on a solution to an It\^{o} stochastic differential equation (SDE) for the latent state and covariance. We introduce a novel two-filter form for the posterior, which yields analytical backward-direction updates which do not require the forward-pass to be pre-computed. Using this posterior, we derive a procedure which estimates the parameters of the SDE, naturally incorporating irregularly sampled measurements. Generalizing the learning of linear dynamical systems (LDS) to continuous-time may extend the use of LDS to data which is not regularly sampled or has intermittent missing values. We apply the method by learning the parameters of a latent, multivariate Fokker-Planck SDE representing a toggle-switch genetic circuit that uses biologically realistic parameters, and demonstrate superior performance to the discrete-time Kalman filter as the step-size irregularity and spectral-radius of the dynamics-matrix increases.
\end{abstract}
\section{Introduction}
The Kalman filter is a ubiquitous method in control theory and engineering, routinely used for the estimation of the latent state of a time-varying system subject to noisy observations and controls \cite{KF_OG}. The problem of system identification, i.e., the identification of the parameters of a dynamical system, has been addressed in discrete-time for linear time-invariant (LTI) systems using the expectation-maximization (EM) algorithm \cite{KFEM_OG}. EM is used for non-linear system identification in the context of switching LDS (SLDS), which generalize the parametric linearity to local regions in state-space or in time \cite{slds,slds_param_est}. Moreover, LDS has gained renewed interest in the machine learning community from the perspective of state-space models (SSM) such as S4 \cite{smith2023simplified} and Mamba \cite{gu2023mamba}, which achieve SOTA sequential modeling performance competitive with transformers. These models rely on LDS as a block-unit which may be stacked with non-linear activation functions. Both SLDS and SSMs generally consider LDS units in discrete-time--as such, generalizing learning of LDS to continuous-time could extend many non-linear system identification and machine learning methods which rely on LDS. The discrete-time Kalman filter in particular assumes homogenous sample times, and a few works have addressed this limitation in the context of parameter-learning. Mbalawata et al. considered a gradient-ascent based approach on the forward-filter likelihood, which provides maximum-likelihood estimates for the parameters of the continuous-discrete filter \cite{cdkf_maxlike}. While this does not require the computation time of smoothing, maximum-likelihood is generally insufficient for parameter estimation in models with latent variables. Expectation-maximization has been used in the continuous-discrete case with specific assumptions, such as observations which are Lebesgue-sampled under particular integer sample times, and also for continuous-time hidden Markov models (HMMs), but not in the most general case of continuous-time linear dynamical systems with no assumptions on the sample-time distribution \cite{lebesgue_em} \cite{cont_HMM}. 

\section{Background}

\textbf{The Kalman filter.} The standard discrete-time Kalman filter has an immediate correspondence to first-order homogeneous linear dynamics, of the general form $\mathbf{\dot{x}} = \mathbf{A x}$, where $\mathbf{A} \in \mathbb{R}^{n \times n}$ and $\mathbf{x} \in \mathbb{R}^{n}$. The ansatz for the solution of $\mathbf{x}(t) = e^{\mathbf{A}t} \mathbf{c}$, for continuous time $t$ and constant of integration $\mathbf{c}$, yields $\mathbf{\dot{x}} = \mathbf{A} e^{\mathbf{A}t} \mathbf{c}$, as expected. Integrating the continuous solution $\int_{t_{k-1}}^{t_{k}} \mathbf{\dot{x}} dt = \int_{0}^{t_{k}-t_{k-1}} \mathbf{A} e^{\mathbf{A}s} ds \mathbf{c}$, we find $\mathbf{x}(t_{k}) - \mathbf{x}(t_{k-1}) = (e^{\mathbf{A}(t_{k} - t_{k-1})} - \mathbbm{1})\mathbf{x}(t_{k-1})$ for $\mathbf{c} = \mathbf{x}(t_{k-1})$. This shows that any linear differential equation over a fixed sample interval $\tau_{k}$ has a corresponding discrete-time equivalent $\mathbf{x}(t_{k}) = e^{\mathbf{A}(t_{k} - t_{k-1})} \mathbf{x}(t_{k-1}) = e^{\mathbf{A} \tau_{k}} \mathbf{x}(t_{k-1}) = \mathbf{B}_{k} \mathbf{x}(t_{k-1})$
With $\tau_{k} = t_{k} - t_{k-1}$. By fixing the interval $\tau = t_{k} - t_{k-1}$ between a measurement at $t_{k}$, and a prediction from time $t_{k-1}$, the Kalman filter assumes a discrete evolution of the form
$$
\mathbf{x} (t_{k}) = e^{\mathbf{A} \tau_{k}} \mathbf{x} (t_{k-1}) \triangleq e^{\mathbf{A} \tau} \mathbf{x}_{k-1} = \mathbf{B} \mathbf{x}_{k-1}.$$
This yields the assumption that one learns a fixed matrix $\mathbf{B}$, rather than a continuous-time ODE. Usually EM for Kalman filtering is studied in the setting where one learns a single matrix $\mathbf{B}$ assuming the sample-times at which observations are observed are equally-spaced. A similar restriction is applied to the motion-model covariance matrix which is assumed to be a fixed matrix, which may in reality depend on time. After discrete-time EM, the only manner in which one may equate $\mathbf{B}$ to the continuous dynamics is by an evaluation of the matrix logarithm $\mathbf{A} = \frac{\logm(\mathbf{B})}{\tau}$. For irregularly sampled time-steps without any assumptions, one cannot disambiguate the true dynamics $\mathbf{A}$ with the discrete-time Kalman filter and any methods reliant on its assumption of time-discretization. As such, we propose a continuous-time generalization of the EM procedure for learning the parameters.

\textbf{Expectation-Maximization.} If one has observed pairs of points $\{ \mathbf{z}_{i}, \mathbf{x}_{i} \}_{i=1}^{N}$ sampled from some density in a parametric family $\mathcal{F_{\theta}}$, the method of maximum-likelihood enables the direct maximization of the joint-density for $\hat{\theta} = \argmax_{\theta \in \Theta} \log{\mathds{P}_{\mathbf{XZ}}(\mathbf{x}_{1:N}, \mathbf{z}_{1:N}| \theta)}$. When the variables $\mathbf{x}$ are latent, the maximization of the marginal likelihood expressed in terms of the observables is given as:
$$\log{\mathds{P}_{\mathbf{Z}}(\mathbf{z}| \theta)}  := \log{\int_{\mathbf{x}}\mathds{P}(\mathbf{z}, \mathbf{x}| \theta) d\mathbf{x}}$$
A direct-maximization of the marginal log-likelihood is intractable, given that the logarithm cannot be moved into the integral. Introducing a distribution $\mathds{Q}_{\mathbf{X}}$ over the latent-variables $\mathbf{x}_{1:N}$, one may re-express the marginal log-likelihood as
$$\log{\mathds{P}_{\mathbf{Z}}(\mathbf{z}| \theta)} = \int_{\mathbf{x}} \log{\mathds{P}_{\mathbf{Z}}(\mathbf{z}| \theta)} \mathds{Q}_{\mathbf{x}}(\mathbf{x}) d\mathbf{x}$$
Expanding this further, one finds:
$$
\int_{\mathbf{x}}  \biggl(\log{\mathds{P}_{\mathbf{x}}(\mathbf{x} , \mathbf{z} | \theta) -  \log{\mathds{P}_{\mathbf{z}}(\mathbf{x} | \mathbf{z}, \theta)} } \biggr)  \mathds{Q}_{\mathbf{x}}(\mathbf{x}) d\mathbf{x} =$$
\begin{small}
\[\int_{\mathbf{x}} \log{\left[ \frac{\mathds{P}_{\mathbf{X}}(\mathbf{x} , \mathbf{z} | \theta)}{\mathds{Q}_{\mathbf{X}}(\mathbf{x})} \right]} \mathds{Q}_{\mathbf{X}}(\mathbf{x}) d\mathbf{x} 
-  \int_{\mathbf{x}} \log{\left[\frac{\mathds{P}_{\mathbf{Z}}(\mathbf{x} | \mathbf{z}, \theta)}{\mathds{Q}_{\mathbf{X}}(\mathbf{x})} \right]}   \mathds{Q}_{\mathbf{X}}(\mathbf{x}) d\mathbf{x}
\]\end{small}
$$
\triangleq \text{ELBO}(\mathds{Q}_{\mathbf{X}}(\mathbf{x}); \theta) + D_{KL}(\mathds{Q}_{\mathbf{X}}(\mathbf{x}) \parallel \mathds{P}_{
\mathbf{X}}(\mathbf{x} | \mathbf{z}, \theta))
$$
Here we have defined $\text{ELBO}(\mathds{Q}_{\mathbf{X}}(\mathbf{x}); \theta)$ as the evidence-lower bound, and $D_{KL}(\mathds{Q}_{\mathbf{X}}(\mathbf{x}) \parallel \mathds{P}_{\mathbf{X}}(\mathbf{x} | \mathbf{z}, \theta))$ as the KL-divergence between the conditional posterior over $\mathbf{x}$ and the true posterior where $D_{KL}(\mathds{Q}_{\mathbf{X}}(\mathbf{x}) \parallel \mathds{P}_{\mathbf{X}}(\mathbf{x} | \mathbf{z}, \theta)) \geq 0$ with equality if and only if $\mathds{Q}_{\mathbf{X}}(\mathbf{x}) = \mathds{P}_{X}(\mathbf{x} | \mathbf{z}, \theta)$. Thus the ELBO is a lower-bound to the marginal log-likelihood.
The EM-algorithm constitutes an iterative procedure where for the $t$\textsuperscript{th} iteration, one performs an E-step which fixes the previous parameter $\theta_{t-1}$ and optimizes $\text{ELBO}(\mathds{Q}_{\mathbf{X}}(\mathbf{x}); \theta)$ with respect to the posterior $\mathds{Q}_{\mathbf{X}}(\mathbf{x})$. One alternates the E-step with an M-step which fixes the posterior $\mathds{Q}_{\mathbf{X}}(\mathbf{x})$ and optimizes $\text{ELBO}(\mathds{Q}_{\mathbf{X}}(\mathbf{x}); \theta)$ with respect to $\theta$ to yield $\theta_{t}$. These alternating optimizations monotonically increase the log-likelihood, proceeding until $\theta$ is a fixed-point.


\textbf{Filtering and the discrete-time Kalman filter.} Given some dataset $\{\mathbf{z}_{i}\}_{i=1}^{T}$ and dynamics model there are generally three tasks of interest. \textbf{(1.)} Filtering, or inferring the posterior distribution over the latent (or observed) variable given previously observed time-points $\mathds{P}_{\mathbf{X}}(\mathbf{x}_{t}| \mathbf{z}_{1:t-1}; \theta)$. \textbf{(2.)} Smoothing, or inferring the posterior for a latent (or observed) variable given the full data $\mathds{P}_{\mathbf{X}}(\mathbf{x}_{t}| \mathbf{z}_{1:T}; \theta)$. \textbf{(3.)} Forecasting, or predicting a future observation $\mathds{P}_{\mathbf{X}}(\mathbf{z}_{T + \delta} | \mathbf{z}_{1:T}; \theta)$, for $\delta > 0$ \cite{rubanova_thesis}. The determination of a filtered state $\mathds{P}_{\mathbf{X}}(\mathbf{x}_{t} | \mathbf{z}_{1:t-1}, \theta)$ is an essential component to smoothing, as the smoothed density can factorize into a forward-filtered density dependent on $\mathbf{z}_{1:t-1}$ and a recursive component which uses the information from the future states, $\mathbf{z}_{t:T}$. 

The Kalman filter, in discrete-time, has a motion-matrix $\mathbf{B} \in \mathbb{R}^{n \times n}$, an observation matrix relating the latent-state to an observation $\mathbf{H} \in \mathbb{R}^{m \times n}$, an initial mean state $\mathbf{\mu_{0}} \in \mathbb{R}^{n}$. It also has a constant motion-covariance $\mathbf{Q} \in \mathcal{S}_{n}^{+}$, an observation-model covariance $\mathbf{R} \in \mathcal{S}_{m}^{+}$, and an initial state covariance $\mathbf{P_{0}} \in \mathcal{S}_{n}^{+}$.
The dynamics-model of the discrete-time Kalman filter is
\begin{equation}
\mathbf{x}_{k} = \mathbf{B x}_{k-1} + \mathbf{w},
\end{equation}
and the observation model is
\begin{equation}
\mathbf{z}_{k} = \mathbf{H x}_{k} + \mathbf{v},
\end{equation}
where the motion (latent) and observation noise are distributed with constant-covariance matrices as $\mathbf{w} \sim \mathcal{N}(\mathbf{0}, \mathbf{Q})$, and $\mathbf{v} \sim \mathcal{N}(\mathbf{0}, \mathbf{R})$. The initial state is sampled in accordance with the initial mean and covariance $\mathbf{x_{0}} \sim \mathbf{\mu_{0}} + \mathbf{w_{0}}$, for $\mathbf{w_{0}} \sim \mathcal{N}(\mathbf{0}, \mathbf{P_{0}})$.

\section{Statement of the Problem}
We consider the following It\^{o} SDE, which generalizes the discrete-time formulation of the Kalman filter to continuous-time \cite{cdkf_sols_OG}
\begin{equation}
    d\mathbf{x}(t) = \mathbf{A x}(t) dt + d\mathbf{w}(t) ,
\end{equation}
where we have second-order conditions on the noise-differential of $\E[d\mathbf{w}(t)d\mathbf{w}(t)^{\mathbf{T}}] = \mathbf{Q_c} dt$ and $\E[\mathbf{w}(t)\mathbf{w}(s)^{\mathbf{T}}] = \mathbf{Q_c} \delta(t - s)$, for $\delta$ denoting a Dirac-delta. This It\^{o} SDE is equivalent to the standard SDE,
\begin{equation}\label{eqn:SDE1}
\mathbf{\dot{x}}(t) = \mathbf{A x}(t) + \mathbf{w}(t). 
\end{equation}
We assume system observability, and express this in the form typically found in the continuous-discrete Kalman filter, where the measurements themselves are observed at discrete but potentially irregularly-spaced time-points,
\begin{equation}\label{eqn:SDE2}
\mathbf{z}(t_{k}) = \mathbf{H x}(t_{k}) + \mathbf{v}.
\end{equation}
As is standard, we separate observation noise from motion-model noise, with the solution for the motion-model covariance given by a time-dependent integral equation from the SDE. The observation noise is distributed as $\mathbf{v} \sim \mathcal{N}(\mathbf{0}, \mathbf{R})$ and the motion-model noise as $\mathbf{w}(t) \sim N(\mathbf{0}, \mathbf{Q}(t))$.
We introduce the standard notation $f/-$ indicating an a priori estimate of the system state in the forward direction without measurement, and $f/+$ indicating the posterior given to the state after a measurement (with the notation for the backwards direction analogously denoted $b/-$ and $b/+$). The continuous-discrete extension of the discrete-time Kalman filter, which involves a latent-dynamics matrix $\mathbf{A}$ of first-order derivatives, has an established result \cite{cdkf_sols} for the forward filter $\mathds{P}_{\mathbf{X}}(\mathbf{x}_{t} | \mathbf{z}_{1:t-1}, \theta) = \mathcal{N}(\mathbf{x}_{t} | \mathbf{x}_{t}^{f/+}, \mathbf{P}_{t}^{f/+} )$ where
\begin{align*}
\mathbf{x}_{t}^{f/-} &= e^{\mathbf{A} \tau_{t}} \mathbf{x}_{t-1}^{f/+}\\
   \mathbf{x}_{t}^{f/+} &= \mathbf{x}_{t}^{f/-} + \mathbf{K}_{t} \left( \mathbf{z}_{t} - \mathbf{H} e^{\mathbf{A} \tau_{t}} \mathbf{x}_{t-1}^{f/+} \right) \\
   \mathbf{P}_{t}^{f/-} &= e^{\mathbf{A}\tau_{t}} \mathbf{P}_{t-1}^{f/+} e^{\mathbf{A}^{T}\tau_{t}} + \mathbf{Q}(\tau_{t})\\
    \mathbf{P}_{t}^{f/+} &= \left( \mathbbm{1} - \mathbf{K}_{t} \mathbf{H}  \right) \mathbf{P}_{t}^{f/-}
\end{align*}
where $\mathbf{P}_t$ represents the covariance of $\mathbf{x}_t$ and the ``Gain-matrix'' $\mathbf{K}_{t}$ is
\begin{equation}
    \mathbf{K}_{t} = \mathbf{P}_{t}^{f/-} \mathbf{H}^{T} \left( \mathbf{H P}_{t}^{f/-} \mathbf{H}^{T} + \mathbf{R} \right)^{-1}. 
\end{equation}

\section{Main Results}
\subsection{Alternative derivations of solutions to the Lyapunov-type SDE}
Given the SDE above \ref{eqn:SDE1}, one can solve to yield a time-dependent analytical solution to the evolution of both the mean-state and covariance-matrix. We offer analytical solutions to the Lyapunov-type SDEs in section \ref{sec:SDEsolns}, given with variation of parameters as a derivation distinct from Axelsson et al. that recapitulates the same final solution \cite{cdkf_sols}. The evolution of the covariance matrix of $\mathbf{x}$ in the forward-direction is given by the Lyapunov differential equation
$$
\mathbf{\dot{P}}(t) = \mathbf{A P}(t) + \mathbf{P}(t) \mathbf{A^{T}} + \mathbf{Q_c}.
$$
In Appendix \ref{sec:SDEsolns}, the analytical form for the evolution of the covariance matrix for the motion noise $\mathbf{w}(t)$ is derived as
\begin{align*}
\vech{\mathbf{Q}(t)} &= e^{\mathbf{A_{P}} (t-t_{0})} \mathbf{A}_{P}^{-1} (\mathbbm{1} - e^{-\mathbf{A_{P}}(t - t_{0})}) \vech{\mathbf{Q_c}}\\
\mathbf{A_{P}} &= \mathbf{D^{\dagger}} (\mathbbm{1} \otimes \mathbf{A} + \mathbf{A} \otimes \mathbbm{1}) \mathbf{D}
\end{align*}
where $\vech: \mathcal{S}_{n} \to \mathbb{R}^{\frac{n(n+1)}{2}}$ denotes the half-vectorization operation which stacks the upper-triangular elements of a symmetric matrix in $\mathbb{R}^{n \times n}$, $\mathbf{D^{\dagger}} \in \mathbb{R}^{\frac{n(n+1)}{2} \times n}$ is the elimination-matrix which converts the vectorization of a matrix to its half-vectorization, $\mathbf{D} \in \mathbb{R}^{n \times \frac{n(n+1)}{2}}$ is the unique duplication-matrix which transforms the half-vectorization of a matrix to its vectorization,  and $\otimes$ denotes the Kronecker product. $\vect: \mathbb{R}^{m\times n} \to \mathbb{R}^{mn}$ denotes the vectorization, or matrix column-stacking operation. It is shown that using this covariance, and with a fixed prior uncertainty given by $\mathbf{P}(t_{0})$, the final covariance in the forward-direction for a state is,
$$\mathbf{P}(t) = e^{\mathbf{A}(t-t_{0})} \mathbf{P}(t_{0}) e^{\mathbf{A}^{T} (t-t_{0})} + \mathbf{Q}(t).
$$
Analogous forms are given in the Appendix for the backwards direction. Relying on both of these solutions, we extend the filtering, smoothing, and EM procedure of the discrete-time Kalman filter to continuous-time.

\begin{figure*}[t!]
\setkeys{Gin}{width=1\linewidth}
\begin{minipage}[t]{1\textwidth}
\includegraphics{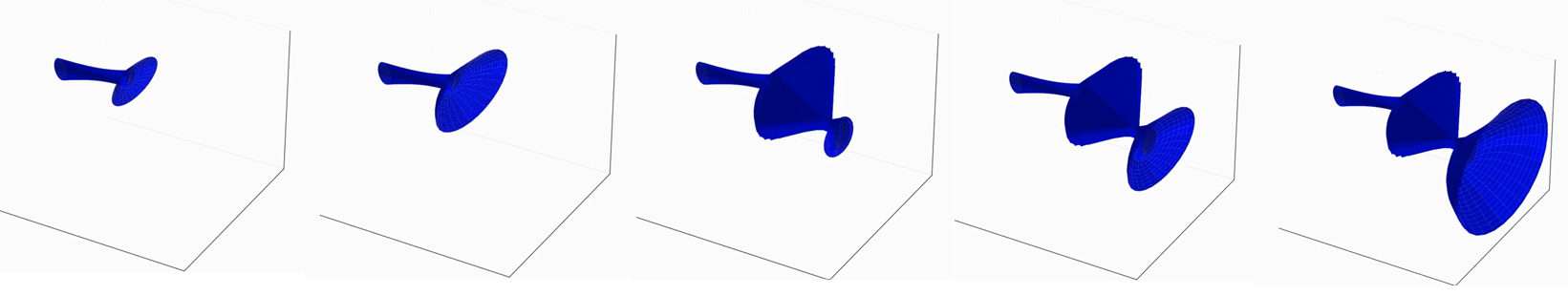} \label{fig:b}
\end{minipage}\hfill\caption{Isocontours of the time-dependent covariance $\bm{Q}(s)$ centered about the state-mean $\bm{x}(s)^{f/-} = e^{\bm{A}s}\bm{x}(t_{k-1})$. Snapshots show the increase in uncertainty of the time-dependent covariance between measurements and collapse to a certain state following measurement.}
\end{figure*}

\subsection{A two-filter form for the backwards pass in the continuous-discrete Kalman filter}\label{sec:EStep}
Smoothing for the Kalman filter, i.e., determining the posterior on a given state $\mathbf{x}_{k}$ given all observations $\mathbf{z}_{1:N}$, $\mathds{P}(\mathbf{x}_{k} | \mathbf{z}_{1:N})$ generally involves one pass of forward-filtering to determine $\mathds{P}(\mathbf{x}_{k}| \mathbf{z}_{1:k})$, and a backwards-pass reliant on the precomputed forwards pass, referred to as Rauch-Tung-Striebel (RTS) smoothing. We introduce an $\alpha, \beta$ form where the smoothed distribution is decomposed as $\mathds{P}(\mathbf{x}_{k} | \mathbf{z}_{1:N}) = \mathds{P}(\mathbf{x}_{k}| \mathbf{z}_{1:k}) \frac{\mathds{P}(\mathbf{z}_{k+1:K}|\mathbf{x}_{k})}{\mathds{P}(\mathbf{z}_{k+1:K})}$, which is computed with a recurrent computation of the likelihood $\frac{\mathds{P}(\mathbf{z}_{k+1:K}|\mathbf{x}_{k})}{\mathds{P}(\mathbf{z}_{k+1:K})}$, where one can compute updates in the reverse-direction in parallel with the forward-update of $\mathds{P}(\mathbf{x}_{k}| \mathbf{z}_{1:k})$. We are able to find a derivation for this owing to the structure of the continuous-time problem where one has that the inverse of the continuous solution $\left(e^{ \mathbf{A} s}\right)^{-1} \mathbf{x}(t_{k-1})=e^{- \mathbf{A} s} \mathbf{x}(t_{k-1})$ always exists. This can be evaluated for all finite $s$ as $|\det  e^{ -\mathbf{A} s} | = e^{-\Tr[\mathbf{As}]} >0$. As such, we compute a backwards pass likelihood \emph{analytically} in continuous time. 

The existing information-filter formulation for smoothing uses the information matrix $(\mathbf{P}_{k}^{b/+})^{-1}$, and the intermediate variable $(\mathbf{P}_{k}^{b/+})^{-1} \mathbf{\mu}_{k}^{b/+}$ to compute a backward pass in parallel with the forward-pass with numerical integration \cite{Lewis2017}. As such, our update can be directly used in information filtering as an analytical solution, yielding a parallel algorithm. Alternatively, one may use this formulation to compute the smoothed posterior following forward-filtering in the same order as RTS. In this work, we provide Algorithm~\ref{alg:backsmooth} as an alternative to RTS smoothing, and prove its equivalence. The posterior-updates in the reverse-direction involve a recursive determination of a Gaussian $\mathcal{N}(\mathbf{x}_{k} | \mathbf{\mu}_{k}^{b/+}, \mathbf{P}_{k}^{b/+})$ which is dependent on a future mean $\mathbf{\mu}_{k}^{b/+}$, and covariance $\mathbf{P}_{k}^{b/+}$. Letting $\mathbf{y} = \mathbf{x}_{k+1}$ and $\mathbf{Q}(\tau_{k+1}) = \mathbf{Q}_{k+1}$, the backward-direction likelihood, $N(\mathbf{x}_{k+1} | \mathbf{\mu}_{k+1}^{b/+}, \mathbf{P}_{k+1}^{b/+})$ is defined by the recurrence:
\vspace{-1mm}
 \begin{align*}
& \mathcal{N}(\mathbf{x}_{k} | \mathbf{\mu}_{k}^{b/+}, \mathbf{P}_{k}^{b/+}) \mathcal{N}(\mathbf{z}_{k+1})= \int \mathcal{N}(\mathbf{y} | e^{\mathbf{A} \tau_{k+1}} \mathbf{x}_{k}, \mathbf{Q}_{k+1})\mathcal{N}(\mathbf{z}_{k+1} | \mathbf{H y}, \mathbf{R}) \mathcal{N}(\mathbf{y} | \mathbf{\mu}_{k+1}^{b/+}, \mathbf{P}_{k+1}^{b/+}) d\mathbf{y} 
\end{align*}\label{eqn:recursive_main}

Proposition~\ref{prop:backwardsmoments} offers the solution to this recurrence.
\begin{prop}\label{prop:backwardsmoments}
    Given system parameters $(\mathbf{Q}_{c}, \mathbf{A}, \mathbf{H})$, and $\bm{V}(t)$, $\bm{Q}(t)$ as time-dependent covariance functions defined in \ref{eqn:Vt}, \ref{eqn:Qt}, then the recursive likelihood \ref{eqn:recursive_main} is distributed with the following mean $\mathbf{\mu}_{k+1}^{b/+}$ and covariance $\mathbf{P}_{k+1}^{b/+}$:
    \small
    \begin{align*}
        & \mathbf{\mu}_{k}^{b/+} = \E_{\sim \mathbf{x}_{k} | \mathbf{z}_{k+1:N}}[\mathbf{x}_{k}] \\
    &= e^{-\mathbf{A} \tau_{k+1}} \mathbf{\mu}_{k+1}^{b/+} + e^{-\mathbf{A} \tau_{k+1}} \mathbf{W}_{k+1} (\mathbf{z}_{k+1} - \mathbf{H \mu}_{k+1}^{b/+}) 
\end{align*}\label{eqn:backmean_main}
    \begin{align*}
    & \mathbf{P}_{k}^{b/+}= \E_{\sim \mathbf{x}_{k} | \mathbf{z}_{k+1:N}}\biggl[\left(\mathbf{x}_{k} - \mathbf{\mu}_{k}^{b/+}\right)\left(\mathbf{x}_{k} - \mathbf{\mu}_{k}^{b/+}\right)^{T}\biggr]  \\
&= e^{- \mathbf{A} \tau_{k+1}} \left( \mathbf{Q}(\tau_{k+1}) + \left( \mathbbm{1} - \mathbf{W}_{k+1} \mathbf{H} \right) \mathbf{P}_{k+1}^{b/+} \right) e^{-\mathbf{A}^{T} \tau_{k+1}} \\
&= \mathbf{V}(\tau_{k+1}) + e^{-\mathbf{A} \tau_{k+1}} \left( \mathbbm{1} - \mathbf{W_{k+1} H} \right) \mathbf{P_{k+1}^{b/+}} e^{-\mathbf{A^{T}} \tau_{k+1}}
\end{align*}\label{eqn:backcovar_main}\normalsize
For
$$\mathbf{W}_{k+1} = \mathbf{P}_{k+1}^{b/+} \mathbf{H}^{T} \left( \mathbf{H} \mathbf{P}_{k+1}^{b/+} \mathbf{H}^{T} + \mathbf{R} \right)^{-1}$$
a backwards-direction gain-matrix.
\end{prop}
The proof of Proposition~\ref{prop:backwardsmoments} is given in Appendix~\ref{proof:backwardsfilter}. We prove in Appendix~\ref{sect:equivalence} that this update is equivalent to the standard RTS set of updates for the posterior, for an appropriate initial condition $\mathbf{P}_{N-1}^{b/+}$ and $\mathbf{\mu}_{N-1}^{b/+}$.
\begin{prop}
    RTS smoothing offers identical first and second moments, $\E_{\sim \mathbf{x}_{k}|\mathbf{z}_{1:N}}{[\mathbf{x}_{k}]}$, $\E_{\sim \mathbf{x}_{k}|\mathbf{z}_{1:N}}{[\mathbf{x}_{k} \mathbf{x}_{k}^{T}]}$, and $\E_{\sim \mathbf{x}_{k},\mathbf{x}_{k-1}|\mathbf{z}_{1:N}}{[\mathbf{x}_{k} \mathbf{x}_{k-1}^{T}]}$ to the $\alpha, \beta$ form for an appropriate initial condition. The correspondence which relates the backwards-updates to standard RTS smoothing is
    \begin{align}
    \mathbf{P}_{k}^{s} =
    \left((\mathbf{P}_{k}^{b/+})^{-1} + (\mathbf{P}_{k}^{f/+})^{-1} \right)^{-1} \\
    \mathbf{\mu}_{k}^{s} = \mathbf{P}_{k}^{s} \left[(\mathbf{P}_{k}^{f/+})^{-1} \mathbf{\mu}_{k}^{f/+} + (\mathbf{P}_{k}^{b/+})^{-1} \mathbf{\mu}_{k}^{b/+} \right].
    \end{align}
\end{prop}
With final smoothed updates computed from the forward and backward pass, one may rely on the following updates for the moments of the smoothed-distribution, with the mean given above, and the cross-correlation
$$
\E_{\sim \mathbf{ x}_{k}, \mathbf{x}_{k-1}|\mathbf{z}_{1:N} }\left[\mathbf{x}_{k} \mathbf{x}_{k-1}^{T} \right]$$
$$ = \mathbf{P}_{k}^{s} (\mathbf{P}_{k}^{f/-})^{-1} e^{\mathbf{A} \tau_{k}} \mathbf{P}_{k-1}^{f/+} + \mathbf{\mu}_{k}^{s} (\mathbf{\mu}_{k-1}^{s})^{T}
$$
and the marginal autocorrelation
$$
\E_{ \mathbf{x}_{k}|\mathbf{z}_{1:N}}\left[\mathbf{x}_{k}\mathbf{x}_{k}^{T} \right] = \mathbf{P}_{k}^{s} + \mathbf{\mu}_{k}^{s} (\mathbf{\mu}_{k}^{s})^{T}.
$$
With all of these quantities defined, we express the new two-filter smoothing procedure in Algorithm~\ref{alg:backsmooth} in the appendix.
\subsection{M-step for the model parameters}
Unlike the discrete-time case, in continuous-time one does not have immediate closed-form updates for $\mathbf{A}$ and $\mathbf{Q}_{c}$. Instead, for $\mathbf{A}$ one must solve a non-linear regression problem involving matrix-exponentials. In the case of the differential covariance $\mathbf{Q}_{c}$, one finds the problem involves a non-linear optimization, and some structure has to be enforced to find an approximating least-squares update. Thus, given the smoothed-posterior over latent state $\mathbf{x}_{k}$ in the form of the moments $\E_{\sim \mathbf{x}_{k}|\mathbf{z}_{1:N}}{[\mathbf{x}_{k}]}$, $\E_{\sim \mathbf{x}_{k}|\mathbf{z}_{1:N}}{[\mathbf{x}_{k} \mathbf{x}_{k}^{T}]}$, and $\E_{\sim \mathbf{x}_{k},\mathbf{x}_{k-1}|\mathbf{z}_{1:N}}{[\mathbf{x}_{k} \mathbf{x}_{k-1}^{T}]}$, we introduce new M-step updates for $\mathbf{A}$ and $\mathbf{Q}_{c}$.

\textbf{The dynamics matrix.} For the dynamics matrix, we offer two sets of updates--one approximate and closed form, and one general involving a non-linear optimization which uses the solution to the former as an initial condition. We first introduce the approximate update for the dynamics-matrix $\mathbf{A}$, which is especially useful for the case in which $\mathbf{A}$ has a small spectral radius, $\rho(\mathbf{A}) < 1$, or the intervals $\tau_{k}$ are small. Defining $\Tr{\left[ \E[\mathbf{x}_{k-1}\mathbf{x}_{k-1}^{T}] \right]} = \sigma_{k-1}$, and the matrices:
$$\bm{M}_{k,k-1} = \E[\mathbf{x}_{k} \mathbf{x}_{k-1}^{T}] - \E[\mathbf{x}_{k-1} \mathbf{x}_{k-1}^{T}]$$
$$
\bm{Z}_{k-1,k-1} = \E[\mathbf{x}_{k-1} \mathbf{x}_{k-1}^{T}]
$$
we show one may approximate $\mathbf{A}$ in closed-form.
\begin{prop}\label{prop:approximatingA_main}
A second-order approximation for an optimization of the form:
\begin{equation*}
\max_{\mathbf{A} \in \mathbb{R}^{n \times n}} -\frac{1}{2} \sum_{k=2}^{N} \left\lVert \mathbf{x_{k}} - e^{\mathbf{A} \tau_{k}} \mathbf{x_{k-1}} \right\rVert_{\mathbf{Q}(\tau_{k})^{-1/2}}^{2} 
\end{equation*}
Under a smoothed-posterior over states $\mathds{P}_{\mathbf{x_{k}}, \mathbf{x_{k-1}} | \mathbf{z_{1:N}}}$, and assuming the commutator $[\mathbf{A}, \mathbf{Q}(\tau_{k})] = 0$ for all $k$, is given by the update:
\small
\begin{equation*}\label{eqn:Acommute_main}
\mathbf{A} = 
\left( 
\sum_{k=2}^{N}
\mathbf{Q}(\tau_{k})^{-1} \bm{M}_{k,k-1} 
\right)
\left(
\sum_{k=2}^{N} \tau_{k} \mathbf{Q}(\tau_{k})^{-1} \bm{Z}_{k-1,k-1} \right)^{-1}
\end{equation*}
\normalsize
In the case they do not commute, a second-order, least-squares approximation is given by:
\small
\begin{equation*}\label{eqn:A1}
\mathbf{A} = \left( \sum_{k=2}^{N} \tau_{k} \sigma_{k-1} \bm{M}_{k,k-1}
\right) \left( \sum_{k=2}^{N}
\tau_{k}^{2} \sigma_{k-1} \bm{Z}_{k-1,k-1} \right)^{-1} 
\end{equation*}
\normalsize
\end{prop}
These updates have the advantage of being fast, accurate for small spectral radius, and usable as an initial condition when $\mathbf{A}$ has any spectral radius. For the general problem, relying on \ref{eqn:A1} as an initial condition, we solve the full non-linear matrix exponential regression problem. We give an analytical form for the gradient of the expected log-likelihood \ref{eqn:exploglikeA} with respect to $\mathbf{A}$ and prove its convergence, enabling a numerical method such as Newton Conjugate-Gradient to optimize a curtailed form of the infinite-series gradient for a root. This is valuable with no restriction on the spectral radius of $\mathbf{A}$ or interval sizes, e.g. for more difficult systems where either the sample time $\tau_{k}$ is large or the spectral radius of $\mathbf{A}$, $\rho(\mathbf{A}) \gg 1$, is large. Defining $\mathbf{V}_{k} =  \mathbf{Q}(\tau_{k})^{-1} \left(\mathbb{E}[\mathbf{x}_{k}\mathbf{x}_{k-1}^{T}] - e^{\mathbf{A} \tau_{k}} \mathbb{E}[\mathbf{x}_{k-1}\mathbf{x}_{k-1}^{T}] \right) $, we use an infinite-series representation of the gradient of the log-likelihood, $\mathbb{E}_{\sim \mathbf{x}_{k}, \mathbf{x}_{k-1}}[\nabla_{\mathbf{A}} \ln{\mathds{P}(\mathbf{x}_{k} | \mathbf{x}_{k-1}, \mathbf{A}, \mathbf{Q_c}}, \tau_{k})]$
\begin{equation}\label{eqn:exploglikeA}
=
\sum_{r=0}^{\infty} \sum_{j=0}^{r} \sum_{k=2}^{N}
\frac{(\tau_{k})^{r+1}}{(r+1)!}(\mathbf{A^{T}})^{j} \mathbf{V}_{k}(\mathbf{A^{T}})^{r-j}
\end{equation}
and prove that \ref{eqn:A1} and \ref{eqn:Acommute} follow as fast and approximate solutions for the dynamics in \ref{prop:approximatingA}, and that the gradient is convergent in \ref{lemma:convergence}.
\begin{lemma}\label{lemma:convergence_main}

The infinite-series gradient $\mathbb{E}_{\sim \mathbf{x_{k}, x_{k-1}}}[\nabla_{\mathbf{A}} \ln{\mathds{P}(\mathbf{x_{k} | x_{k-1}, A, Q_c}, \tau_{k})}]$, under the auto and cross-correlations \ref{eqn:autocor} \ref{eqn:crosscor} computed by Algorithm~\ref{alg:backsmooth}, is convergent.
\end{lemma}
While the series-representation is useful for proving convergence and deriving approximates updates, we also introduce a fully-continuous optimization using \ref{eqn:A1} and \ref{eqn:Acommute_main} as initial conditions, generalizing the update for the SDE dynamics $\mathbf{A}$ for any system, with any spectral radius. This is done by relying on the continuous Fréchet derivative of the matrix-exponential, an operator from matrices $\mathbf{V} \in \mathbb{R}^{n \times n}$ to the derivative of the matrix exponential of $\mathbf{X}$ in the direction of increment $\mathbf{V}$. As such, it is defined by $\mathbf{V} \to \int_{0}^{\tau=1} e^{\mathbf{X} (\tau - s)} \mathbf{V} e^{\mathbf{X} s} ds$, and has existing numerical implementations \cite{AlMohy2009} we rely on in our conjugate-Newton update for an exceptionally effective general update. From this, we see that an alternative form for the gradient, naturally indexed by the time-step $\tau_{k}$, is given by:
$$\mathbb{E}_{\sim \mathbf{x_{k}, x_{k-1}}}[\nabla_{\mathbf{A}} \ln{\mathds{P}(\mathbf{x}_{k} | \mathbf{x}_{k-1}, \mathbf{A}, \mathbf{Q_c}, \tau_{k})}] =$$
\begin{equation*}
\sum_{k=2}^{N} \tau_{k} \int_{0}^{1} e^{\mathbf{A}^{T}\tau_{k} (1 - s)} \bm{V}_{k} e^{\mathbf{A}^{T} \tau_{k} s} ds,
\end{equation*}
For $\bm{V}_{k} = \mathbf{Q}(\tau_{k})^{-1} (\mathbb{E}[\mathbf{x}_{k}\mathbf{x}_{k-1}^{T}] - e^{\mathbf{A} \tau_{k}} \mathbb{E}[\mathbf{x}_{k-1}\mathbf{x}_{k-1}^{T}] )$ as before. Conjugate-Newton is then used with the analytical form of the expected log-likelihood for a given $\mathbf{A}$, using the efficient form given in~\ref{eqn:loglikeA}.

\textbf{The differential covariance.} By analyzing the ELBO, we determine a set of simultaneous equations which must be satisfied by a solution for the covariance $\mathbf{Q_{c}}$, finding that the problem can be framed as a least-squares problem in the continuous-time case. Enforcing the symmetric constraint of the covariance explicitly in the least-squares, an update is also given for the differential covariance, related to its half-vectorization by $\mathbf{Q_{c}} = \left(
\vect{\mathbbm{1}_{n}}^{T} \otimes \mathbbm{1}_{n}
\right) \left( \mathbbm{1}_{n} \otimes \mathbf{D} \vech{\mathbf{Q}_{c}} \right)$. We formulate our optimization in the $\frac{n(n+1)}{2}$-dimensional half-vectorization space of $\mathbf{Q_{c}} \in \mathcal{S}_{n}^{+}$, and in \ref{proof:Q} prove the following result.
\begin{prop}

The unique solution for the half-vectorization of the differential covariance, $\vech{\mathbf{Q_c}}$, of least norm is given by the solution to the linear system
$$
\mathbf{\Tilde{F}^{T} \Tilde{F}} \vech{\mathbf{Q_c}} = \mathbf{\Tilde{F}^{T} \Tilde{Z}}
$$
For block-matrices $\mathbf{\Tilde{F}}$, $\mathbf{\Tilde{Z}}$ defined by:
$$
\mathbf{\Tilde{F}} = 
\begin{bmatrix}
\mathbf{A_{P}}^{-1}(e^{\mathbf{A_{P}} \tau_{k}} - \mathbbm{1})\\
\end{bmatrix}_{k=2}^{N}
$$
$$
\mathbf{\Tilde{Z}} = \begin{bmatrix}
\vech{\E[(\mathbf{x_{k}} - e^{\mathbf{A} \tau_{k}} \mathbf{x_{k-1}})(\mathbf{x_{k}} - e^{\mathbf{A} \tau_{k}} \mathbf{x_{k-1}})^{\mathbf{T}}]}\\
\end{bmatrix}_{k=2}^{N}$$
Equivalently, the closed-form estimator of $\vech{\mathbf{Q_{c}}}$ is given as:
\begin{small}
\[\vech{\mathbf{Q_{c}}} = \frac1{N - 1} \biggl( \sum_{k=2}^{N} \left( e^{\mathbf{A_{P}} \tau_{k}} - \mathbbm{1} \right)^{-1} \mathbf{A_{P}} \vech{\mathbb{E}\left[(\mathbf{x}_{k} - e^{\mathbf{A} \tau_{k}} \mathbf{x}_{k-1})(\mathbf{x}_{k} - e^{\mathbf{A} \tau_{k}} \mathbf{x}_{k-1})^{T} \right]}  \biggr) \]
\end{small}
\end{prop}

From this, inverting the half-vectorization yields a symmetric covariance matrix which serves as the update for $\mathbf{Q}_{c}$, accounting for the innate variability of time-step intervals. A set of updates for the other parameters are given in Appendix~\ref{sect:otherparams}, and are identical to the standard discrete-time case. Given the E-step moments computed using the smoothing procedure of Algorithm~\ref{alg:backsmooth}, one can run the continuous-discrete EM using Algorithm~\ref{alg:cap} in the appendix.

\begin{figure*}[t!]
\setkeys{Gin}{width=0.9\linewidth}
\begin{minipage}[t]{0.5\textwidth}
\includegraphics{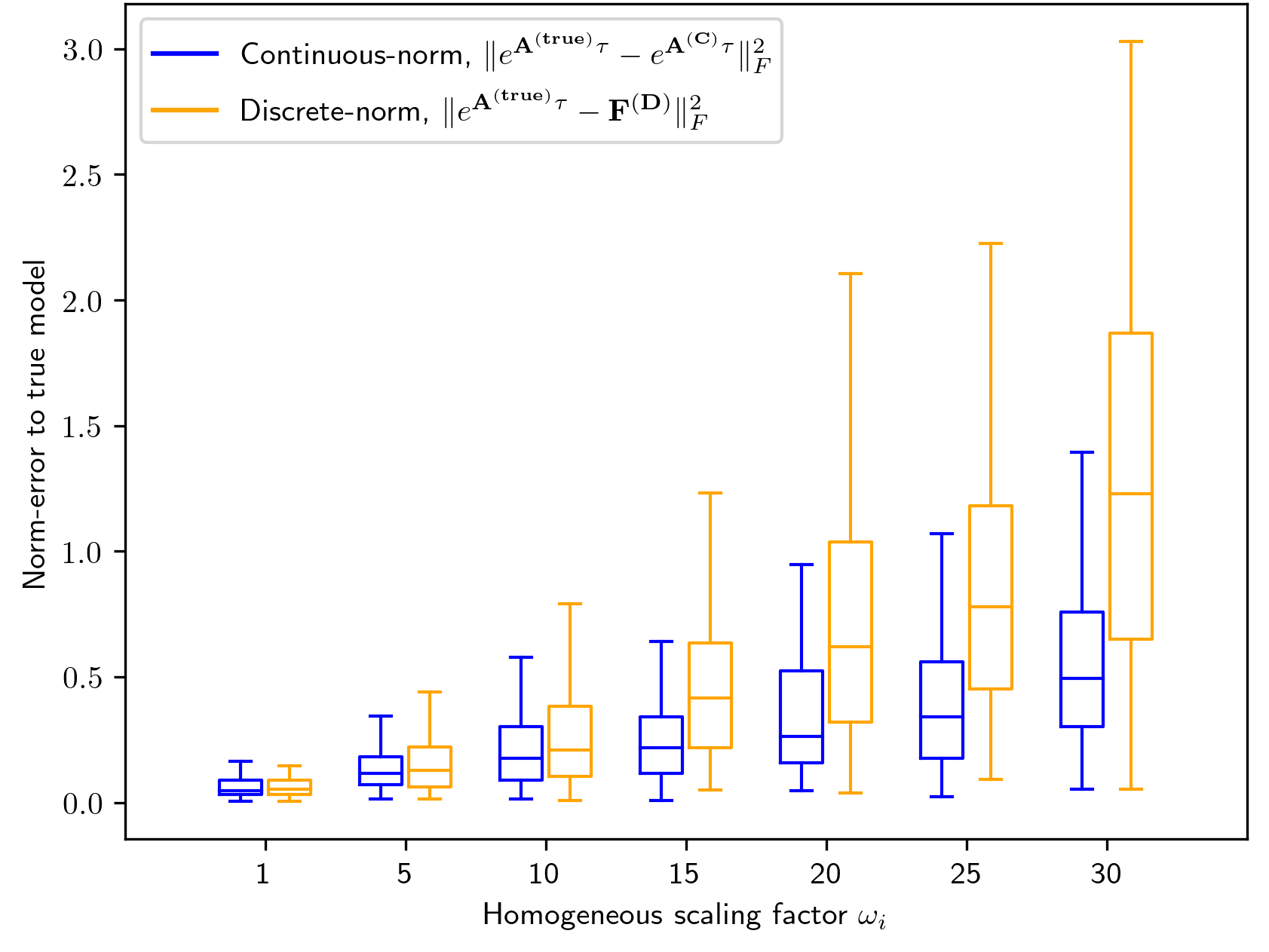} \label{fig:a}
\end{minipage}\hfill
\begin{minipage}[t]{0.5\textwidth}
\includegraphics{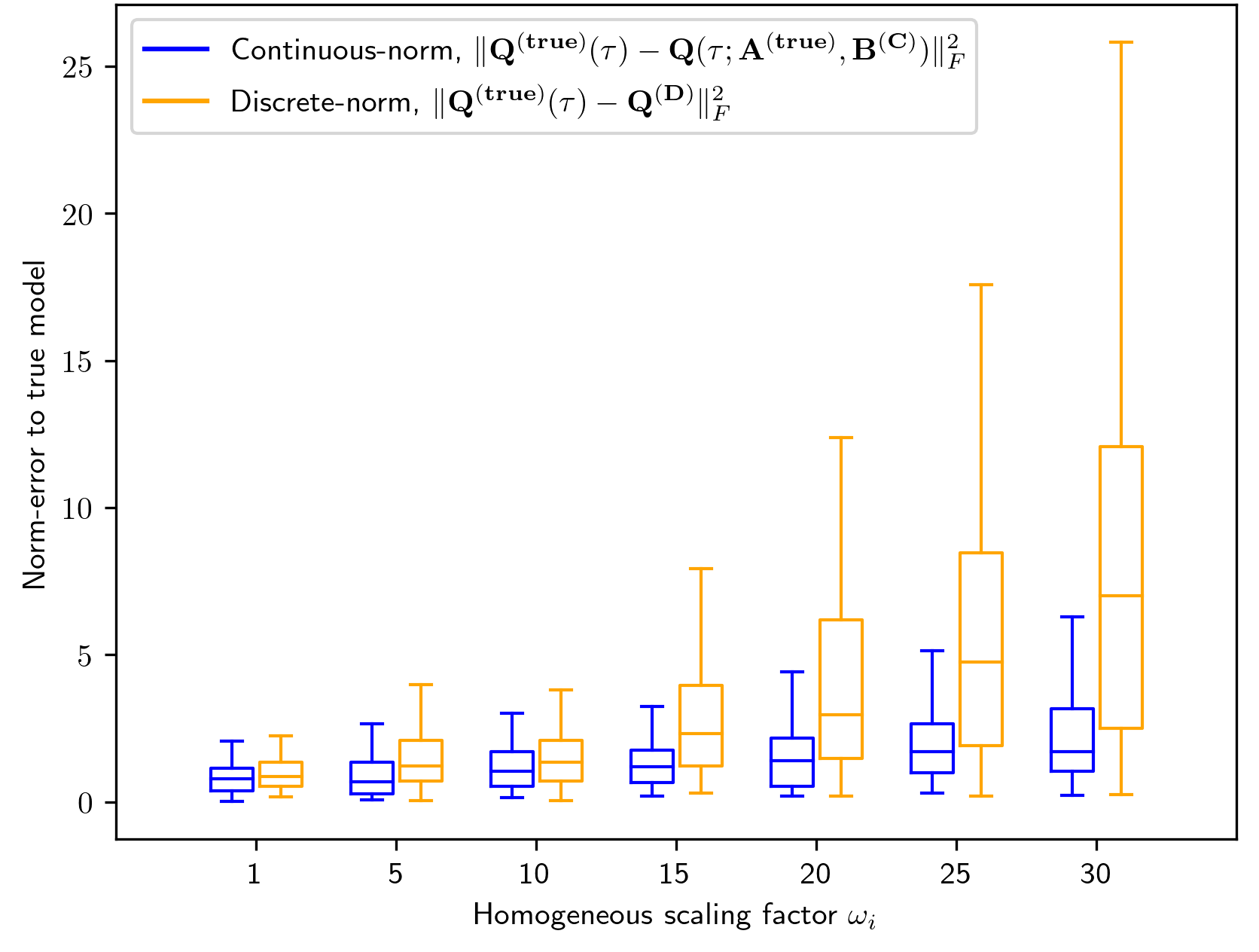} \label{fig:b}
\end{minipage}\hfill\caption{Comparison of model error for uniformly-random observation intervals, given by \ref{eqn:loss1} and \ref{eqn:loss2} in \textbf{a)} and \ref{eqn:loss3} and \ref{eqn:loss4} in \textbf{b)}. Discrete-time parameters were learned using \href{https://github.com/pykalman/pykalman}{pykalman} and the continuous-time parameters using Algorithm~\ref{alg:cap}. Both were capped at 100 EM iterations. Mean and error bars representing the inter-quartile (IQR) ranges and whiskers extending the box by 1.5 $\times$ IQR shown over 100 different simulations of the system. The spectral radius ranges from $\rho(\mathbf{A}) \approx 0.03$ to $\rho(30 \times \mathbf{A}) \approx 1$ (relatively stable).}\label{fig:example}
\end{figure*}

\section{Experimental Results}\label{sect:experiments}
Genetic circuits are a descriptive model of the dynamics of gene-regulation, with features such as multistability and oscillations that have been found, for example, in the bacteriophage $\lambda$-switch and the Cyanobacteria circadian oscillator \cite{Ishiura1998}. We simulate from the stochastic process of a genetic circuit, along with a transformation into a higher-dimensional space with substantial noise covariance in order to mimic observed, noisy recordings. We then apply our procedure to the simulated data to demonstrate \textit{learning} of the underlying dynamics. The promise of this method is to learn \textit{arbitrary} genetic circuit elements in an unsupervised way from noisy, high-dimensional datasets, paving the way for data-driven learning of latent dynamical systems controlling gene expression. Now, we provide some background on genetic circuits and the genetic-toggle switch, an established model for latent gene-regulation.

The toggle-switch circuit element is a bi-stable system which has been successfully been transfected into Escherichia coli (E. coli), with implications for programmable gene-therapy, genetic engineering, and biocomputing \cite{togswitch}. More recent work has demonstrated, using a feedback control loop in transfected E. coli, that the bistable toggle-switch can be maintained near its unstable equilibrium position for extended periods of time and can be made robust \cite{Lugagne2017}. We therefore consider an established synthetic model of a genetic toggle-switch, where we have two repressor proteins 1 and 2, at concentrations $r_{1}$ and $r_{2}$, which mutually repress each other. The model explicitly represents the mRNA concentration of each gene, denoted by $m_{1}$ and $m_{2}$. We rely on the mathematical/biophysical derivation, discussion, and parameters chosen in \cite{togswitch2} and \cite{Hausser2019}. From the central dogma of molecular biology, each protein is produced in a multi-step process (given here for protein 1):
\begin{enumerate}\itemsep0em 
    \item RNA-polymerase attaches to the promoter of gene $1$, with an affinity which depends on whether a repressor protein (like $r_{2}$) prevents binding.
    \item The RNA-polymerase transcribes the mRNA of protein $r_{1}$ at a rate $\alpha_{m}$ into the transcript $m_{1}$.
    \item $r_{1}$ is translated from $m_{1}$ at rate $\alpha_{p}$.
    \item The mRNA for gene 1 is degraded at a rate $\beta_{m}$, and the protein is degraded at a rate $\beta_{p}$.
\end{enumerate}

We assume gene 2 is is regulated in the same way, but with the roles of gene $1$ and $2$ switched. We define a probabilistic differential equation for both genes based on these transcription, translation, and degradation rates for an ensemble of mRNA molecules and proteins. A deterministic differential equation gives a continuous-form of the toggle switch. Non-linear functions $g_{R}(r_{1})$ and $g_{R}(r_{2})$ describe how $r_{1}$ affects the transcription for gene 2, and vice-versa. The promoter is allowed two operator sites for the repressors to bind independently, to repress transcription with a capacity $\omega$ which controls the fold-change of regulation, and with a dissociation constant $K_{R}$ between the repressor and promoter sequence. A biophysical argument yields the promoter activity function 
$$
g_{R}(r) = \frac{1 + \frac{q_{R}}{\omega}(2 + q_{R})}{(1 + q_{R})^{2}}
$$
with $q_{R} = \frac{r}{2 K_{R}}$. Expressed as a coupled system in terms of these rate-parameters, 
\begin{align}
    \frac{d m_{1}}{dt} &= \alpha_{m} g_{R}(r_{2}) - \beta_{m} m_1 \\
    \frac{d r_{1}}{dt} &= \alpha_{p} m_{1} - \beta_{p} r_{1} \\
    \frac{d m_{2}}{dt} &= \alpha_{m} g_{R}(r_{1}) - \beta_{m} m_2 \\
    \frac{d r_{2}}{dt} &= \alpha_{p} m_{2} - \beta_{p} r_{2}.
\end{align}
We can convert these deterministic rate equations into a form expressing discrete quantities, with a distribution evolving using a master equation. This can be extended to a stochastic system with fluctuations in the large number limit, with $\mathbf{\alpha}$ representing a vector of the macroscopic fluctuation of the molecular species, and $\Pi(\mathbf{\alpha}, t)$ representing a probability distribution for the fluctuations evolving according to the Fokker-Planck equation
$$
\frac{\partial \Pi}{ \partial t} = - \sum_{i,j} \mathbf{A}_{i,j} \frac{\partial}{\partial \alpha_{i}} (\alpha_{j} \Pi) + \sum_{i,j} \mathbf{B}_{i,j} \frac{\partial^{2}}{\partial\alpha_{i} \alpha_{j}} \Pi
$$
with $\mathbf{A}$ a matrix controlling the evolution of the fluctuation, $\mathbf{B}$ a matrix of diffusion coefficients representing local fluctuations of the molecular species. The expected fluctuation of $\alpha$, $\langle \alpha \rangle$ (with $\langle . \rangle$ denoting expectation), has the evolution
$$
\frac{d\langle \alpha \rangle}{dt} = \mathbf{A} \langle \alpha \rangle
$$
where the dynamics matrix can be shown to be
\begin{equation}\label{eqn:A}
\mathbf{A} =
\begin{bmatrix}
    -\beta_{p} & b \alpha_{m} g'_{R}(r_{2}) \\
    b \alpha_{m} g'_{R}(r_{1}) & - \beta_{p}
\end{bmatrix}
\end{equation}
for $b = \frac{\alpha_{p}}{\beta_{m}}$, and $g_{R}'(r) = \left(\frac{8(1-\omega)}{\omega}\right) \frac{K_{R}^{2}}{(r + 2 K_{R})^{3}}$. The covariance of $\alpha$, $\mathbf{C}(t)$ can be shown to evolve as
$$
\frac{d\mathbf{C}}{dt} = \mathbf{A C} + \mathbf{C A^{T}} + \mathbf{B}
$$
for a matrix $\mathbf{B} = \mathbf{S} \diag(\nu) \mathbf{S^{T}}$ of diffusion coefficients related to reaction propensities $\nu$ and a stoichiometry matrix $\mathbf{S}$. In the toggle-switch, we have
\begin{equation}
\label{eqn:B}
\mathbf{B} = \begin{bmatrix}
    b^{2} \alpha_{m} g_{R}(r_{2}) + \beta_{p} r_{1} & 0 \\
    0 & b^{2} \alpha_{m} g_{R}(r_{1}) + \beta_{p} r_{2}
\end{bmatrix}
\end{equation}
We also focus on the case that the transformation is identifiable and fix $\mathbf{H}$, $\mathbf{R}$, $\mathbf{P}_{0}$, and $\mathbf{\mu}_{0}$ to validate the novel component of learning the latent SDE, which is the dynamics matrix $\mathbf{A}$ and the diffusion matrix $\mathbf{B}$ (referred to earlier as $\mathbf{Q_{c}}$). We demonstrate our method by learning $\mathbf{A}$ and $\mathbf{B}$ using noisy-measurements of a higher-dimensional, transformed system. This is well-motivated because as transcription factors $r_1$ and $r_2$ might regulate a large number of different genes in a cell (a phenomenon referred to as \textit{trans}-regulation). In such a case, one might have a transformation in $\mathbb{R}^{m \times 2}$ with substantial measurement noise that relates a latent toggle-switch system or genetic circuit unit to a high-dimensional time-varying RNA-seq dataset. We validate this model using a number of realistic rate parameters, shown in \ref{tab:params}. We use the values of the parameters given in \cite{togswitch2}, which are realistic and agree with the existing literature on experimental values. The model depends on $\alpha_{p}$, $\beta_{m}$ through their ratio alone in the parameter $b$. We also use the equilibrium point of the system in \cite{togswitch2}, given by repressor levels at the point $\begin{bmatrix} r_{1} & r_{2}\end{bmatrix}^{\mathbf{T}} = \begin{bmatrix} 11/5 & 341/5 \end{bmatrix}^{\mathbf{T}}$. In this case the non-linear SDE specified by \ref{eqn:A} and \ref{eqn:B} is linearized at the point, analogous to a locally-linearized set of dynamics over a region.

We consider two scenarios of observation time interval irregularity: 1) uniform (which is light tailed, i.e. somewhat regular) and 2) sampled from a heavy-tailed distribution.

\begin{figure*}\label{fig:example2}
\setkeys{Gin}{width=0.9\linewidth}
\begin{minipage}{0.5\textwidth}
\includegraphics{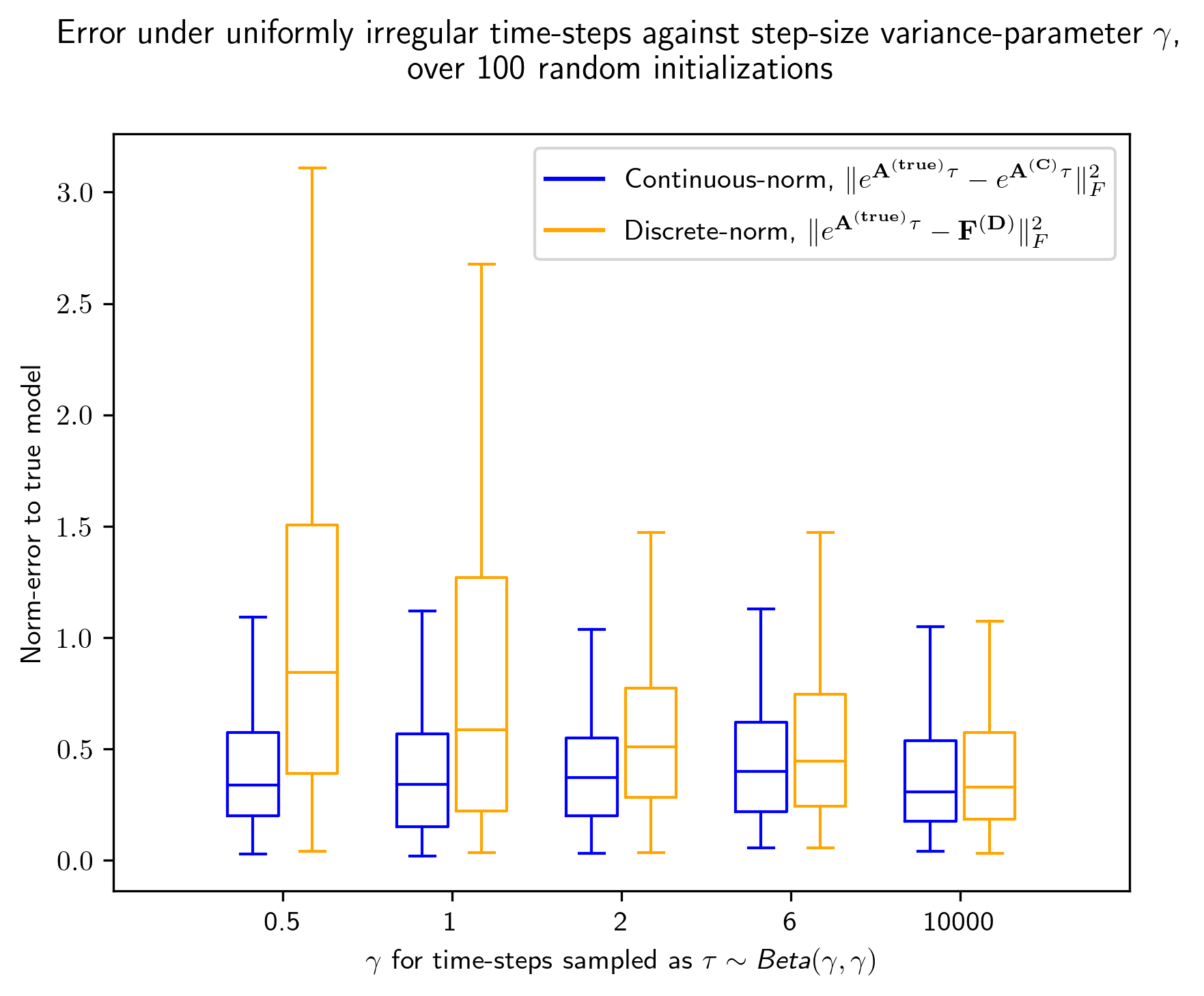} \label{fig:c}
\end{minipage}\hfill
\begin{minipage}{0.5\textwidth}
\includegraphics{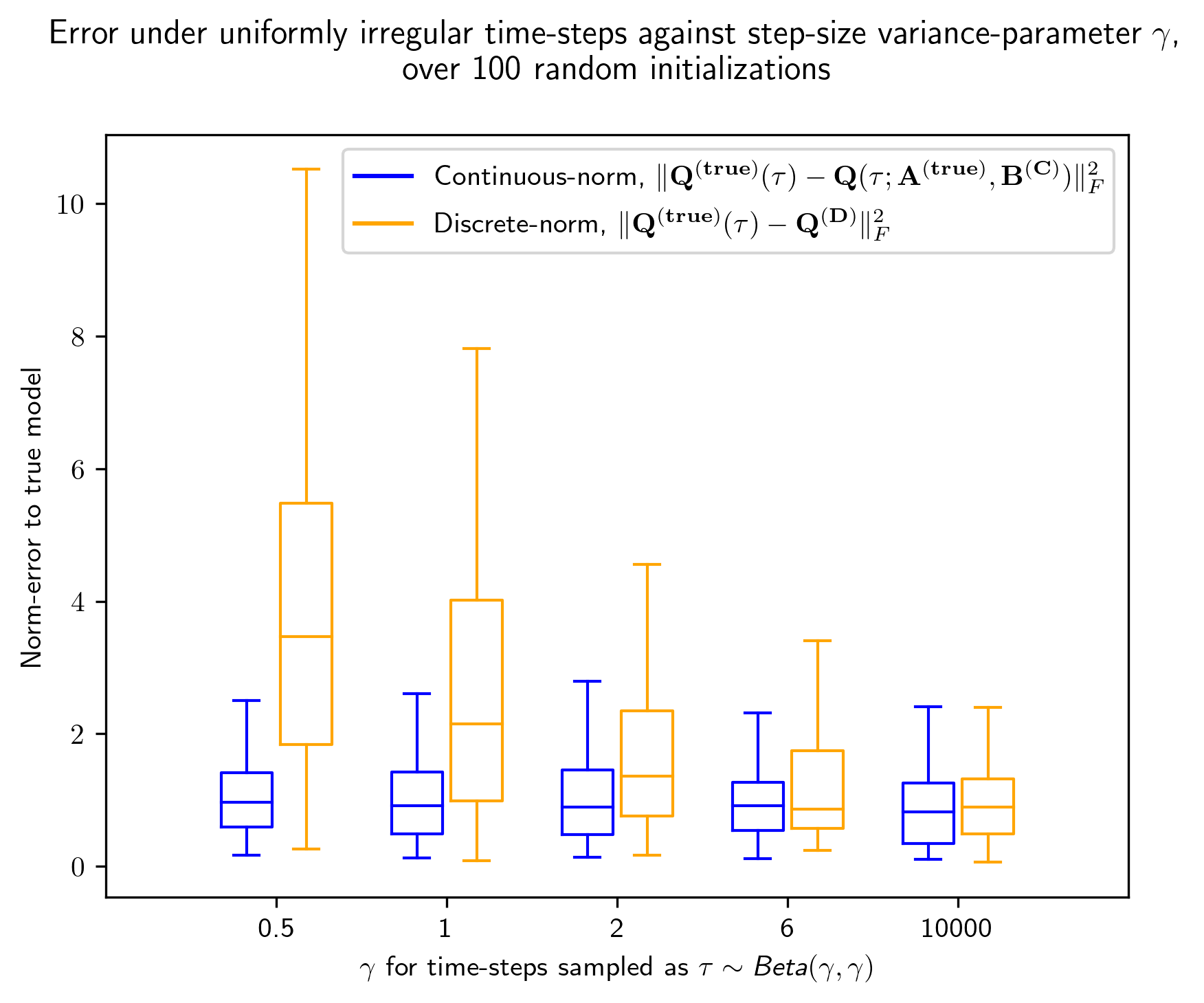} \label{fig:d}
\end{minipage}\hfill\caption{A comparison of the Frobenius-norm error for the example of Beta-distributed observation intervals with variance-controlling parameter $\gamma$. Loss given by \ref{eqn:loss1} and \ref{eqn:loss2} in \textbf{a)} and \ref{eqn:loss3} and \ref{eqn:loss4} in \textbf{b}. Parameters were learned with a maximimum of 100 EM iterations per simulation across 100 random datasets. The Beta distribution is scaled to be in the interval $[0,1/2]$ with 40 time-steps total so that the expected time is 10 minutes, and $\gamma$ ranges from $1/2$ to $10000$, $\mathbf{A}$ is homogeneously scaled so $\rho(\mathbf{A}) = 1$.}\label{fig:example2}
\end{figure*}

\textbf{Comparing the models under uniformly random intervals.} We consider the case where we have a fixed span of time over which the dynamics occur, $[0,T]$, and consider $N-1$ uniformly sampled points $u_{1}, \cdots, u_{N-1} \sim \mathcal{U}(0, T)$ which we sort and break $T$ into $N$ random intervals of lengths $\tau_{1},\tau_{2},...,\tau_{N}$. The discrete-time Kalman filter is learned using EM under the assumption that each observation is spaced with constant $\tau = \frac{T}{N}$, such that the uniformly random breakpoints are still distinct from the discrete-time assumption. It can be shown that a sample-complexity of $N > \frac{C}{a_\text{min}} \max\{\frac{\rho}{a_\text{min}}, 1\} p$ is required for learning a dense linear-SDE (non-latent) where $C$ is some constant, $p$ the dimension of the system, $0 < \rho \leq \lambda_\text{min}\left(-(\mathbf{A} + \mathbf{A^{*}})/2 \right)$, and $a_{\text{min}} \leq |\mathbf{A}_{ij}| p^{1/2}$ \cite{sdesample}. Thus, for some homogeneous scaling of the matrix $\mathbf{A}$ by a scalar factor $\omega$, the sample-complexity is inversely proportional to $\omega$. Owing to the presence of very small eigenvalues for $\mathbf{A}$, where the spectral radius $\rho(\mathbf{A}) \approx 0.034$ in the gene-circuit system and $\text{Re}{(\lambda_{i}(\mathbf{A}))} < 0$ for all eigenvalues $i$, it is a stable system.  To demonstrate the generality of our method to matrices with larger spectral radius, we scale $\mathbf{A}$ by different factors $\omega_{i}$ to demonstrate the difference in performance between the methods as the spectral radius increases. For each $\omega_{i}$, we maintain the ratio of the interval time to the number of samples $\frac{T}{N} = \frac{1}{2}$ so that the average interval is consistent across all examples, but matrices with larger spectral radius rely on fewer samples and those with smaller are trained on more. In particular, for $\omega_{i} \in \{ 1, 5, 10, 15, 20, 25, 30 \}$, we have total interval lengths $T_{i} \in \{ 100,70,60,50,40,30,20 \}$ seconds, and total samples $N_{i} \in \{200,140,120,100,80,60,40\}$. Given the mean-protein lifetime of 50 minutes, we fix the length of the intervals to be of analogous scale. Even for the smallest spectral radius $\rho(\mathbf{A}) \approx 0.034$ the continuous-time model slightly outperforms the discrete time model, and this improvement becomes far more substantial as $\rho(\mathbf{A})$ increases (Figure \ref{fig:example}).

\textbf{Varying the variance of the time-step distribution.} Here we sample observation intervals $\tau$ from Beta$(\tau|\gamma,\gamma)$. As $\gamma \to \infty$, the distribution approaches a delta function $\lim_{\gamma \to \infty} \text{Beta}(\tau|\gamma, \gamma) = \delta(\tau - \frac{1}{2})$, representing a constant step-size. For $\gamma = 1$, we have a uniform distribution over step-sizes, $\text{Beta}(1, 1) = \mathcal{U}(0,1)$. For $\gamma \to 0$, $\text{Beta}(\tau|\gamma, \gamma)$ approaches a Bernoulli distribution with two delta spikes at $\tau = 0$ and $\tau = 1$, corresponding to maximal coefficient of variation CV$=\sqrt{\text{var}[\tau]}/\E[\tau]$ (Figure \ref{fig:gammas}). We consider how varying $\gamma$ affects the capacity of the discrete and continuous-time model to learn the intrinsic dynamics of a system. For $N$ steps of the system we have $\E[T] = \E[\sum_{i=1}^{N} \tau_{i}] = \sum_{i=1}^{N} \E[\tau_{i}] = N \frac{\gamma}{2\gamma} = \frac{N}{2}$. As we vary $\gamma$  the total time that the system runs is the same in expectation as we choose only symmetric Beta-distributions. As the protein lifetime is $\sim50$ minutes, a reasonable $T$ to consider as a small interval of time-change for the system is 10 minutes, used in \ref{fig:example2}.

We compare the learning of the diffusion-coefficients $\mathbf{B}$, and the dynamics matrix $\mathbf{A}$ between our method and the discrete-time Kalman filter for $\gamma \in \{1/2, 1, 2, 6, 10000 \}$ (Figure \ref{fig:example2}). 
This example demonstrates that higher CV for the time intervals yields substantially higher model error in the discrete-time Kalman filter. In comparison, our continuous-time model keeps the error effectively constant for a fixed spectral radius $\rho(\mathbf{A})$ as the CV of the intervals changes. As the distribution approaches a deterministic delta-spike representing a discrete time-interval, the two converge to approximately the same error and variance.

\section{Discussion}\label{sect:discussion}
 In this work, we presented an algorithm for learning the continuous-time Kalman filter, a linear Fokker-Planck SDE. This enables the learning of latent, noisy continuous-time linear dynamical systems without any assumptions on observation interval regularity, and has superior performance relative to discrete-time LDS on continuous-time data. Recent works have addressed the learning of latent time-invariant gene regulation in the context of single-cell datasets \cite{Burdziak2023}, where noise and time-step irregularity in medical datasets presents a major challenge to time-series modeling \cite{Deshpande2022}. Future works might consider the generative noise and sparsity constraints of single-cell RNA-seq as in \cite{Burdziak2023}, in addition to continuous-time assumptions, as a means of learning LTI dynamics of real-world single-cell datasets. More generally, we hope that non-linear methods such as SLDS and SSMs which use discrete-time LDS blocks may extend this work to model sequential and time-series data in continuous-time.

\newpage

\bibliography{references}
\bibliographystyle{icml2024}

\newpage
\appendix
\onecolumn

\begin{figure}[h]
\centering
\includegraphics[scale=0.4]{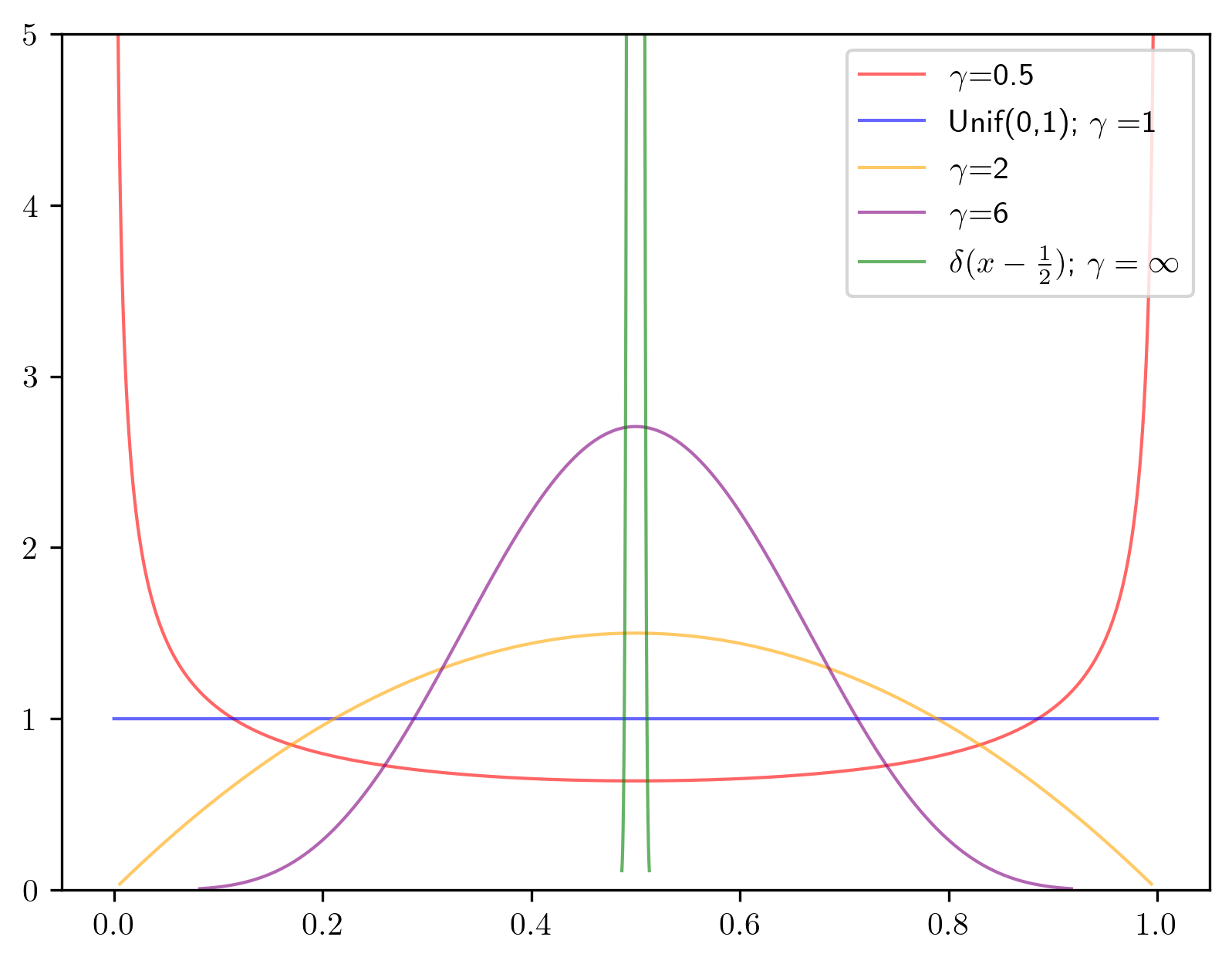}
\caption{Distributions for different values of the time-step variance-parameter $\gamma$, illustrating how varying $\gamma$ will alter the variance of the observation intervals $\tau \sim \textit{Beta}\left( \gamma, \gamma \right)$.}\label{fig:gammas}
\end{figure}

\section{Assessing Model Fit}

We compare the learning of the diffusion-coefficients $\mathbf{B}$ and the dynamics matrix $\mathbf{A}$ between our method and the discrete-time Kalman filter using the \href{https://github.com/pykalman/pykalman}{pykalman} package for an implementation of EM for the discrete-time Kalman filter. The discrete-time Kalman filter, as previously explained, is learned using EM under the assumption that each observation is spaced with constant $\tau = \frac{T}{N}$ for an interval of length $T$ with $N$ measurements. Therefore the divergence between the true dynamics and the discrete-time learned dynamics matrix $\mathbf{F^{(D)}}$ is
\begin{equation}\label{eqn:loss1}
\lVert e^{\mathbf{A^{(true)}} \tau} - \mathbf{F^{(D)}} \rVert_{F}^{2}
\end{equation}

Where the matrix-exponentiation gives the equivalent discrete-time solution for the dynamics, where the learning has been done continuously. The divergence between the true dynamics matrix $\mathbf{A^{(true)}}$ and the continuous-time learned dynamics matrix $\mathbf{A^{(C)}}$ is measured in an analogous manner as
\begin{equation}\label{eqn:loss2}
\lVert e^{\mathbf{A^{(true)}} \tau} - e^{\mathbf{A^{(C)} \tau}} \rVert_{F}^{2}
\end{equation}

The analogous loss for the true diffusion matrix $\mathbf{B^{(true)}}$ is also defined in terms of the integration of the continuous-time quantities, facilitating direct comparison to the discrete-time Kalman filter's learned parameters.  In order to compare the covariances in a unified way, we calculate $\mathbf{Q^{(true)}}(\tau) \triangleq \mathbf{Q}(\tau; \mathbf{A^{(true)}}, \mathbf{B^{(true)}})$ as a $\tau$-integrated true covariance, and compare to the fixed covariance of the discrete-time Kalman filter $\mathbf{Q^{(D)}}$ which assumes noise accumulated over the uniform step-size $\tau$ as
\begin{equation}\label{eqn:loss3}
\lVert \mathbf{Q^{(true)}}(\tau) - \mathbf{Q^{(D)}}\rVert_{F}^{2}
\end{equation}

As we are comparing learning of the covariances alone, the dynamics $\mathbf{A^{(true)}}$ is fixed as known in the continuous time case and $e^{\mathbf{A^{(true)}} \tau}$ is fixed as known in the discrete time case. We compare to the continuous-time learned differential covariance $\mathbf{B^{(C)}}$ by integrating it to $\tau$ to match that of the discrete-time Kalman filter, and compute
\begin{equation}\label{eqn:loss4}
\lVert \mathbf{Q^{(true)}}(\tau) -\mathbf{Q(\tau; \mathbf{A^{(true)}}, \mathbf{B^{(C)}})} \rVert_{F}^{2}
\end{equation}

As such, these errors all involve conversions of the continuous-time quantities to discrete-time ones to match those of the discrete-time Kalman filter, and evaluate how well a continuous-time formulation can learn the true underlying parameters when time-steps are irregular.

The biologically-realistic parameter values of \cite{togswitch2}, which are learned in our experiments, are given in the table below for reference:

\begin{center}\label{tab:params}
\begin{tabular}{||c | c | c | c ||} 
 \hline
 Step & Symbol & Description & Parameter Value \cite{togswitch2} \\ [0.5ex] 
 \hline\hline
 mRNA Transcription & $\alpha_{m}$ & Mean mRNA transcription rate & $0.2$ nM min\textsuperscript{-1} \\ 
 \hline
 mRNA Degradation & $\beta_{m}^{-1}$ & Rate of degradation & See $b$ \\
 \hline
 Protein Translation & $\alpha_{p}$ & Rate of translation & See $b$ \\
 \hline
 Protein Degradation & $\beta_{p}^{-1}$ & Mean protein lifetime & 50 min \\
 \hline
  Repressor-promoter dissociation constant & $K_{R}$ & Rate of repressor dissociation & 5 nM \\
 \hline
 Capacity & $\omega$ & Fold-change due to regulation & 200  \\
 \hline
 Promotor-parameter & $b$ & $b = \alpha_{p}/\beta_{m}$ & 10 nM \\
 \hline
\end{tabular}
\end{center}

\section{Algorithms~\ref{alg:backsmooth} and \ref{alg:cap}}

\begin{algorithm}
\caption{E-Step Backwards Smoothing}\label{alg:backsmooth}
\begin{algorithmic}
\Require Initialized $\mathbf{Q_{c}} \in \mathbb{R}^{n \times n}, \mathbf{Q_{c}} \succ 0$; $\mathbf{P_{0}} \in \mathbb{R}^{n \times n}, \mathbf{P_{0}} \succ 0$; $\mathbf{\mu_{0}} \in \mathbb{R}^{n}$; $\mathbf{A} \in \mathbb{R}^{n \times n}$; $\mathbf{H} \in \mathbb{R}^{m \times n}$
\State $k = N-1$
\State Initialize $\mathbf{P_{N}^{b/+}}$, $\mathbf{\mu_{N}^{b/+}}$
\While{$k \geq 1$}
    \State $W_{k+1} \gets \mathbf{P_{k+1}^{b/+} H^{T}} \left( \mathbf{H P_{k+1}^{b/+} H^{T}} + \mathbf{R} \right)^{-1}$
    \State $\mathbf{P_{k}^{b/+}} \gets e^{- \mathbf{A} \tau_{k+1}} \left( \mathbf{Q}(\tau_{k+1}) + \left( \mathbbm{1} - \mathbf{W_{k+1} H} \right) \mathbf{P_{k+1}^{b/+}} \right) e^{-\mathbf{A^{T}} \tau_{k+1}}$
    \State $\mathbf{\mu_{k}^{b/+}} \gets e^{-\mathbf{A} \tau_{k+1}} \mathbf{\mu_{k+1}^{b/+}} + e^{-\mathbf{A} \tau_{k+1}} \mathbf{W_{k+1}} (\mathbf{z_{k+1} - H \mu_{k+1}^{b/+}})$
    \If{$\mathbf{P_{k}^{f/+}}$, $\mathbf{\mu_{k}^{f/+}}$ Pre-computed}
    \State $\mathbf{P_{k}^{s}} \gets
((\mathbf{P_{k}^{b/+}})^{-1} + (\mathbf{P_{k}^{f/+}})^{-1})^{-1}$
    \State $\mathbf{\mu_{k}^{s}} \gets \mathbf{P_{k}^{s}} [(\mathbf{P_{k}^{f/+}})^{-1} \mathbf{\mu_{k}^{f/+}} + (\mathbf{P_{k}^{b/+}})^{-1} \mathbf{\mu_{k}^{b/+}}]$
    \State $\E_{\sim \mathbf{ x_{k}, x_{k-1}|z_{1:N} }}[\mathbf{x_{k} x_{k-1}^{T}}] \gets \mathbf{P_{k}^{s}} (\mathbf{P_{k}^{f/-}})^{-1} e^{\mathbf{A} \tau_{k}} \mathbf{P_{k-1}^{f/+}} + \mathbf{\mu_{k}^{s} (\mu_{k-1}^{s})^{T}}$
    \State $\E_{\sim \mathbf{x_{k}|z_{1:N}}}[\mathbf{x_{k}x_{k}^{T}}] \gets \mathbf{P_{k}^{s}} + \mathbf{\mu_{k}^{s} (\mu_{k}^{s})^{T}}$
    \State $k \gets k-1$
    \EndIf
\EndWhile
\end{algorithmic}
\end{algorithm}

\begin{algorithm}
\caption{Continuous-Discrete EM}\label{alg:cap}
\begin{algorithmic}
\Require Tolerance threshold $\varepsilon > 0$
\State Randomly Initialize or Fix: $\mathbf{Q_{c}} \in \mathbb{R}^{n \times n}, \mathbf{Q_{c}} \succ 0$; $\mathbf{P_{0}} \in \mathbb{R}^{n \times n}, \mathbf{P_{0}} \succ 0$; $\mathbf{\mu_{0}} \in \mathbb{R}^{n}$; $\mathbf{A} \in \mathbb{R}^{n \times n}$; $\mathbf{H} \in \mathbb{R}^{m \times n}$
\State $\log\mathcal{L}_{0} \gets \infty$, $i \gets 1$
\While{$|\log\mathcal{L}\left(\mathbf{z_{1:N}}| \Theta^{(i)} \right) - \log\mathcal{L}\left(\mathbf{z_{1:N}}| \Theta^{(i-1)} \right) | \geq \varepsilon$:}
    \State Compute $\mathbf{\mu_{k}^{s}}$, $\E_{\sim \mathbf{ x_{k}, x_{k-1}|z_{1:N} }}[\mathbf{x_{k} x_{k-1}^{T}}]$, $\E_{\sim \mathbf{x_{k}|z_{1:N}}}[\mathbf{x_{k} x_{k}^{T}}]$ from Smoother
    \State $\mathbf{\mu_{0}^{(i)}}$, $\mathbf{P_{0}^{(i)}}$ $\gets \E[\mathbf{x_{1}}]$, $\E[\mathbf{x_{1}x_{1}^{T}}] -\mathbf{\mu_{0} \mu_{0}^{T}}$
    \If{$[\mathbf{A}, \mathbf{Q}(\tau_{k})] = 0\text{ } \forall k$} \State /* Case 1: $\mathbf{A}$ and $\mathbf{Q}(\tau_{k})$ commute */
    \State $
\mathbf{A^{(i)}} \gets 
\left( 
\sum_{k=2}^{N}
\mathbf{Q}(\tau_{k})^{-1}\left(\E_{\sim x}[\mathbf{x_{k} x_{k-1}^{T}}] - \E_{\sim x_{k-1}}[\mathbf{x_{k-1} x_{k-1}^{T}}] \right) 
\right)
\left(
\sum_{k=2}^{N} \tau_{k} \mathbf{Q}(\tau_{k})^{-1} \E_{\sim x_{k-1}} [\mathbf{x_{k-1} x_{k-1}^{T}} ]\right)^{-1}
$
    \Else
    \State /* Case 2: General $\mathbf{A}$ and $\mathbf{Q}(\tau_{k})$ */
    \State $\mathbf{A^{(i)}} \gets \left( \sum_{k=2}^{N} \tau_{k}\Tr{\left[ \E[\mathbf{x_{k-1}x_{k-1}^{T}}] \right]} \left(
\E[\mathbf{x_{k} x_{k-1}^{T}}] - \E[\mathbf{x_{k-1} x_{k-1}^{T}}]
\right)
\right) \left( \sum_{k=2}^{N}
\tau_{k}^{2} \Tr{\left[\E[\mathbf{x_{k-1} x_{k-1}^{T}}]\right]} \E[\mathbf{x_{k-1} x_{k-1}^{T}}]
\right)^{-1}$
    \EndIf
    \If{Optimizing Gradient}
    \State /* Refining higher-order terms */
    \State $\mathbf{A^{(0)}} \gets \mathbf{A^{(i)}}$\Comment{Set either previous iterate of Newton or approximation above as initial condition}
    \State $\mathbf{A^{(i)}} \gets$ \textit{Newton-CG}($\mathbb{E}[\nabla_{\mathbf{A}} \ln{\mathds{P}(\mathbf{x_{k} | x_{k-1}, A, Q_c}, \tau_{k})}]$, $\mathbb{E}[\ln{\mathds{P}(\mathbf{x_{k} | x_{k-1}, A, Q_c}, \tau_{k})}]$, $\mathbf{A^{(0)}}$)
    \EndIf
    \State $\mathbf{\vech{\mathbf{Q_{c}}}^{(i)}}\gets\left( N - 1 \right)^{-1} \biggl( \sum_{k=2}^{N} \left( e^{\mathbf{A_{P}} \tau_{k}} - \mathbbm{1} \right)^{-1} \mathbf{A_{P}} \vech{\mathbb{E}_{\mathbf{x_{k},x_{k-1}|z_{1:N}}}\left[(\mathbf{x_{k}} - e^{\mathbf{A} \tau_{k}} \mathbf{x_{k-1}})(\mathbf{x_{k}} - e^{\mathbf{A} \tau_{k}} \mathbf{x_{k-1}})^{\mathbf{T}} \right]} \biggr)$
    \State $\mathbf{H^{(i)}}$, $\mathbf{R^{(i)}}$ $\gets \left( \sum_{k=1}^{N} \mathbf{z_{k}} \E[\mathbf{x_{k}}]^{\mathbf{T}} \right) \left( \sum_{k=1}^{N} \E[\mathbf{x_{k} x_{k}^{T}}] \right)^{-1}$, $N^{-1} \sum_{k=1}^{N} \E[\mathbf{(z_{k} - H x_{k})(z_{k} - H x_{k})^{T}}]$
    \State $i \gets i + 1$
\EndWhile
\end{algorithmic}
\end{algorithm}

\section{Solving the Lyapunov-type SDE}
\label{sec:SDEsolns}

Given the stochastic differential equation $\mathbf{\dot{x}}(t) = \mathbf{A x}(t) + \mathbf{w}(t)$, the homogeneous solution for the ODE is given as $\mathbf{x}(t) = e^{\mathbf{A}(t-t_{0})} \mathbf{x}(t_{0})$. In the general framework for the method of variation of parameters, the constant term above is reformulated as some function of t, $\mathbf{u}(t)$, which is to be determined such that it depends on both our homogeneous and particular solutions. We therefore assume
$$
\mathbf{x}(t) = e^{\mathbf{A}(t-t_{0})} \mathbf{u}(t) = e^{\mathbf{A}(t-t_{0})} (\mathbf{x}(t_{0}) + \mathbf{g}(t))
$$
$$
\frac{d}{dt} e^{\mathbf{A}(t-t_{0})} \mathbf{u}(t) = \mathbf{A} e^{\mathbf{A}(t-t_{0})} \mathbf{u}(t) + e^{\mathbf{A}(t-t_{0})} \frac{d \mathbf{u}(t)}{dt} = \mathbf{A x}(t) + e^{\mathbf{A}(t-t_{0})} \frac{d\mathbf{g}}{dt}
$$

From this, we can straightforwardly solve for $\mathbf{g}(t)$ using \ref{eqn:SDE1}, \ref{eqn:SDE2} from the expression
$$
\frac{d\mathbf{g}}{dt} = e^{-\mathbf{A}(t-t_{0})} \mathbf{w}(t)
$$

where we then integrate $\mathbf{g}(t)$ as
$$
\mathbf{g}(t) = \int_{t_{0}}^{t} e^{-\mathbf{A}(s - t_{0})} \mathbf{w}(s) ds = e^{\mathbf{A} t_{0}} \int_{t_{0}}^{t} e^{-\mathbf{A} s} \mathbf{w}(s) ds 
$$

Relying on this, we immediately arrive at the final form of the solution for the latent dynamics:
$$
\mathbf{x}(t) = e^{\mathbf{A}(t-t_{0})} (\mathbf{x} (t_{0}) + e^{\mathbf{A} t_{0}} \int_{t_{0}}^{t} e^{-\mathbf{A}s} \mathbf{w}(s) ds) = e^{\mathbf{A}(t-t_{0})} \mathbf{x}(t_{0}) + e^{\mathbf{A}t} \int_{t_{0}}^{t} e^{-\mathbf{A}s} \mathbf{w}(s) ds
$$

From this, if one discretizes the evolution between times $t_{k-1}$ to $t_{k}$, the solution for the state at time $t_{k}$ is given as:
$$
\mathbf{x_{k}} = e^{\mathbf{A} \tau_{k}}\mathbf{x_{k-1}} + \int_{t_{k-1}}^{t_{k}} e^{\mathbf{A}(t_{k}-s)} \mathbf{w}(s) ds
$$

Under the Gaussian white-noise assumption, the noise is assumed to obey a differential $d\mathbf{w}(s)$ dependent on time. The time-dependent equation becomes:
$$
\mathbf{x_{k}} = e^{\mathbf{A} \tau_{k}} \mathbf{x_{k-1}} + \int_{t_{k-1}}^{t_{k}} e^{\mathbf{A}(t_{k}-s)} d\mathbf{w}(s)
$$

We introduce the standard notation $f/-$ indicating an a priori estimate of the system state in the forward direction without measurement, and $f/+$ indicates the update given to the state with a measurement. Taking the expectation of this quantity and noting $d\mathbf{w}(s)$ is zero-mean and $e^{\mathbf{A}(t_{k}-s)}$ is a deterministic function of time $s$, one may invoke It\^{o}'s lemma to find:
$$
\mathbf{x_{k}^{f/-}} = \E_{\sim \mathbf{x_{k-1}^{f}}} \left[e^{\mathbf{A} \tau_{k}}\mathbf{x_{k-1}} + \int_{t_{k-1}}^{t_{k}} e^{\mathbf{A}(t_{k}-s)} d\mathbf{w}(s) \right] = e^{\mathbf{A} \tau_{k}} \mathbf{x_{k-1}^{f/+}}
$$

 The notation introduced also extends naturally to $b/-$, $b/+$ as the a priori and a posteriori states in the backwards-pass. By analogous reasoning in the backwards direction, we find:
$$
\mathbf{x_{k}^{b/-}} = e^{-\mathbf{A} \tau_{k+1}} \mathbf{x_{k+1}^{b/+}}
$$

We also seek an analytical solution to the forward and backward covariance dynamics of the SDE, following Axelsson et. al. \cite{cdkf_sols}. Given the differential equation:
$$
\mathbf{\dot{P}}(t) = \mathbf{A P}(t) + \mathbf{P}(t) \mathbf{A^{T}} + \mathbf{Q_c}
$$

The covariances $\mathbf{P}, \mathbf{Q_{c}} \in \mathcal{S}_{n}^{+}$ are defined by their $\frac{n(n+1)}{2}$ upper-triangular elements. Thus, we can convert this into a vector-differential equation by taking the half-vectorization. We introduce $\mathbf{D^{\dagger}} \in \mathbb{R}^{\frac{n(n+1)}{2} \times n}$ as the elimination-matrix which converts the vectorization of a matrix to its half-vectorization, and $\mathbf{D} \in \mathbb{R}^{n \times \frac{n(n+1)}{2}}$ as the unique duplication-matrix which transforms the half-vectorization of a matrix to its vectorization. The symbol $\otimes$ denotes the Kronecker product, $\vect: \mathbb{R}^{m\times n} \to \mathbb{R}^{mn}$ denotes the vectorization, or matrix column-stacking operation, and $\vech: \mathcal{S}_{n} \to \mathbb{R}^{\frac{n(n+1)}{2}}$ denotes the half-vectorization operation which stacks the upper-triangular elements of a symmetric matrix. Taking the half-vectorization of the Lyapunov-type ODE for the covariance matrix, we find:
$$
\vech{\mathbf{\dot{P}}} = \vech{\mathbf{AP}} + \vech{\mathbf{P A^{T}}} + \vech{\mathbf{Q}} = 
\mathbf{D^{\dagger}} \left(\vect{\mathbf{AP}} + \vect{\mathbf{P A^{T}}} \right) + \vech{\mathbf{Q_c}}
$$
Invoking the identity $\vect{\mathbf{AP}} = (\mathbbm{1} \otimes \mathbf{A}) \vect{\mathbf{P}}$ and $\vect{\mathbf{PA^{T}}} = (\mathbf{A} \otimes \mathbbm{1}) \vect{\mathbf{P}}$, we have that this equates to:
$$
\vech{\mathbf{\dot{P}}} = \mathbf{D^{\dagger}} \left( \left(\mathbbm{1} \otimes \mathbf{A} + \mathbf{A} \otimes \mathbbm{1} \right) \vect{\mathbf{P}} \right) + \vech{\mathbf{Q_c}}
$$
Using $\vect{\mathbf{P}} = \mathbf{D} \vech{\mathbf{P}}$, with $\mathbf{D}$ representing the duplication matrix:
$$
\vech{\mathbf{\dot{P}}} = \mathbf{D^{\dagger}} \left(\mathbbm{1} \otimes \mathbf{A} + \mathbf{A} \otimes \mathbbm{1} \right) \mathbf{D} \vech{\mathbf{P}} + \vech{\mathbf{Q_c}}
$$

Solving the above differential equation for the homogeneous solution, without the particular component for $\vech{\mathbf{Q_c}}$, will later allow us to determine the general solution via the method of variation of parameters.

First, as $\mathbf{D D^{\dagger}} = \mathbbm{1}$, one sees that \cite{matcalc}:
$$
e^{\mathbf{D^{\dagger}} (\mathbbm{1} \otimes \mathbf{A} + \mathbf{A} \otimes \mathbbm{1})t \mathbf{D}} = \mathbf{D^{\dagger}} e^{(\mathbbm{1} \otimes \mathbf{A} + \mathbf{A} \otimes \mathbbm{1})t} \mathbf{D} = \mathbf{D^{\dagger}} (e^{\mathbf{A}t} \otimes e^{\mathbf{A}t}) \mathbf{D}
$$

Following conventional notation, the matrix $\mathbf{D^{\dagger}} (\mathbbm{1} \otimes \mathbf{A} + \mathbf{A} \otimes \mathbbm{1}) \mathbf{D}$ will be referred to as $\mathbf{A_{P}}$.

In the differential equation, the solution for the homogeneous term is seen to be:
$$
\vech{\mathbf{P}(t)} = \mathbf{D^{\dagger}} \left(e^{\mathbf{A}(t-t_{0})} \otimes e^{\mathbf{A}(t-t_{0})} \right) \mathbf{D} \vech{\mathbf{P}(t_{0})} = \mathbf{D^{\dagger}} \left(e^{\mathbf{A}(t-t_{0})} \otimes e^{\mathbf{A}(t-t_{0})} \right) \vect{\mathbf{P}(t_{0})}$$
$$= \mathbf{D^{\dagger}} \vect{e^{\mathbf{A}(t-t_{0})} \mathbf{P}(t_{0}) e^{\mathbf{A^{T}} (t-t_{0})}} = \vech{e^{\mathbf{A}(t-t_{0})} \mathbf{P}(t_{0}) e^{\mathbf{A^{T}} (t-t_{0})}}
$$
Which implies the matrix form for the homogeneous solution is:
$$
\mathbf{P}(t) = e^{\mathbf{A}(t-t_{0})} \mathbf{P}(t_{0}) e^{\mathbf{A^{T}}(t - t_{0})}
$$

Arguing in the same vein, with $s = T - t$ in the reverse direction, the homogeneous solution has the form:
$$
\mathbf{P}(t) = e^{-\mathbf{A}(s-s_{0})}\mathbf{P}(s_{0})e^{-\mathbf{A^{T}}(s-s_{0})}
$$

Noting that this only reflects the effects of the transition matrix $\mathbf{A}$, and has yet to reflect the time-dependent contribution of accumulated noise as reflected in our $\mathbf{Q}(t)$ matrix, we solve for the general solution by variation of parameters. As before, this involves the assumption that $\mathbf{U}(t) = \mathbf{P}(t_{0}) + \mathbf{V}(t)$ is some time-dependent matrix function which depends on both the homogeneous and particular solutions to our Lyapunov differential equation.
$$
\mathbf{P}(t) = e^{\mathbf{A}(t-t_{0})} \mathbf{U}(t) e^{\mathbf{A^{T}}(t - t_{0})} = e^{\mathbf{A}(t-t_{0})} (\mathbf{P}(t_{0}) + \mathbf{V}(t)) e^{\mathbf{A^{T}} (t-t_{0})}$$

Substituting $\mathbf{Q}(t) = e^{\mathbf{A}(t-t_{0})} \mathbf{V}(t) e^{\mathbf{A^{T}} (t-t_{0})}$ as our solution to the accumulated motion noise, we have the following expression for the growth in our covariance
$$\mathbf{P}(t) = e^{\mathbf{A}(t-t_{0})} \mathbf{P}(t_{0}) e^{\mathbf{A^{T}} (t-t_{0})} + \mathbf{Q}(t)
$$
where, differentiating the form given by variation of parameters, we find the differential expression:
$$
\mathbf{\dot{P}} = \mathbf{AP} + \mathbf{PA^{T}} + e^{\mathbf{A}(t-t_{0})} \mathbf{\dot{V}} e^{\mathbf{A^{T}}(t - t_{0})}
$$
From this, we see that we must solve the following equation for $\mathbf{V}(t)$ to find the time-dependent noise term for the general a priori error covariance $\mathbf{P}(t)$:
$$
\mathbf{Q_c} = e^{\mathbf{A}(t-t_{0})} \mathbf{\dot{V}} e^{\mathbf{A^{T}}(t - t_{0})}
$$
As the matrix exponential is always invertible, we find from the above expression that
$$
\mathbf{\dot{V}} = e^{-\mathbf{A}(t-t_{0})} \mathbf{Q_c} e^{-\mathbf{A^{T}}(t - t_{0})}
$$
and, integrating the expression in time, find a solution for the anonymous function $\mathbf{V}(t)$ as:
$$
\mathbf{V}(t) = \int_{t_{0}}^{t} e^{-\mathbf{A}(s-t_{0})} \mathbf{Q_c} e^{-\mathbf{A^{T}}(s-t_{0})} ds
$$
We can provide a derivation for the form of the time-dependent covariance directly, noting the recent definition of $\mathbf{Q}$:
$$
\mathbf{Q}(t) = e^{\mathbf{A}(t-t_{0})} \mathbf{V}(t) e^{\mathbf{A^{T}} (t-t_{0})} = e^{\mathbf{A}(t-t_{0})}\left(\int_{t_{0}}^{t} e^{-\mathbf{A}(s-t_{0})} \mathbf{Q_c} e^{-\mathbf{A^{T}}(s-t_{0})} ds\right) e^{\mathbf{A^{T}} (t-t_{0})}$$
$$=
\int_{t_{0}}^{t} e^{\mathbf{A}(t-s)} \mathbf{Q_c} e^{\mathbf{A^{T}}(t-s)} ds
$$
To render this computationally simpler, rather than integrating the above equation directly, the original differential equation for $\dot{\mathbf{V}}$ can alternatively be re-expressed using vectorization identities to cast it into a vector differential equation amenable to a direct solution. This parallels the previous argumentation by the which the variation of parameters solution was derived for the state estimate. As such, we consider the half-vectorization of the ODE for $\dot{\mathbf{V}}$ as
$$
\vech{\mathbf{\dot{V}}} = \vech{e^{-\mathbf{A}(t-t_{0})}\mathbf{Q_c}e^{-\mathbf{A^{T}}(t-t_{0})}} = \mathbf{D^{\dagger}} \vect{e^{-\mathbf{A}(t-t_{0})} \mathbf{Q_c} e^{-\mathbf{A^{T}} (t - t_{0})}}
$$
$$
\vech{\mathbf{\dot{V}}} = e^{-\mathbf{D^{\dagger}} (\mathbbm{1} \otimes \mathbf{A} + \mathbf{A} \otimes \mathbbm{1} ) \mathbf{D} (t-t_{0})} \vech{\mathbf{Q_c}} = e^{-\mathbf{A_{P}} (t-t_{0})} \vech{\mathbf{Q_c}}
$$
from which, we may integrate the system as follows:
$$
\vech{\mathbf{V}(t)} = e^{\mathbf{A_{P}} t_{0}} \int_{t_{0}}^{t} e^{-\mathbf{A_{P}} s} ds \vech{\mathbf{Q_c}} = e^{\mathbf{A_{P}} t_{0}} \int_{t_{0}}^{t} \frac{d}{ds} \left( -\mathbf{A_{P}}^{-1} e^{-\mathbf{A_{P}} s} \right) ds \vech{\mathbf{Q_c}}
$$
The solution follows immediately from the above expression, giving us a half-vectorized time-dependent solution as:
\begin{equation}
\label{eqn:Vt}
\vech{\mathbf{V}(t)} = \mathbf{A_{P}}^{-1} \left(\mathbbm{1} - e^{-\mathbf{A_{P}}(t - t_{0})} \right) \vech{\mathbf{Q_c}}
\end{equation}
As we are relying on variation of parameters, we can use this to directly find the the general solution to the time-dependent noise covariance. Thus, we can straightforwardly use the solution for $\mathbf{V}(t)$ as
$$
\vech{\mathbf{Q}(t)} = \vech{e^{\mathbf{A}(t-t_{0})} \mathbf{V}(t) e^{\mathbf{A^{T}} (t - t_{0})}} = \mathbf{D^{\dagger}} \vect{e^{\mathbf{A}(t-t_{0})} \mathbf{V}(t) e^{\mathbf{A^{T}} (t - t_{0})}}
$$
where, after the application of more vectorization identities, we find:
$$
\vech{\mathbf{Q}(t)} = e^{\mathbf{D^{\dagger}} (\mathbbm{1} \otimes \mathbf{A} + \mathbf{A} \otimes \mathbbm{1}) \mathbf{D} (t-t_{0})} \vech{\mathbf{V}} = e^{\mathbf{A_{P}} (t-t_{0})} \vech{\mathbf{V}}
$$
$$
\vech{\mathbf{Q}(t)} = e^{\mathbf{A_{P}} (t-t_{0})} \mathbf{A_{P}}^{-1} \left(\mathbbm{1} - e^{-\mathbf{A_{P}}(t - t_{0})} \right) \vech{\mathbf{Q_c}}
$$
After distributing terms using the commutativity of the matrix exponential, we find the final form of the solution for our covariance-matrix function $\mathbf{Q}(t)$:
$$
\vech{\mathbf{Q}(t)} = \mathbf{A_{P}}^{-1} \left(e^{\mathbf{A_{P}}(t-t_{0})} - \mathbbm{1} \right) \vech{\mathbf{Q_c}}
$$
The reasoning in the backwards direction for the differential equation is no different. To be explicit, the solution is:
\begin{equation}
\label{eqn:Qt}
\vech{\mathbf{Q}(s)} = \mathbf{A_{P}}^{-1} \left(\mathbbm{1} - e^{-\mathbf{A_{P}}(s-s_{0})} \right) \vech{\mathbf{Q_c}}
\end{equation}
Also being explicit about the exact form the time-dependent covariance function $\mathbf{Q}(t)$ takes, one may use the well-defined inverse of the vectorization operation to give a matrix-form for the solution as:
$$
\mathbf{Q}(t) = \vect^{-1}{\biggl( \mathbf{D} \vech{\mathbf{Q}(t)} \biggr)} = \left(
\vect{\mathbbm{1}_{n}}^{T} \otimes \mathbbm{1}_{n}
\right) \left( \mathbbm{1}_{n} \otimes \mathbf{D} \vech{\mathbf{Q}(t)} \right)
$$
Between time points $t_{k-1}$ and $t_{k}$, we must re-integrate the differential equation between each set of measurements to reflect the a posteriori amendment to the error covariance. The discretized evolution in terms of a sample time difference $\tau_{k}$ is therefore given as:
$$
\vech{\mathbf{Q}(t_{k})} = \mathbf{A_{P}}^{-1} (e^{\mathbf{A_{P}} \tau_{k}} - \mathbbm{1}) \vech{\mathbf{Q_c}}
$$
With an analogous form given for the backwards direction, one uses both sets of covariance updates to specify a time-dependent uncertainty, e.g. whose eigenvalues $\lambda_{i}\left[\mathbf{Q}(t)\right]$ might be an increasing function of time and represent a covariance whose isocontours are expanding in space about the mean state $e^{\mathbf{A}\tau_{k}} \mathbf{x_{k-1}^{f/+}}$. This might express an increasing quantity of uncertainty as the system evolves and before a measurement occurs, and is useful in generalizing the constant covariance matrix $\mathbf{Q}$ found in the discrete-time Kalman filter to continuous-time systems with time-dependent uncertainty.

\section{Two-filter evaluation of the backwards posterior for the Continuous-Discrete Smoother}

Unlike in the standard Kalman filter, the continuous-discrete variant relies on a dynamics matrix $\mathbf{A}$ whose solution in reverse-time, given by the inverse matrix exponential $e^{- \mathbf{A} t} x_{0} = \left(e^{ \mathbf{A} t}\right)^{-1} x_{0}$, can be evaluated for all times $t$ as $|\det  e^{ \mathbf{A} t} | = e^{\Tr[\mathbf{At}]} >0$. Given this assumption, one can explicitly compute an analytical backwards-pass to evaluate the likelihood on $\mathbf{z_{k+1:K}}$ given $\mathbf{x_{k}}$ in a two-filter formulation which differs from the standard Rauch-Tung-Striebel formulation in that one may compute a backwards-pass independently from the result of the forwards pass. We offer a Bayesian derivation which shows this dual formulation is identical to the standard RTS formulation for the continuous-discrete filter for an appropriate initialization.
\begin{equation}
\label{eqn:twofilterlike}
\mathds{P}(\mathbf{x_{k}} | \mathbf{z_{1:N}}) = \frac{\mathds{P}(\mathbf{x_{k}, z_{1:N}})}{\mathds{P}(\mathbf{z_{1:N}})} = \frac{\mathds{P}(\mathbf{x_{k}, z_{1:k}})\mathds{P}(\mathbf{z_{k+1:K}}|\mathbf{x_{k}})}{ 
\mathds{P}(\mathbf{z_{1:K}})} = \mathds{P}(\mathbf{x_{k}}| \mathbf{z_{1:k}}) \frac{\mathds{P}(\mathbf{z_{k+1:K}}|\mathbf{x_{k}})}{\mathds{P}(\mathbf{z_{k+1:K}})} \end{equation}
As we had already determined $\mathds{P}(\mathbf{x_{k}}|\mathbf{z_{1:k}})$, we isolate the other term in the expression.
$$
\frac{\mathds{P}(\mathbf{z_{k+1:K}}|\mathbf{x_{k}})}{\mathds{P}(\mathbf{z_{k+1:K}}|\mathbf{z_{1:k}})} = \mathds{P}(\mathbf{z_{k+1:K}})^{-1} \int_{\mathbf{x_{k+1}}} \mathds{P}(\mathbf{z_{k+1:K}, x_{k+1}}|\mathbf{x_{k}}) d\mathbf{x_{k+1}}$$
$$\frac{\mathds{P}(\mathbf{z_{k+1:K}}|\mathbf{x_{k}})}{\mathds{P}(\mathbf{z_{k+1:K}} | \mathbf{z_{1:k}})} \mathds{P}(\mathbf{z_{k+1}}|\mathbf{z_{1:k}}) = \int_{\mathbf{x_{k+1}}} \mathds{P}(\mathbf{x_{k+1}} | \mathbf{x_{k}}) \mathds{P}(\mathbf{z_{k+1}} | \mathbf{x_{k+1}}) \frac{\mathds{P}(\mathbf{z_{k+2:K}} | \mathbf{x_{k+1}})}{\mathds{P}(\mathbf{z_{k+2:K}}|\mathbf{z_{1:k+1}})} d\mathbf{x_{k+1}}
$$
We see this involves a recursive determination for a likelihood on future measurements $\mathbf{z_{k+1:K}}$ given $\mathbf{x_{k}}$, normalized by a marginal density over $\mathbf{z_{k+1:K}|z_{1:k}}$.

Defining the parameters of this recursive term as
$$
\frac{\mathds{P}(\mathbf{z_{k+1:K}|x_{k}})}{\mathds{P}(\mathbf{z_{k+1:K}|z_{1:k}})} = \mathcal{N}( \mathbf{x_{k}} | \mathbf{x_{k}^{b/+}}, \mathbf{\mathbf{P_{k}^{b/+}}})
$$
and substituting the terms we determined previously, we find:
$$\mathcal{N}(\mathbf{x_{k}} | \mathbf{\mu_{k}^{b/+}}, \mathbf{P_{k}^{b/+}}) \mathcal{N}(\mathbf{z_{k+1}}) =$$
$$\int_{\mathbf{x_{k+1}}} \mathcal{N}(\mathbf{x_{k+1}} | e^{\mathbf{A} \tau_{k+1}} \mathbf{x_{k}}, \mathbf{Q}(\tau_{k+1})) \mathcal{N}(\mathbf{z_{k+1} | H x_{k+1}, R}) N(\mathbf{x_{k+1} | \mu_{k+1}^{b/+}, \mathbf{P_{k+1}^{b/+}}}) d\mathbf{x_{k+1}}
$$
Using this recursive definition, we seek to find a posterior, represented by $N(\mathbf{x_{k+1} | \mu_{k+1}^{b/+}, \mathbf{P_{k+1}^{b/+}}})$, which uses purely information from future states in order to derive a parallel $\alpha, \beta$ form for the Kalman filter.

\begin{prop}
    The backward-direction likelihood, $N(\mathbf{x_{k+1} | \mu_{k+1}^{b/+}, \mathbf{P_{k+1}^{b/+}}})$ which can be used for smoothing independent of the forward estimates by \ref{eqn:twofilterlike}, is defined by the recursive relation:
    \begin{multline}
    \label{eqn:recurrence}
    \mathcal{N}(\mathbf{x_{k}} | \mathbf{\mu_{k}^{b/+}}, \mathbf{P_{k}^{b/+}}) \mathcal{N}(\mathbf{z_{k+1}}) = \\
    \int_{\mathbf{x_{k+1}}} \mathcal{N}(\mathbf{x_{k+1}} | e^{\mathbf{A} \tau_{k+1}} \mathbf{x_{k}}, \mathbf{Q}(\tau_{k+1})) \mathcal{N}(\mathbf{z_{k+1} | H x_{k+1}, R}) N(\mathbf{x_{k+1} | \mu_{k+1}^{b/+}, \mathbf{P_{k+1}^{b/+}}}) d\mathbf{x_{k+1}}
    \end{multline} 
    The distribution of this likelihood, which depends on a future mean state $\mathbf{\mu_{k+1}^{b/+}}$ and future covariance $\mathbf{P_{k+1}^{b/+}}$, has the following first-moment:
    \begin{equation}
    \E_{\sim \mathbf{x_{k}} | \mathbf{z_{k+1:N}}}[\mathbf{x_{k}}] \triangleq \mathbf{\mu_{k}^{b/+}} = e^{-\mathbf{A} \tau_{k+1}} \mathbf{\mu_{k+1}^{b/+}} + e^{-\mathbf{A} \tau_{k+1}} \mathbf{W_{k+1}} (\mathbf{z_{k+1} - H \mu_{k+1}^{b/+}})
\end{equation}
    Additionally, it is distributed with the covariance:
    \begin{equation}\label{eqn:backcovar}
    \E_{\sim \mathbf{x_{k}} | \mathbf{z_{k+1:N}}}\biggl[\left(\mathbf{x_{k}} - \mathbf{\mu_{k}^{b/+}}\right)\left(\mathbf{x_{k}} - \mathbf{\mu_{k}^{b/+}}\right)^{\mathbf{T}}\biggr] \triangleq
\mathbf{P_{k}^{b/+}} = e^{- \mathbf{A} \tau_{k+1}} \left( \mathbf{Q}(\tau_{k+1}) + \left( \mathbbm{1} - \mathbf{W_{k+1} H} \right) \mathbf{P_{k+1}^{b/+}} \right) e^{-\mathbf{A^{T}} \tau_{k+1}}\end{equation}
$$= \mathbf{V}(\tau_{k+1}) + e^{-\mathbf{A} \tau_{k+1}} \left( \mathbbm{1} - \mathbf{W_{k+1} H} \right) \mathbf{P_{k+1}^{b/+}} e^{-\mathbf{A^{T}} \tau_{k+1}}
$$

For $\mathbf{V}(t)$, $\mathbf{Q}(t)$ as defined in \ref{eqn:Vt} and \ref{eqn:Qt} representing time-dependent functions for evaluating the covariance-matrix, and $\mathbf{W_{k+1}} = \mathbf{P_{k+1}^{b/+} H^{T}} \left( \mathbf{H P_{k+1}^{b/+} H^{T}} + \mathbf{R} \right)^{-1}$ representing a backwards-direction gain-matrix.
\end{prop}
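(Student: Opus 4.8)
The plan is to collapse the Gaussian integral in the recursion \ref{eqn:recurrence} using three elementary linear--Gaussian identities, and then to read off the two moments directly. These are: \textbf{(i)} the measurement update $\mathcal{N}(\mathbf{z}\mid\mathbf{Hx},\mathbf{R})\,\mathcal{N}(\mathbf{x}\mid\mathbf{m},\mathbf{P}) = \mathcal{N}(\mathbf{z}\mid\mathbf{Hm},\,\mathbf{HPH^{T}}+\mathbf{R})\,\mathcal{N}(\mathbf{x}\mid\mathbf{m}+\mathbf{K}(\mathbf{z}-\mathbf{Hm}),\,(\mathbb{1}-\mathbf{KH})\mathbf{P})$ with $\mathbf{K}=\mathbf{PH^{T}}(\mathbf{HPH^{T}}+\mathbf{R})^{-1}$; \textbf{(ii)} the marginalization identity $\int\mathcal{N}(\mathbf{a}\mid\mathbf{b},\mathbf{P})\,\mathcal{N}(\mathbf{a}\mid\mathbf{c},\mathbf{S})\,d\mathbf{a}=\mathcal{N}(\mathbf{b}\mid\mathbf{c},\mathbf{P}+\mathbf{S})$; and \textbf{(iii)} the linear reparametrization $\mathcal{N}(\mathbf{Mx}\mid\mathbf{\mu},\mathbf{\Sigma})=\lvert\det\mathbf{M}\rvert^{-1}\,\mathcal{N}(\mathbf{x}\mid\mathbf{M}^{-1}\mathbf{\mu},\,\mathbf{M}^{-1}\mathbf{\Sigma}(\mathbf{M}^{\mathbf{T}})^{-1})$, valid for any invertible $\mathbf{M}$. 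Here $\mathbf{M}=e^{\mathbf{A}\tau_{k+1}}$ is invertible for every $\tau_{k+1}$, with $\mathbf{M}^{-1}=e^{-\mathbf{A}\tau_{k+1}}$ and $\lvert\det\mathbf{M}\rvert^{-1}=e^{-\tau_{k+1}\Tr\mathbf{A}}\neq0$, as already noted.

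Carrying this out, I would first apply \textbf{(i)} to the sub-product $\mathcal{N}(\mathbf{z_{k+1}}\mid\mathbf{Hx_{k+1}},\mathbf{R})\,\mathcal{N}(\mathbf{x_{k+1}}\mid\mathbf{\mu_{k+1}^{b/+}},\mathbf{P_{k+1}^{b/+}})$ inside the integrand of \ref{eqn:recurrence}; it factors as $\mathcal{N}(\mathbf{x_{k+1}}\mid\mathbf{m},\mathbf{S})\,\mathcal{N}(\mathbf{z_{k+1}}\mid\mathbf{H\mu_{k+1}^{b/+}},\,\mathbf{HP_{k+1}^{b/+}H^{T}}+\mathbf{R})$ with $\mathbf{m}=\mathbf{\mu_{k+1}^{b/+}}+\mathbf{W_{k+1}}(\mathbf{z_{k+1}}-\mathbf{H\mu_{k+1}^{b/+}})$ and $\mathbf{S}=(\mathbb{1}-\mathbf{W_{k+1}H})\mathbf{P_{k+1}^{b/+}}$, where $\mathbf{W_{k+1}}=\mathbf{P_{k+1}^{b/+}H^{T}}(\mathbf{HP_{k+1}^{b/+}H^{T}}+\mathbf{R})^{-1}$ is precisely the backward gain in the statement. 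The $\mathbf{z_{k+1}}$-marginal does not involve $\mathbf{x_{k+1}}$ and leaves the integral. The surviving integral $\int\mathcal{N}(\mathbf{x_{k+1}}\mid e^{\mathbf{A}\tau_{k+1}}\mathbf{x_{k}},\mathbf{Q}(\tau_{k+1}))\,\mathcal{N}(\mathbf{x_{k+1}}\mid\mathbf{m},\mathbf{S})\,d\mathbf{x_{k+1}}$ collapses by \textbf{(ii)} to $\mathcal{N}(e^{\mathbf{A}\tau_{k+1}}\mathbf{x_{k}}\mid\mathbf{m},\,\mathbf{Q}(\tau_{k+1})+\mathbf{S})$, and \textbf{(iii)} with $\mathbf{M}=e^{\mathbf{A}\tau_{k+1}}$ rewrites this as $e^{-\tau_{k+1}\Tr\mathbf{A}}\,\mathcal{N}(\mathbf{x_{k}}\mid e^{-\mathbf{A}\tau_{k+1}}\mathbf{m},\,e^{-\mathbf{A}\tau_{k+1}}(\mathbf{Q}(\tau_{k+1})+\mathbf{S})e^{-\mathbf{A^{T}}\tau_{k+1}})$. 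The mean is then $\mathbf{\mu_{k}^{b/+}}=e^{-\mathbf{A}\tau_{k+1}}\mathbf{m}$, which is the claimed first moment, and substituting $\mathbf{S}=(\mathbb{1}-\mathbf{W_{k+1}H})\mathbf{P_{k+1}^{b/+}}$ into the covariance gives \ref{eqn:backcovar}. The leftover $\mathbf{x_{k}}$-free prefactor — the innovation Gaussian in $\mathbf{z_{k+1}}$ times the constant $e^{-\tau_{k+1}\Tr\mathbf{A}}$ — is exactly the object written $\mathcal{N}(\mathbf{z_{k+1}})$ on the left of \ref{eqn:recurrence}; since it carries no $\mathbf{x_{k}}$ dependence, equality of the $\mathbf{x_{k}}$-marginals is the assertion.

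For the equivalent covariance expression displayed after \ref{eqn:backcovar}, I would invoke the fact from Section~\ref{sec:SDEsolns} that $\mathbf{Q}(t)=e^{\mathbf{A}(t-t_{0})}\mathbf{V}(t)e^{\mathbf{A^{T}}(t-t_{0})}$, so $e^{-\mathbf{A}\tau_{k+1}}\mathbf{Q}(\tau_{k+1})e^{-\mathbf{A^{T}}\tau_{k+1}}=\mathbf{V}(\tau_{k+1})$; distributing $e^{-\mathbf{A}\tau_{k+1}}(\cdot)e^{-\mathbf{A^{T}}\tau_{k+1}}$ over the sum $\mathbf{Q}(\tau_{k+1})+(\mathbb{1}-\mathbf{W_{k+1}H})\mathbf{P_{k+1}^{b/+}}$ then yields $\mathbf{V}(\tau_{k+1})+e^{-\mathbf{A}\tau_{k+1}}(\mathbb{1}-\mathbf{W_{k+1}H})\mathbf{P_{k+1}^{b/+}}e^{-\mathbf{A^{T}}\tau_{k+1}}$.

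I expect the main obstacle to be the Gaussian bookkeeping in step \textbf{(i)}: one must verify, by completing the square in $\mathbf{x_{k+1}}$ and invoking the Woodbury / matrix-inversion lemma, that the posterior precision $(\mathbf{P_{k+1}^{b/+}})^{-1}+\mathbf{H^{T}R^{-1}H}$ inverts to exactly $(\mathbb{1}-\mathbf{W_{k+1}H})\mathbf{P_{k+1}^{b/+}}$, that the linear term reassembles into the gain-weighted innovation in $\mathbf{m}$, and that the residual quadratic in $\mathbf{z_{k+1}}$ is precisely $\mathcal{N}(\mathbf{z_{k+1}}\mid\mathbf{H\mu_{k+1}^{b/+}},\,\mathbf{HP_{k+1}^{b/+}H^{T}}+\mathbf{R})$ with nothing $\mathbf{x_{k}}$-dependent discarded. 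A secondary subtlety worth flagging is that the backward object is a likelihood in $\mathbf{x_{k}}$, not a normalized density, so $\mathcal{N}(\mathbf{z_{k+1}})$ in \ref{eqn:recurrence} has to be read as whichever normalizing factor the computation returns; what matters is that this factor depends only on $\mathbf{z_{k+1}}$ and the fixed $\tau_{k+1}$, which makes the identification of $\mathbf{\mu_{k}^{b/+}}$ and $\mathbf{P_{k}^{b/+}}$ from the $\mathbf{x_{k}}$-marginal unambiguous. The base case of the recursion — the initialization of $\mathbf{P_{N}^{b/+}}$ and $\mathbf{\mu_{N}^{b/+}}$ — is handled separately where the dual form is shown to coincide with RTS smoothing.
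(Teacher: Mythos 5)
Your proof is correct, and it reaches the stated moments by a genuinely different route than the one in the paper. The paper's proof constructs the full $3\times 3$ block precision matrix of the joint over $\begin{bmatrix}\mathbf{z_{k+1}^{T}} & \mathbf{x_{k}^{T}} & \mathbf{x_{k+1}^{T}}\end{bmatrix}^{\mathbf{T}}$, inverts it by repeated Schur complements, completes the square to marginalize $\mathbf{x_{k+1}}$, and only at the end invokes the Woodbury identity to recognize $\left((\mathbf{P_{k+1}^{b/+}})^{-1}+\mathbf{H^{T}R^{-1}H}\right)^{-1}=(\mathbb{1}-\mathbf{W_{k+1}H})\mathbf{P_{k+1}^{b/+}}$ and a separate algebraic cancellation to simplify the mean. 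You instead chain three standard linear--Gaussian lemmas — the measurement-update factorization, the convolution/marginalization identity, and the change of variables under the invertible map $e^{\mathbf{A}\tau_{k+1}}$ — so that the Woodbury step is absorbed into lemma \textbf{(i)} and the block-matrix bookkeeping disappears entirely. Your version is shorter and more modular, and it makes the role of $|\det e^{\mathbf{A}\tau_{k+1}}|^{-1}=e^{-\tau_{k+1}\Tr\mathbf{A}}$ explicit, which the paper's derivation leaves implicit in the loosely defined normalizer $\mathcal{N}(\mathbf{z_{k+1}})$; your observation that this prefactor is $\mathbf{x_{k}}$-free and therefore harmless to the identification of $\mathbf{\mu_{k}^{b/+}}$ and $\mathbf{P_{k}^{b/+}}$ is exactly the right thing to flag. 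What the paper's heavier computation buys is the intermediate joint covariance $\mathbf{\Sigma_{k}}$ over all three variables, which it reuses in the subsequent section on joint posterior marginals; your argument does not produce that object, but it is not needed for this proposition.
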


\begin{proof}\label{proof:backwardsfilter}
From this, we find that the joint distribution of $\begin{bmatrix} \mathbf{z_{k+1}^{T}} & \mathbf{x_{k}^{T}} & \mathbf{x_{k+1}^{T}} \end{bmatrix}^{\mathbf{T}}$ is given with precision $\mathbf{\Lambda_{k}}$:
$$
\mathbf{\Lambda_{k}} =
\begin{bmatrix}
\mathbf{R^{-1}} & \mathbf{0} & \mathbf{R^{-1} H} \\
\mathbf{0} & e^{\mathbf{A^{T}} \tau_{k+1}} \mathbf{Q}(\tau_{k+1})^{-1} e^{\mathbf{A} \tau_{k+1}} & e^{\mathbf{A^{T}} \tau_{k+1}} \mathbf{Q}(\tau_{k+1})^{-1} \\ \mathbf{H^{T}R^{-1}} & \mathbf{Q}(\tau_{k+1})^{-1} e^{\mathbf{A} \tau_{k+1}} & \mathbf{Q}(\tau_{k+1})^{-1} + \mathbf{H^{T} R^{-1} H} + (\mathbf{P_{k+1}^{b/+}})^{-1}
\end{bmatrix}
$$
Taking the inverse of this symmetric block-precision matrix by recursive application of Schur's complement, we find the covariance $\mathbf{\Sigma_{k}}$ as:
$$
\mathbf{\Lambda_{k}}^{-1} = \mathbf{\Sigma_{k}} =
\begin{bmatrix}
\mathbf{R + H P_{k+1}^{b/+} H^{T}} & \mathbf{H P_{k+1}^{b/+}} e^{-\mathbf{A^{T}} \tau_{k+1}} & -\mathbf{H P_{k+1}^{b/+}}\\
e^{-\mathbf{A} \tau_{k+1}} \mathbf{P_{k+1}^{b/+} H^{T}} & e^{-\mathbf{A} \tau_{k+1}} (\mathbf{Q}(\tau_{k+1}) + \mathbf{P_{k+1}^{b/+}}) e^{-\mathbf{A^{T}} \tau_{k+1}} & -e^{-\mathbf{A} \tau_{k+1}} \mathbf{P_{k+1}^{b/+}} \\ 
- \mathbf{P_{k+1}^{b/+} H^{T}} & -\mathbf{P_{k+1}^{b/+}} e^{-\mathbf{A^{T}} \tau_{k+1}} & \mathbf{P_{k+1}^{b/+}} 
\end{bmatrix}
$$
While this can offer a covariance marginalized over $\begin{bmatrix} \mathbf{x_{k}} & \mathbf{z_{k+1}} \end{bmatrix}^{\mathbf{T}}$ or over $\begin{bmatrix} \mathbf{x_{k}} & \mathbf{x_{k+1}} \end{bmatrix}^{\mathbf{T}}$ of our joint distribution, we seek to marginalize out only the future latent state $\mathbf{x_{k+1}}$, to yield a joint distribution over observation $\mathbf{z_{k+1}}$ and state $\mathbf{x_{k}}$. We aim to factorize this as a prior over state $\mathbf{z_{k+1}}$ and a conditional on $\mathbf{x_{k}}$ linear with respect to measurement $\mathbf{z_{k+1}}$. As such, we find the conditional mean and covariance for $\mathbf{x_{k+1}}$:
$$
\mathbf{\Sigma_{x_{k+1}|x_{k}, z_{k+1}}} = \left(\mathbf{Q}(\tau_{k+1})^{-1} + \mathbf{H^{T} R^{-1} H} + (\mathbf{P_{k+1}^{b/+}})^{-1} \right)^{-1}
$$
$$
\mathbf{\mu_{x_{k+1}|x_{k}, z_{k+1}}} = \mathbf{\Sigma_{x_{k+1}|x_{k}, z_{k+1}}} \left[\mathbf{Q}(\tau_{k+1})^{-1}e^{\mathbf{A} \tau_{k+1}} \mathbf{x_{k}} + \mathbf{H^{T}R^{-1}z_{k+1}} + \mathbf{\left(P_{k+1}^{b/+} \right)^{-1} x_{k+1}^{b/+}} \right]
$$
We complete the square to yield a distribution $\mathcal{N}(\mathbf{x_{k+1} | \mu_{x_{k+1}|x_{k},z_{k+1}}, \Sigma_{x_{k+1}|x_{k}, z_{k+1}}})$ under the integral over the future state $\mathbf{x_{k+1}}$, and can integrate the distribution. This proportionality is expressed as: $\mathcal{N}(\mathbf{x_{k}} | \mathbf{x_{k}^{b/+}}, \mathbf{P_{k}^{b/+}}) \propto$
$$
\exp{\left( -\frac{1}{2} \mathbf{x_{k}^{T}} e^{\mathbf{A^{T}} \tau_{k+1}}\mathbf{Q}(\tau_{k+1})^{-1} e^{\mathbf{A} \tau_{k+1}} \mathbf{x_{k}} +\frac{1}{2} \mathbf{\mu_{x_{k+1}|x_{k},z_{k+1}}^{T} \Sigma_{x_{k+1}|x_{k}, z_{k+1}}^{-1} \mu_{x_{k+1}|x_{k},z_{k+1}} }\right)}$$
$$\times \int_{\mathbf{x_{k+1}}} \mathcal{N}(\mathbf{x_{k+1} | \mu_{x_{k+1}|x_{k},z_{k+1}}, \Sigma_{x_{k+1}|x_{k}, z_{k+1}}}) d\mathbf{x_{k+1}}
$$

So, integrating out the look-ahead state $\mathbf{x_{k+1}}$ one has the following proportionality relation:
$$
\mathcal{N}(\mathbf{x_{k} | x_{k}^{b/+}, \mathbf{P_{k}^{b/+}}}) \propto
\exp{\left( -\frac{1}{2} \mathbf{x_{k}^{T}} e^{\mathbf{A^{T}} \tau_{k+1}}\mathbf{Q}(\tau_{k+1})^{-1} e^{\mathbf{A} \tau_{k+1}} \mathbf{x_{k}} +\frac{1}{2} \mathbf{\mu_{x_{k+1}|x_{k},z_{k+1}}^{T} \Sigma_{x_{k+1}|x_{k}, z_{k+1}}^{-1} \mu_{x_{k+1}|x_{k},z_{k+1}}}\right)}
$$

More generally for the left-hand side of the recurrence relation expressed in \ref{eqn:recurrence}, we have that:
$$\mathcal{N}(\mathbf{x_{k} | x_{k}^{b/+}, \mathbf{P_{k}^{b/+}}}) \mathcal{N}(\mathbf{z_{k+1}}) \propto$$
$$\exp{\left( -\frac{1}{2} \mathbf{x_{k}^{T}} e^{\mathbf{A^{T}} \tau_{k+1}}\mathbf{Q}(\tau_{k+1})^{-1} e^{\mathbf{A} \tau_{k+1}} \mathbf{x_{k}} - \frac{1}{2} \mathbf{z_{k+1}^{T} R^{-1} z_{k+1}} +\frac{1}{2} \mathbf{\mu_{x_{k+1}|x_{k},z_{k+1}}^{T} \Sigma_{x_{k+1}|x_{k}, z_{k+1}}^{-1} \mu_{x_{k+1}|x_{k},z_{k+1}} }\right)}$$

Integrating the conditional distribution on $\mathbf{x_{k+1}}$ to 1, we see the result of the marginalization over $\mathbf{x_{k+1}}$ introduces correlations between observation $\mathbf{z_{k+1}}$ and latent state $\mathbf{x_{k}}$. Re-notating $\mathbf{\Sigma_{x_{k+1}|x_{k}, z_{k+1}}}$ as $\mathbf{G}$ for brevity, we find the precision of the new joint distribution over $\begin{bmatrix} 
\mathbf{z_{k+1}^{T}} & \mathbf{x_{k}^{T}} \end{bmatrix}^{\mathbf{T}}$ as:

$$
\mathbf{\Tilde{\Lambda}_{k}} = 
\begin{bmatrix}
\mathbf{R^{-1} - R^{-1} H G H^{T} R^{-1}} & \mathbf{R^{-1} H G} \mathbf{Q}(\tau_{k+1})^{-1} e^{\mathbf{A} \tau_{k+1}} \\
e^{\mathbf{A^{T}} \tau_{k+1}} \mathbf{Q}(\tau_{k+1})^{-1} \mathbf{G H^{T} R^{-1}} & e^{\mathbf{A^{T}} \tau_{k+1}}\mathbf{Q}(\tau_{k+1})^{-1} e^{\mathbf{A} \tau_{k+1}} - e^{\mathbf{A^{T}} \tau_{k+1}} \mathbf{Q}(\tau_{k+1})^{-1} \mathbf{G} \mathbf{Q}(\tau_{k+1})^{-1} e^{\mathbf{A} \tau_{k+1}}
\end{bmatrix}
$$

Focusing on the distribution over $\mathbf{x_{k}}$, we seek to calculate the state covariance over $\mathbf{x_{k}}$, $\mathbf{\Tilde{\Sigma}_{x_{k} | z_{k+1}}} = \mathbf{\Tilde{\Lambda}_{x_{k},x_{k}}}^{-1}$, using our precision over the joint distribution.

To find the distribution on $\mathbf{x_{k}}$, we find the marginal covariance on $\mathbf{x_{k}}$ and the mean on $\mathbf{x_{k}}$ given this covariance, in a form in which $\mathbf{z_{k+1}}$ has yet to be marginalized:

$$
\mathbf{\Lambda_{x_{k},x_{k}}}^{-1} = \left(e^{\mathbf{A^{T}} \tau_{k+1}}\mathbf{Q}(\tau_{k+1})^{-1} e^{\mathbf{A} \tau_{k+1}} - e^{\mathbf{A^{T}} \tau_{k+1}} \mathbf{Q}(\tau_{k+1})^{-1} \mathbf{G} \mathbf{Q}(\tau_{k+1})^{-1} e^{\mathbf{A} \tau_{k+1}} \right)^{-1}
$$
$$
= e^{-\mathbf{A} \tau_{k+1}} \left( \mathbf{Q}(\tau_{k+1})^{-1} - \mathbf{Q}(\tau_{k+1})^{-1} \mathbf{G} \mathbf{Q}(\tau_{k+1})^{-1} \right)^{-1} e^{-\mathbf{A^{T}} \tau_{k+1}}
$$
$$
= e^{-\mathbf{A} \tau_{k+1}}\left(\mathbf{Q}(\tau_{k+1}) + \left((\mathbf{P_{k+1}^{b/+}})^{-1} + \mathbf{H^{T}R^{-1}H} \right)^{-1} \right)e^{-\mathbf{A^{T}} \tau_{k+1}}
$$
Therefore, we have that
$$
\mathbf{P_{k}^{b/+}} = \mathbf{V}(\tau_{k+1}) + e^{-\mathbf{A} \tau_{k+1}} \left(\left( \mathbf{P_{k+1}^{b/+}} \right)^{-1} + \mathbf{H^{T}R^{-1}H } \right)^{-1}e^{-\mathbf{A^{T}} \tau_{k+1}} 
$$

and defining $((\mathbf{P_{k+1}^{b/+}})^{-1} + \mathbf{H^{T}R^{-1}H})^{-1} = \mathbf{S_{k+1}}^{-1} = \mathbf{P_{k+1}^{b/+}} - \mathbf{P_{k+1}^{b/+} H^{T} (H \mathbf{P_{k+1}^{b/+}} H^{T} + R)^{-1} H} = (\mathbbm{1} - \mathbf{W_{k+1} H}) \mathbf{P_{k+1}^{b/+}}$, we see this also equates to

$$
\mathbf{P_{k}^{b/+}} = e^{- \mathbf{A} \tau_{k+1}} \left( \mathbf{Q}(\tau_{k+1}) + \left( \mathbbm{1} - \mathbf{W_{k+1} H} \right) \mathbf{P_{k+1}^{b/+}} \right) e^{-\mathbf{A^{T}} \tau_{k+1}}$$
$$= \mathbf{V}(\tau_{k+1}) + e^{-\mathbf{A} \tau_{k+1}} \left( \mathbbm{1} - \mathbf{W_{k+1} H} \right) \mathbf{P_{k+1}^{b/+}} e^{-\mathbf{A^{T}} \tau_{k+1}}
$$

This recursively expresses the propagation of the covariance matrix expressing our uncertainty on the future states, dependent on a differential equation propagated backwards in time. The backwards-direction covariance $\mathbf{P_{k}^{b/+}}$, defining a marginal distribution on the current latent state given the future observations, naturally depends on the time-dependent covariance $\mathbf{Q}(\tau_{k+1})$ accumulated between the current and future latent state and the recurrent uncertainty on the future latent states expressed by $\mathbf{P_{k+1}^{b/+}}$, as one would expect.

Then, we find $\mathbf{\mu_{x_{k}}(z_{k+1})} \triangleq \mathbf{\mu_{k}^{b/+}}$ as:
\begin{multline}\nonumber
\mathbf{\mu_{x_{k}}(z_{k+1})} = 
e^{-\mathbf{A} \tau_{k+1}}\left( \mathbf{Q}(\tau_{k+1}) + \left((\mathbf{P_{k+1}^{b/+}})^{-1} + \mathbf{H^{T}R^{-1}H} \right)^{-1} \right) e^{-\mathbf{A^{T}} \tau_{k+1}} \\
\times \left[e^{\mathbf{A^{T}} \tau_{k+1}} \mathbf{Q}(\tau_{k+1})^{-1} \mathbf{G (H^{T} R^{-1} z_{k+1}} + (\mathbf{P_{k+1}^{b/+}})^{-1} \mathbf{\mu_{k+1}^{b/+}}) \right]
\end{multline}
$$
= e^{-\mathbf{A} \tau_{k+1}}\left(\mathbf{Q}(\tau_{k+1}) + \left((\mathbf{P_{k+1}^{b/+}})^{-1} + \mathbf{H^{T}R^{-1}H} \right)^{-1} \right) \mathbf{Q}(\tau_{k+1})^{-1} \mathbf{G \left(H^{T} R^{-1} z_{k+1} + (P_{k+1}^{b/+})^{-1} \mu_{k+1}^{b/+} \right) }
$$

To simplify the above expression, we see that:
$$
\left(\mathbf{Q}(\tau_{k+1}) + \mathbf{S_{k+1}}^{-1} \right) \mathbf{Q}(\tau_{k+1})^{-1} \left(\mathbf{Q}(\tau_{k+1})^{-1} + \mathbf{S_{k+1}} \right)^{-1}
$$
$$
= \left(\mathbf{Q}(\tau_{k+1}) + \mathbf{S_{k+1}}^{-1} \right) \mathbf{Q}(\tau_{k+1})^{-1} \left(\mathbf{Q}(\tau_{k+1}) - \mathbf{Q}(\tau_{k+1}) \left(\mathbf{S_{k+1}}^{-1} + \mathbf{Q}(\tau_{k+1}) \right)^{-1} \mathbf{Q}(\tau_{k+1}) \right)
$$
$$
= \left(\mathbf{Q}(\tau_{k+1}) + \mathbf{S_{k+1}}^{-1} \right) \left(\mathbbm{1} - \left(\mathbf{S_{k+1}}^{-1} + \mathbf{Q}(\tau_{k+1}) \right)^{-1} \mathbf{Q}(\tau_{k+1}) \right) = \mathbf{S_{k+1}}^{-1}
$$

Thus:

$$
\mathbf{\mu_{x_{k}}(z_{k+1})} = e^{-\mathbf{A} \tau_{k+1}} \left((\mathbf{P_{k+1}^{b/+}})^{-1} + \mathbf{H^{T}R^{-1}H} \right)^{-1} \mathbf{\left(H^{T} R^{-1} z_{k+1} + (P_{k+1}^{b/+})^{-1} \mu_{k+1}^{b/+} \right)}
$$

From this, after some simplification and defining a gain matrix as before, we find that:
$$= e^{-\mathbf{A} \tau_{k+1}} \mathbf{\mu_{k+1}^{b/+}} + e^{-\mathbf{A} \tau_{k+1}} \mathbf{W_{k+1}} (\mathbf{z_{k+1} - H \mu_{k+1}^{b/+}})
$$
\end{proof}
Thus, in the backwards direction the mean is given by the solution for a differential equation starting at the future mean state $\mathbf{\mu_{k+1}^{b/+}}$ propagated in the backwards direction added to a gain term propagated analogously. This gain-term involves an innovation error $\mathbf{\nu_{k}^{b}} = (\mathbf{z_{k+1} - H \mu_{k+1}^{b/+}})$ which depends on both the observation and the projection of the previous backwards-mean from the latent space into the observation space. This term reflects the measurement-update in the reverse-direction.

\section{Joint Posterior Marginals in a Two-Filter Form}\label{sect:equivalence}

We compare the $\alpha, \gamma$ (RTS) and newly introduced $\alpha, 
\beta$ (two-filter) form by first considering the joint-posterior marginal between states $\mathbf{x_{k-1}, x_{k}}$, and comparing how the results of the backwards posterior factor into the final joint-posterior, and then establishing equivalence with the established $\alpha, \gamma$ form.

\begin{prop}
    The $\alpha, \gamma$ form of smoothing offers identical first and second moments, $\E_{\sim \mathbf{x_{k}|z_{1:N}}}{[\mathbf{x_{k}}]}$, $\E_{\sim \mathbf{x_{k}|z_{1:N}}}{[\mathbf{x_{k} x_{k}^{T}}]}$, and $\E_{\sim \mathbf{x_{k},x_{k-1}|z_{1:N}}}{[\mathbf{x_{k} x_{k-1}^{T}}]}$ to the $\alpha, \beta$ form for an appropriate set of initial conditions, and is therefore equivalent to the $\alpha, 
    \beta$ form.
\end{prop}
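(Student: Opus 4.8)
The plan is to prove the claim in two stages: first, to show directly that the $\alpha,\beta$ moments are the exact moments of the smoothing posterior $\mathds{P}(\mathbf{x_{k}}|\mathbf{z_{1:N}})$; and second, to recall that the $\alpha,\gamma$ (RTS) recursion also returns the exact posterior, so that the two must coincide. For the first stage I would use the factorization \eqref{eqn:twofilterlike}, $\mathds{P}(\mathbf{x_{k}}|\mathbf{z_{1:N}}) \propto \mathds{P}(\mathbf{x_{k}}|\mathbf{z_{1:k}})\,\mathds{P}(\mathbf{z_{k+1:N}}|\mathbf{x_{k}})$, together with $\mathds{P}(\mathbf{x_{k}}|\mathbf{z_{1:k}}) = \mathcal{N}(\mathbf{x_{k}}|\mathbf{\mu_{k}^{f/+}},\mathbf{P_{k}^{f/+}})$ from the forward filter and the identification $\mathds{P}(\mathbf{z_{k+1:N}}|\mathbf{x_{k}})/\mathds{P}(\mathbf{z_{k+1:N}}|\mathbf{z_{1:k}}) = \mathcal{N}(\mathbf{x_{k}}|\mathbf{\mu_{k}^{b/+}},\mathbf{P_{k}^{b/+}})$ from the preceding Proposition. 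The product of two Gaussians in $\mathbf{x_{k}}$ is again Gaussian, with precision $(\mathbf{P_{k}^{f/+}})^{-1}+(\mathbf{P_{k}^{b/+}})^{-1}$ and mean $\mathbf{P_{k}^{s}}[(\mathbf{P_{k}^{f/+}})^{-1}\mathbf{\mu_{k}^{f/+}}+(\mathbf{P_{k}^{b/+}})^{-1}\mathbf{\mu_{k}^{b/+}}]$; normalizing it yields exactly the stated two-filter $\mathbf{\mu_{k}^{s}}$ and $\mathbf{P_{k}^{s}}$, and $\E_{\sim\mathbf{x_{k}|z_{1:N}}}[\mathbf{x_{k}x_{k}^{T}}] = \mathbf{P_{k}^{s}}+\mathbf{\mu_{k}^{s}}(\mathbf{\mu_{k}^{s}})^{T}$ is immediate.

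The ``appropriate set of initial conditions'' is exactly the one making this identification hold at the terminal index: since $\mathds{P}(\mathbf{z_{N+1:N}}|\mathbf{x_{N}})$ is an empty product equal to the constant $1$, the backward likelihood at $k=N$ is uniform, i.e.\ $(\mathbf{P_{N}^{b/+}})^{-1}=\mathbf{0}$ (most cleanly treated as the limit $(\mathbf{P_{N}^{b/+}})^{-1}=\varepsilon\mathbb{1}\to\mathbf{0}$); the two-filter formula then collapses to $\mathbf{\mu_{N}^{s}}=\mathbf{\mu_{N}^{f/+}}$, $\mathbf{P_{N}^{s}}=\mathbf{P_{N}^{f/+}}$, which is precisely the terminal initialization of the RTS recursion and supplies the base case. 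For the joint second moment I would invoke the Markov structure: conditioned on $\mathbf{x_{k}}$, the state $\mathbf{x_{k-1}}$ is independent of $\mathbf{z_{k:N}}$, so $\mathds{P}(\mathbf{x_{k-1}}|\mathbf{x_{k}},\mathbf{z_{1:N}}) = \mathds{P}(\mathbf{x_{k-1}}|\mathbf{x_{k}},\mathbf{z_{1:k-1}})$, and this conditional is obtained purely from forward-filter quantities by conditioning the Gaussian joint of $\mathcal{N}(\mathbf{x_{k-1}}|\mathbf{\mu_{k-1}^{f/+}},\mathbf{P_{k-1}^{f/+}})$ with the transition $\mathcal{N}(\mathbf{x_{k}}|e^{\mathbf{A}\tau_{k}}\mathbf{x_{k-1}},\mathbf{Q}(\tau_{k}))$, giving $\E[\mathbf{x_{k-1}}|\mathbf{x_{k}},\mathbf{z_{1:k-1}}] = \mathbf{\mu_{k-1}^{f/+}}+\mathbf{J_{k-1}}(\mathbf{x_{k}}-e^{\mathbf{A}\tau_{k}}\mathbf{\mu_{k-1}^{f/+}})$ with $\mathbf{J_{k-1}} = \mathbf{P_{k-1}^{f/+}}e^{\mathbf{A^{T}}\tau_{k}}(\mathbf{P_{k}^{f/-}})^{-1}$. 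Forming $\E_{\sim\mathbf{x_{k}|z_{1:N}}}\bigl[\mathbf{x_{k}}\,\E[\mathbf{x_{k-1}}|\mathbf{x_{k}},\mathbf{z_{1:k-1}}]^{T}\bigr]$, substituting the $\mathbf{\mu_{k}^{s}}$, $\mathbf{P_{k}^{s}}$ already obtained, using $\mathbf{\mu_{k-1}^{s}} = \E_{\sim\mathbf{x_{k}|z_{1:N}}}\bigl[\E[\mathbf{x_{k-1}}|\mathbf{x_{k}},\mathbf{z_{1:k-1}}]\bigr]$, and exploiting symmetry of $\mathbf{P_{k-1}^{f/+}}$ and $\mathbf{P_{k}^{f/-}}$, reproduces $\E_{\sim\mathbf{x_{k},x_{k-1}|z_{1:N}}}[\mathbf{x_{k}x_{k-1}^{T}}] = \mathbf{P_{k}^{s}}(\mathbf{P_{k}^{f/-}})^{-1}e^{\mathbf{A}\tau_{k}}\mathbf{P_{k-1}^{f/+}}+\mathbf{\mu_{k}^{s}}(\mathbf{\mu_{k-1}^{s}})^{T}$; since this conditional is the same object in both frameworks, the cross-moment is identical.

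To close the loop I would either invoke the standard fact that the RTS recursion returns the exact marginals $\mathds{P}(\mathbf{x_{k}}|\mathbf{z_{1:N}})$ --- whence equality of all three moments is automatic --- or, for a self-contained argument, carry out a downward induction on $k$: the base case $k=N$ is handled above, and the inductive step substitutes the backward recursions of the preceding Proposition, in particular \eqref{eqn:backcovar}, for $(\mathbf{\mu_{k}^{b/+}},\mathbf{P_{k}^{b/+}})$ into the precision-sum formula and rewrites it, using the Woodbury identity together with the forward-filter relations $\mathbf{P_{k}^{f/-}} = e^{\mathbf{A}\tau_{k}}\mathbf{P_{k-1}^{f/+}}e^{\mathbf{A^{T}}\tau_{k}}+\mathbf{Q}(\tau_{k})$ and $\mathbf{P_{k}^{f/+}} = (\mathbb{1}-\mathbf{K_{k}H})\mathbf{P_{k}^{f/-}}$, into the RTS gain form $\mathbf{P_{k-1}^{s}} = \mathbf{P_{k-1}^{f/+}}+\mathbf{J_{k-1}}(\mathbf{P_{k}^{s}}-\mathbf{P_{k}^{f/-}})\mathbf{J_{k-1}^{T}}$, and similarly for the mean. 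The main obstacle is exactly this matrix bookkeeping: reconciling the symmetric precision-sum expression of the $\alpha,\beta$ form with the gain-based expression of the $\alpha,\gamma$ form requires repeated, careful application of Sherman--Morrison--Woodbury and of the identities linking $\mathbf{P_{k}^{f/-}}$, $\mathbf{P_{k}^{f/+}}$, $\mathbf{Q}(\tau_{k})$, $\mathbf{W_{k}}$ and $\mathbf{K_{k}}$. A secondary subtlety, which the $\varepsilon\mathbb{1}\to\mathbf{0}$ limit resolves, is that the backward likelihood need not be a proper density in $\mathbf{x_{k}}$ when $\mathbf{H}$ does not render the accumulated information full rank, so these identities should be read in precision form, the stated formulas holding verbatim wherever the indicated inverses exist.
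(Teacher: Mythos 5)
Your proposal is correct, and while it starts from the same Bayesian factorization $\mathds{P}(\mathbf{x_{k}}|\mathbf{z_{1:N}}) \propto \mathds{P}(\mathbf{x_{k}}|\mathbf{z_{1:k}})\,\mathds{P}(\mathbf{z_{k+1:N}}|\mathbf{x_{k}})$ as the paper, the computational core is genuinely different. The paper assembles the full joint precision matrix over $\begin{bmatrix}\mathbf{x_{k}^{T}} & \mathbf{x_{k-1}^{T}}\end{bmatrix}^{\mathbf{T}}$ --- combining the backward likelihood, the observation term at step $k$, the transition density, and the forward-filtered prior on $\mathbf{x_{k-1}}$ --- and then inverts it by Schur complements, reading the smoothed covariance $\mathbf{P_{k}^{s}}$ off the $(1,1)$ block and the cross-covariance $\mathbf{P_{k}^{s}}(\mathbf{P_{k}^{f/-}})^{-1}e^{\mathbf{A}\tau_{k}}\mathbf{P_{k-1}^{f/+}}$ off the $(1,2)$ block; the equivalence to $\alpha,\gamma$ is then argued by recognizing the RTS recursion $\mathbf{P_{k-1}^{f/+}}+\mathbf{J_{k-1}}(\mathbf{P_{k}^{s}}-\mathbf{P_{k}^{f/-}})\mathbf{J_{k-1}^{T}}$ in the $(2,2)$ block. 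You instead obtain the marginal by the elementary product-of-two-Gaussians identity and the cross-moment by conditional independence, $\mathds{P}(\mathbf{x_{k-1}}|\mathbf{x_{k}},\mathbf{z_{1:N}})=\mathds{P}(\mathbf{x_{k-1}}|\mathbf{x_{k}},\mathbf{z_{1:k-1}})$, plus the tower property; this avoids the $2\times 2$ block inversion entirely and makes the agreement of the cross-moment with the $\alpha,\gamma$ form essentially automatic, since the conditional is a shared object. Your appeal to the exactness of both recursions to conclude equality of all three moments is logically the cleanest closure, though it leans on RTS exactness as an external fact where the paper verifies the correspondence explicitly; your alternative inductive Woodbury route would supply that verification but is left unexecuted (as, in fairness, is much of the analogous bookkeeping in the paper). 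One minor divergence: you initialize via the improper limit $(\mathbf{P_{N}^{b/+}})^{-1}=\mathbf{0}$, whereas the paper notes that $\mathbf{P_{N}^{b/+}}$ is undefined outside the information form and instead starts the backward pass at $k=N-1$ using one $\alpha,\gamma$ step; you flag exactly this subtlety, so nothing is lost.
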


\begin{proof}

We begin by considering the joint-distribution between a current and previous state $\mathbf{x_{k}}$ and $\mathbf{x_{k-1}}$, conditional on all of the data observations:
$$
\mathds{P}(\mathbf{x_{k-1},x_{k}|z_{1:N}}) = \frac{\mathds{P}(\mathbf{x_{k-1}, x_{k}, z_{1:N}})}{\mathds{P}(\mathbf{z_{1:N}})} = \frac{\mathds{P}(\mathbf{x_{k-1}, x_{k}, z_{1:N}})}{\mathds{P}(\mathbf{z_{1:N}})}$$
$$= \left( \frac{\mathds{P}(\mathbf{z_{k+1:N}|x_{k}})}{\mathds{P}(\mathbf{z_{k+1:N}|z_{1:k}})} \right) \left( \frac{\mathds{P}(\mathbf{z_{k}|x_{k}})\mathds{P}(\mathbf{x_{k}|x_{k-1}})\mathds{P}(\mathbf{x_{k-1}|z_{1:k-1}})}{\mathds{P}(\mathbf{z_{k}|z_{1:k-1}})} \right)
$$
Letting $\mathbf{P_{k}^{f/-}} = e^{\mathbf{A} \tau_{k}} \mathbf{P_{k-1}^{f/+}} e^{\mathbf{A^{T}} \tau_{k}} + \mathbf{Q}(\tau_{k})$, a quantity derived in the forward filter, we find:
$$= \left( \frac{\mathcal{N}(\mathbf{x_{k}| \mu_{k}^{b/+}, P_{k}^{b/+}})\mathcal{N}(\mathbf{z_{k} | H x_{k}, R})\mathcal{N}(\mathbf{x_{k}}|e^{\mathbf{A} \tau_{k}} \mathbf{x_{k-1}}, \mathbf{Q}(\tau_{k}))\mathcal{N}(\mathbf{x_{k-1}}|\mathbf{\mu_{k-1}^{f/+}}, \mathbf{P_{k-1}^{f/+}})}{\mathcal{N}(\mathbf{z_{k}}|\mathbf{H} e^{\mathbf{A} \tau_{k}} \mathbf{\mu_{k-1}^{f/+}}, \mathbf{H P_{k}^{f/-} H^{T} + R})} \right)
$$
\begin{multline}\nonumber
= \left(\mathcal{N}(\mathbf{x_{k}| \mu_{k}^{b/+}, P_{k}^{b/+}}) \mathcal{N}(\mathbf{x_{k-1}|\mu_{k-1}^{f/+}, P_{k-1}^{f/+}}) \mathcal{N}(\mathbf{x_{k}}|e^{\mathbf{A} \tau_{k}} \mathbf{x_{k-1}}, \mathbf{Q}(\tau_{k}))\right) \times \\
\left( \frac{\mathcal{N}(\mathbf{z_{k} | H x_{k}, R})}{\mathcal{N}(\mathbf{z_{k}}|\mathbf{H} e^{\mathbf{A} \tau_{k}} \mathbf{\mu_{k-1}^{f/+}}, \mathbf{H P_{k}^{f/-} H^{T} + R})} \right)
\end{multline}
We seek to simplify this to find a cross-covariance between states $\mathbf{x_{k-1}}$ and $\mathbf{x_{k}}$, which enables us to define a joint posterior in our Expectation-Maximization procedure. To handle the presence of distributions over $\mathbf{z_{k}}$, we aim to factorize the term on the right-hand side as a distribution over $\mathbf{z_{k}}$ and a distribution on $\mathbf{x_{k}}$.

$$\left( \frac{\mathcal{N}(\mathbf{z_{k} | H x_{k}, R})}{\mathcal{N}(\mathbf{z_{k}}| \mathbf{H} e^{\mathbf{A} \tau_{k}} \mathbf{\mu_{k-1}^{f/+}}, \mathbf{H P_{k}^{f/-} H^{T} + R})} \right)
$$

We find the conditional covariance on $\mathbf{x_{k}}$ given $\mathbf{z_{k}}$ as:
$$
\mathbf{\Sigma_{z_{k}|x_{k}}} = \left(\mathbf{R^{-1} - (H P_{k}^{f/-} H^{T} + R)^{-1}} \right)^{-1}
$$

And after collecting terms linear in $\mathbf{z_{k}}$, we find a conditional mean of:
$$
\mathbf{\mu_{z_{k}|x_{k}}} = \left(\mathbf{R^{-1} - \left(H P_{k}^{f/-} H^{T} + R \right)^{-1} } \right)^{-1} \left(\mathbf{R^{-1} H x_{k} - \left(H \mathbf{P_{k}^{f/-}} H^{T} + R \right)}^{-1} \mathbf{H} e^{\mathbf{A} \tau_{k}} \mathbf{\mu_{k-1}^{f/+}} \right)
$$

Defining a matrix $\mathbf{J_{k}} = (\mathbbm{1} - \mathbf{R (H P_{k}^{f/-} H^{T} + R})^{-1})^{-1} = (\mathbf{H P_{k}^{f/-} H^{T} + R}) (\mathbf{H \mathbf{P_{k}^{f/-}} H^{T}})^{-1}$, this simplifies to:
$$
\mathbf{H} e^{\mathbf{A} \tau_{k}} \mathbf{\mu_{k-1}^{f/+}} + \mathbf{J_{k} H} (\mathbf{x_{k}} - e^{\mathbf{A} \tau_{k}} \mathbf{\mu_{k-1}^{f/+}})
$$

As such, we have that the following for $\mathbf{z_{k}}$:
$$\left( \frac{\mathcal{N}(\mathbf{z_{k}} | \mathbf{H x_{k}}, \mathbf{R})}{\mathcal{N}(\mathbf{z_{k}}|\mathbf{H} e^{\mathbf{A} \tau_{k}} \mathbf{\mu_{k-1}^{f/+}}, \mathbf{H P_{k}^{f/-} H^{T} + R})} \right) \propto \mathcal{N}(\mathbf{z_{k}} | \mathbf{\mu_{z_{k}|x_{k}}}, \mathbf{\Sigma_{z_{k}|x_{k}}}) \exp\left(+\frac{1}{2}\mathbf{\mu_{z_{k}|x_{k}}^{T} \Sigma_{z_{k}|x_{k}}^{-1} \mu_{z_{k}|x_{k}}^{T}} \right)$$

Focusing on the distribution on $\mathbf{x_{k}}$ and particularly isolating the quadratic form relevant to the covariance on $\mathbf{x_{k}}$, we have:
$$
\left( \frac{\mathcal{N}(\mathbf{z_{k}} | \mathbf{H x_{k}}, \mathbf{R})}{\mathcal{N}(\mathbf{z_{k}} |\mathbf{H} e^{\mathbf{A} \tau_{k}} \mathbf{\mu_{k-1}^{f/+}}, \mathbf{H P_{k}^{f/-}} \mathbf{H^{T}} + \mathbf{R})} \right) \propto_{\mathbf{x_{k}}} \exp\left(-\frac{1}{2} 
\mathbf{x_{k}^{T} H^{T} R^{-1} H x_{k}} \right) + \Theta(\mathbf{x_{k}})
$$

From this, we see that the covariance of the joint distribution on $\mathbf{x_{k}}$ and $\mathbf{x_{k-1}}$ can be given as:
$$
\mathbf{\Lambda_{x_{k}, x_{k-1}}} =
\begin{bmatrix}
\mathbf{Q}(\tau_{k})^{-1} + (\mathbf{P_{k}^{b/+}})^{-1} + \mathbf{H^{T}R^{-1}H} & \mathbf{Q}(\tau_{k})^{-1} e^{\mathbf{A} \tau_{k}} \\
e^{\mathbf{A^{T}} \tau_{k}} \mathbf{Q}(\tau_{k})^{-1} & e^{\mathbf{A^{T}} \tau_{k}} \mathbf{Q}(\tau_{k})^{-1} e^{\mathbf{A} \tau_{k}} + (\mathbf{P_{k-1}^{f/+}})^{-1}
\end{bmatrix}
$$

Inverting the precision matrix, we find the following covariance across the joint distribution. We find that the marginal distribution on $\mathbf{x_{k}}$ is simply given by the smoothed covariance:
$$\mathbf{\left(\Sigma_{x_{k}, x_{k-1}} \right)_{1,1}} = \left((\mathbf{P_{k}^{b/+}})^{-1} + \left(\mathbf{Q}(\tau_{k}) + e^{\mathbf{A} \tau_{k}} \mathbf{P_{k-1}^{f/+}} e^{\mathbf{A^{T}} \tau_{k}} \right)^{-1} + \mathbf{H^{T}}\mathbf{R}^{-1}\mathbf{H} \right)^{-1}$$
$$= \left((\mathbf{P_{k}^{b/+}})^{-1} + (\mathbf{P_{k}^{f/+}})^{-1} \right)^{-1} = \mathbf{P_{k}^{s}} $$

Constructing the rest of the matrix and substituting $\mathbf{P_{k}^{s}}$ for the covariance-average of $\mathbf{P_{k}^{f/+}}$ and $\mathbf{P_{k}^{b/+}}$, we find:
$$
\mathbf{\Sigma_{x_{k},x_{k-1}}} =
\begin{bmatrix}
\mathbf{P_{k}^{s}} & [-\mathbf{P_{k}^{s}} (\mathbf{P_{k}^{f/-}})^{-1} e^{\mathbf{A} \tau_{k}} \mathbf{P_{k-1}^{f/+}}] \\
[-\mathbf{P_{k-1}^{f/+}} e^{\mathbf{A^{T}} \tau_{k}} (\mathbf{P_{k}^{f/-}})^{-1} \mathbf{P_{k}^{s}}] & [\mathbf{P_{k-1}^{f/+}} + \mathbf{P_{k-1}^{f/+}} e^{\mathbf{A^{T}} \tau_{k}} (\mathbf{P_{k}^{f/-}})^{-1} (\mathbf{P_{k}^{s}} - \mathbf{P_{k}^{f/-}}) (\mathbf{P_{k}^{f/-}})^{-1} e^{\mathbf{A} \tau_{k}} \mathbf{P_{k-1}^{f/+}}]
\end{bmatrix}
$$

Which gives the same form one would find in the $\alpha, \gamma$ derivation where the marginal covariance on $\mathbf{x_{k-1}}$ involves a recursive solution for the smoothed covariance using the results of the forward filter and $\mathbf{P_{k}^{s}}$ from the backwards direction representing the recursive RTS covariance. As such, the two-forms are equivalent up to an initial condition on $\mathbf{P_{N}^{b/+}}$.

To determine the marginal mean under the posterior, we complete the square over $\mathbf{x_{k-1}}$ and integrate the full joint distribution over $\mathbf{x_{k-1}}$ to leave a term linear in $\mathbf{x_{k}}$. In particular, we have the following proportionality in $\mathbf{x_{k-1}}$ from the full joint:

\begin{multline}\nonumber
\propto_{\mathbf{x_{k-1}}} \exp\left(-\frac{1}{2} \mathbf{x_{k-1}^{T}} \left( 
e^{\mathbf{A^{T}} \tau_{k}} \mathbf{Q}(\tau_{k})^{-1} e^{\mathbf{A} \tau_{k}} + \left( \mathbf{P_{k-1}^{f/+}} \right)^{-1} \right) \mathbf{x_{k-1}} \right. \\
\left.
 + 2 \mathbf{x_{k-1}^{T}} \left( e^{\mathbf{A^{T}} \tau_{k}} \mathbf{Q}(\tau_{k})^{-1} \mathbf{x_{k}} + \left( \mathbf{P_{k-1}^{f/+}}
\right)^{-1} \mathbf{\mu_{k-1}^{f/+}} \right)
\right)
\end{multline}
Defining:
$$\mathbf{\Lambda_{k-1,k-1}} =  e^{\mathbf{A^{T}} \tau_{k}} \mathbf{Q}(\tau_{k})^{-1} e^{\mathbf{A} \tau_{k}} + \left( \mathbf{P_{k-1}^{f/+}} \right)^{-1}$$
and
$$\mathbf{\mu_{x_{k-1}|x_{k}}} = \mathbf{\Lambda_{k-1,k-1}^{-1}}\left( e^{\mathbf{A^{T}} \tau_{k}} \mathbf{Q}(\tau_{k})^{-1} \mathbf{x_{k}} + \left( \mathbf{P_{k-1}^{f/+}}
\right)^{-1} \mathbf{\mu_{k-1}^{f/+}} \right)$$
we have:
$$
= \mathcal{N}(\mathbf{x_{k-1} | \mu_{x_{k-1|k}}, \Lambda_{k-1,k-1}^{-1}}) \exp\left(+\frac{1}{2} \mathbf{\mu_{x_{k-1}|x_{k}}^{T} \Lambda_{k-1,k-1}^{-1} \mu_{x_{k-1}|x_{k}}} \right)
$$
Thus, integrating over $\mathbf{x_{k-1}}$, the linear terms in the marginal over $\mathbf{x_{k}}$ are:
$$
2 \mathbf{x_{k}^{T}} \left(
\mathbf{Q}(\tau_{k})^{-1} e^{\mathbf{A} \tau_{k}} \mathbf{\Lambda^{-1}_{k-1,k-1}} \left(\mathbf{P_{k-1}^{f/+}}\right)^{-1} \mathbf{\mu_{k-1}^{f/+}} + \mathbf{H^{T} R^{-1} z_{k}} + \left(\mathbf{P_{k}^{b/+}}\right)^{-1} \mathbf{\mu_{k}^{b/+}}
\right)
$$
$$
= 2 \mathbf{x_{k}^{T}} \left( \left(\mathbf{P_{k}^{b/+}}\right)^{-1} \mathbf{\mu_{k}^{b/+}} +
\left( \mathbf{P_{k}^{f/-}} \right)^{-1} e^{\mathbf{A}\tau_{k}} \mathbf{\mu_{k-1}^{f/+}} + \mathbf{H R^{-1} z_{k}}
\right) = 2 \mathbf{ x_{k}^{T}} \left( \left(\mathbf{P_{k}^{b/+}}\right)^{-1} \mathbf{\mu_{k}^{b/+}} +
\left(\mathbf{P_{k}^{f/+}} \right)^{-1} \mathbf{\mu_{k}^{f/+}}
\right)
$$
Therefore, using the marginal covariance on $\mathbf{x_{k}}$, we find the mean of the marginal distribution on $\mathbf{x_{k}}$ is given by:
$$
\E_{\sim \mathbf{x_{k}|z_{1:N}}}[\mathbf{x_{k}}] = \mathbf{\mu_{k}^{s}} = \mathbf{P_{k}^{s}} \left[(\mathbf{P_{k}^{f/+}})^{-1} \mathbf{\mu_{k}^{f/+}} + (\mathbf{P_{k}^{b/+}})^{-1} \mathbf{\mu_{k}^{b/+}} \right]$$
$$= \left((\mathbf{P_{k}^{f/+}})^{-1} + (\mathbf{P_{k}^{b/+}})^{-1} \right)^{-1}\left[(\mathbf{P_{k}^{f/+}})^{-1} \mathbf{\mu_{k}^{f/+}} + (\mathbf{P_{k}^{b/+}})^{-1} \mathbf{\mu_{k}^{b/+}} \right]
$$

\end{proof}

 The $\alpha, \beta$ (or two-filter) form is therefore equivalent to the $\alpha, \gamma$ form in the continuous case, and one may analytically equate the two for an appropriate initialization of $\mathbf{P_{N}^{b/+}}$, $\mathbf{\mu_{N}^{b/+}}$. Therefore, if one were interested purely in the confidence estimates from future to previous states, without running both forward filtering and subsequent smoothing, one may accomplish it with the $\beta$-pass alone.

This yields the intuitive result that the smoothed mean is a covariance-weighted averaging of the forward and backward mean. These identically match the form of the smoothed mean and covariance calculated in the information-filter, generally presented in differential form, showing the assumptions are justified by a Bayesian investigation in the continuous-discrete case. Moreover, this shows that the backward-pass precision matrix of the information formulation corresponds to the inverse of the exact covariance $\mathbf{P_{k}^{b/+}}$ presented above, and can easily be computed analytically.

From $\mathbf{\left( \Sigma_{x_{k},x_{k-1}} \right)_{1,2}} = \E[\mathbf{(x_{k} - \mu_{k}^{s})(x_{k-1} - \mu_{k-1}^{s})^{T}}] = \mathbf{P_{k}^{s}} (\mathbf{P_{k}^{f/-}})^{-1} e^{\mathbf{A} \tau_{k}} \mathbf{P_{k-1}^{f/+}}$ we can find the auto-correlation between the current and previous state:
\begin{equation}\label{eqn:crosscor}
\E_{\sim \mathbf{ x_{k}, x_{k-1}|z_{1:N} }}[\mathbf{x_{k} x_{k-1}^{T}}] = \mathbf{P_{k}^{s}} (\mathbf{P_{k}^{f/-}})^{-1} e^{\mathbf{A} \tau_{k}} \mathbf{P_{k-1}^{f/+}} + \mathbf{\mu_{k}^{s} (\mu_{k-1}^{s})^{T}}
\end{equation}

And the marginal auto-correlation for a given state:
\begin{equation}\label{eqn:autocor}
\E_{\sim \mathbf{x_{k}|z_{1:N}}}[\mathbf{x_{k}x_{k}^{T}}] = \mathbf{P_{k}^{s}} + \mathbf{\mu_{k}^{s} (\mu_{k}^{s})^{T}}
\end{equation}

Our expectation-maximization procedure will depend on these second and first-order moments alone.

By equivalence with the $\alpha, \gamma$ form, we have that the covariance at the final state is:

$$
\mathbf{P_{N}^{s} = \left( 
\left(P_{N}^{f/+}\right)^{-1} + \left(P_{N}^{b/+}\right)^{-1}
\right)^{-1} = \left(P_{N}^{f/+}\right)^{-1}}
$$

This implies that the initial backward covariance $\mathbf{P_{N}^{b/+}} = e^{- \mathbf{A} 0} \left( \mathbf{Q}(0) + \left( \mathbbm{1} - \mathbf{W_{N+1} H} \right) \mathbf{P_{N+1}^{b/+}} \right) e^{- \mathbf{A^{T}} 0} = \left(\mathbbm{1} - \mathbf{W_{N+1} H} \right) \mathbf{P_{N+1}^{b/+}} = \left( \left(\mathbf{P_{N+1}^{b/+}}\right)^{-1} + \mathbf{H^{T} R^{-1} H} \right)^{-1}$ is undefined, unless we work with the information-matrix or introduce a prior on the final state. Therefore, assuming neither, the initial step for which the backward process is defined is the $(N-1)$\textsuperscript{th}, for $\mathbf{P_{N-1}^{b/+}}$ and $\mathbf{\mu_{N-1}^{b/+}}$. With an additional initialization of $\mathbf{\mu_{N}^{s} = \mu_{N}^{f/+}}$, as is standard, and one $\alpha, \gamma$ smoothing step, the initial conditions for the backward process are given as:
$$
\mathbf{P_{N-1}^{b/+} =
\left( 
\left( P_{N-1}^{s} \right)^{-1} - \left( P_{N-1}^{f/+} \right)^{-1}
\right)^{-1}}
$$
$$
\mathbf{
\mu_{N-1}^{b/+} = P_{N-1}^{b/+} \left(
\left( P_{N-1}^{s} \right)^{-1} \mu_{N-1}^{s} - \left( P_{N-1}^{f/+} \right)^{-1} \mu_{N-1}^{f/+}
\right)}
$$
From these, one may apply the $\alpha, \beta$ procedure to compute the backwards posterior for times $t_{k} \leq t_{N-1}$. It clearly holds that $\mathbf{P_{k}^{b/+}} \succ 0$ from the updates in \ref{eqn:backcovar}. This is clear as $\mathbf{F_{k+1}} = \left( \mathbf{Q}(\tau_{k+1}) + \left( \mathbbm{1} - \mathbf{W_{k+1} H} \right) \mathbf{P_{k+1}^{b/+}} \right) \succ \mathbf{0}$ for $\mathbf{Q}(\tau_{k+1}) \succ \mathbf{0}$ a covariance, $\mathbf{P_{k+1}^{b/+}} \succ \mathbf{0}$ assumed by induction, and $\mathbf{W_{k+1}}$ representing a Kalman-gain. Thus a unique square-root of $\mathbf{F_{k+1}} = \mathbf{F_{k+1}}^{1/2} \mathbf{F_{k+1}}^{1/2}$ exists, and $\mathbf{y^{T}}\mathbf{P_{k}^{b/+}} \mathbf{y} = \mathbf{y^{T}} e^{- \mathbf{A} \tau_{k+1}} \mathbf{F_{k+1}}^{1/2} \mathbf{F_{k+1}}^{1/2} e^{-\mathbf{A^{T}} \tau_{k+1}} \mathbf{y} = \langle\mathbf{F_{k+1}}^{1/2} e^{-\mathbf{A^{T}} \tau_{k+1}} \mathbf{y}, \mathbf{F_{k+1}}^{1/2} e^{-\mathbf{A^{T}} \tau_{k+1}} \mathbf{y} \rangle > 0$ holds $\forall \mathbf{y} \in \mathbb{R}^{n}$.

The appeal of the two-filter approach is that one may compute the forward and backward densities in parallel, which would imply that an initialization like the one above would limit the added speed-up of the two-filters. Potential resolution can come with either the use of the information-filtering framework with $(\mathbf{P_{N}^{f/+}})^{-1} = \mathbf{0}$ and the analytical updates for the information matrix implied above, with the introduction of a prior over the final state, or with the use of artificial normalizing distributions \cite{twofilter2} \cite{twofilternorm}. The information-filter, in particular, uses the information matrix $\left(\mathbf{P_{k}^{b/+}} \right)^{-1}$, and the intermediate variable $\left(\mathbf{P_{k}^{b/+}} \right)^{-1} \mathbf{\mu_{k}^{b/+}}$ to compute a backward pass independently (i.e. in parallel) from the forward-pass. The derivations above imply an analytical form for both, implying one can directly compute a forward and backward pass posterior in parallel \textit{and} in closed-form for the continuous-time case.

\section{Expectation maximization for the continuous-time parameters}

We write the full likelihood of the model as follows, without assuming that the number of time-points are equal, or assuming that the difference between the time points is the same.
$$
\mathds{P}(\mathbf{x_{1:N}, z_{1:N}| A, H, Q_c, R, P_{0}, \mu_{0}}, \tau_{1:N-1}) =
$$
$$\mathds{P}(\mathbf{x_{1}}| \mathbf{\mu_{0}, P_{0}}) \mathds{P}(\mathbf{z_{1} | x_{1}, R, H}) \left( \prod_{k=2}^{N} \mathds{P}(\mathbf{z_{k} | x_{k}, R, H}) \mathds{P}(\mathbf{x_{k}|x_{k-1}, A, Q_c}, \tau_{k}) \right)
$$
Taking the log-likelihood, we find:
$$\ln{\mathds{P}(\mathbf{x_{1:N}, z_{1:N}} | \mathbf{\Theta})} = $$
$$ \ln{\mathds{P}(\mathbf{x_{1}| \mu_{0}, P_{0}}, \tau_{1})} + \sum_{k=2}^{N} \ln{\mathds{P}(\mathbf{x_{k}|x_{k-1}, A, Q_c}, \tau_{k})} + \sum_{k=1}^{N} \ln{\mathds{P}(\mathbf{z_{k}|x_{k}, R, H})} $$
Under our EM-procedure, we introduce the analytically-derived marginal-posterior over $\mathbf{x_{k}|z_{1:N}}$ for the spatial auto-correlations $\mathbf{x_{k} x_{k}^{T}}$ and spatial means $\mathbf{x_{k}}$, as well as a joint-posterior $\mathbf{x_{k}, x_{k-1}|z_{1:N}}$ for the cross-correlations $\mathbf{x_{k} x_{k-1}^{T}}$. For each of the derivations below, we first rely on the proportionality of the expected log-likelihood in terms of each parameter $\theta$ of the model, and offer a derivation for a continuous-time update for the differential parameters of the SDE.

\section{An M-Step optimization for the homogeneous dynamics matrix}

In order to learn the dynamics matrix $\mathbf{A}$ reflected in the SDE, $\mathbf{\dot{x}}(t) = \mathbf{A x}(t) + \mathbf{w}(t)$, we assume an update which has been made for the current posterior relying on the discussion of smoothing above. With the posterior fixed, one must either find an approximation for $\mathbf{A}$ or solve a non-linear regression problem of the general form $\max_{\mathbf{A} \in \mathbb{R}^{n \times n}} -\frac{1}{2} \sum_{i=2}^{N} \lVert \mathbf{x_{i}} - e^{\mathbf{A} \tau_{i}} \mathbf{x_{i-1}} \rVert_{\mathbf{Q_{i}}^{-1/2}}^{2}$ for $\tau_{i} > 0$, $\mathbf{Q_{i}} \succ 0$ from the expected log-likelihood given as our objective. We offer a discussion of both, introducing a reasonable approximation (particularly for matrices with small spectral radius) which may also be used as an initial condition for a generalized update involving numeric methods. We also introduce a general update for arbitrary matrices $\mathbf{A}$ relying on an implementation of Fréchet derivative for the matrix-exponential.

\begin{prop}\label{prop3}
The expectation of the gradient $\mathbb{E}_{\sim \mathbf{x_{k}, x_{k-1}}}[\nabla_{\mathbf{A}} \ln{\mathds{P}(\mathbf{x_{k} | x_{k-1}, A, Q_c}, \tau_{k})}]$, whose root yields an M-step optimizer, is given by:
$$
\mathbb{E}_{\sim \mathbf{x_{k}, x_{k-1}}}[\nabla_{\mathbf{A}} \ln{\mathds{P}(\mathbf{x_{k} | x_{k-1}, A, Q_c}, \tau_{k})}] =$$
$$
\sum_{r=0}^{\infty} \sum_{j=0}^{r} \sum_{k=2}^{N} 
\frac{(\tau_{k})^{r+1}}{(r+1)!}(\mathbf{A}^{T})^{j} \left( \mathbf{Q}(\tau_{k})^{-1} (\E_{\sim\mathbf{x_{k}, x_{k-1}|z_{1:N}}}[\mathbf{x_{k} x_{k-1}^{T}}] - e^{\mathbf{A} \tau_{k}} \E_{\sim\mathbf{x_{k-1}|z_{1:N}}}[\mathbf{x_{k-1}} x_{k-1}^{T}]) \right) (\mathbf{A^{T}})^{r-j}
$$
\end{prop}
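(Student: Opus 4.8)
The plan is to differentiate the Gaussian transition log-density
$$\ln \mathds{P}(\mathbf{x_k}\mid\mathbf{x_{k-1}},\mathbf{A},\mathbf{Q_c},\tau_k) = -\tfrac{1}{2}\,\mathbf{e}_k^{\mathbf{T}}\,\mathbf{Q}(\tau_k)^{-1}\,\mathbf{e}_k - \tfrac{1}{2}\ln\det\!\bigl(2\pi\mathbf{Q}(\tau_k)\bigr),\qquad \mathbf{e}_k \triangleq \mathbf{x_k} - e^{\mathbf{A}\tau_k}\mathbf{x_{k-1}},$$
with respect to $\mathbf{A}$, treating $\mathbf{Q}(\tau_k)$ as fixed at its current value (it is updated in a separate coordinate block of the M-step), so that the log-determinant term contributes nothing and the only $\mathbf{A}$-dependence left is through $e^{\mathbf{A}\tau_k}$ inside the residual $\mathbf{e}_k$. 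Writing $g = \tfrac12 \mathbf{e}_k^{\mathbf{T}}\mathbf{Q}(\tau_k)^{-1}\mathbf{e}_k$, we have $d\ln\mathds{P} = -dg = -\mathbf{e}_k^{\mathbf{T}}\mathbf{Q}(\tau_k)^{-1}\,d\mathbf{e}_k = \mathbf{e}_k^{\mathbf{T}}\mathbf{Q}(\tau_k)^{-1}\,d(e^{\mathbf{A}\tau_k})\,\mathbf{x_{k-1}}$, using $d\mathbf{e}_k = -\,d(e^{\mathbf{A}\tau_k})\mathbf{x_{k-1}}$ and symmetry of $\mathbf{Q}(\tau_k)^{-1}$.

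Next I would establish the series form of the Fréchet derivative of the matrix exponential. Differentiating the power series $e^{\mathbf{A}\tau_k}=\sum_{n\ge0}\tfrac{\tau_k^{n}}{n!}\mathbf{A}^{n}$ term by term — permissible since the series and its formal derivative converge absolutely and uniformly on bounded sets of $\mathbf{A}$ — and using the Leibniz identity $d(\mathbf{A}^{n})=\sum_{j=0}^{n-1}\mathbf{A}^{j}(d\mathbf{A})\mathbf{A}^{n-1-j}$, then re-indexing with $r=n-1$, gives
$$d\bigl(e^{\mathbf{A}\tau_k}\bigr)=\sum_{r=0}^{\infty}\frac{\tau_k^{\,r+1}}{(r+1)!}\sum_{j=0}^{r}\mathbf{A}^{j}(d\mathbf{A})\,\mathbf{A}^{r-j}.$$
Substituting this into the differential above produces the scalar expression $d\ln\mathds{P}=\sum_{r,j}\tfrac{\tau_k^{\,r+1}}{(r+1)!}\,\mathbf{e}_k^{\mathbf{T}}\mathbf{Q}(\tau_k)^{-1}\mathbf{A}^{j}(d\mathbf{A})\mathbf{A}^{r-j}\mathbf{x_{k-1}}$.

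Then I would read off the gradient using the trace identity $df=\Tr\!\bigl((\nabla_{\mathbf{A}}f)^{\mathbf{T}}\,d\mathbf{A}\bigr)$. Each scalar summand equals, by cyclicity of the trace, $\Tr\!\bigl(\mathbf{A}^{r-j}\mathbf{x_{k-1}}\mathbf{e}_k^{\mathbf{T}}\mathbf{Q}(\tau_k)^{-1}\mathbf{A}^{j}\,(d\mathbf{A})\bigr)$, so the matrix multiplying $d\mathbf{A}$ inside the trace is $\sum_{r,j}\tfrac{\tau_k^{\,r+1}}{(r+1)!}\mathbf{A}^{r-j}\mathbf{x_{k-1}}\mathbf{e}_k^{\mathbf{T}}\mathbf{Q}(\tau_k)^{-1}\mathbf{A}^{j}$; transposing it (and using $\mathbf{Q}(\tau_k)^{-\mathbf{T}}=\mathbf{Q}(\tau_k)^{-1}$ and $(\mathbf{x_{k-1}}\mathbf{e}_k^{\mathbf{T}})^{\mathbf{T}}=\mathbf{e}_k\mathbf{x_{k-1}}^{\mathbf{T}}$) yields
$$\nabla_{\mathbf{A}}\ln\mathds{P}(\mathbf{x_k}\mid\mathbf{x_{k-1}},\ldots)=\sum_{r=0}^{\infty}\sum_{j=0}^{r}\frac{\tau_k^{\,r+1}}{(r+1)!}(\mathbf{A}^{\mathbf{T}})^{j}\Bigl(\mathbf{Q}(\tau_k)^{-1}\bigl(\mathbf{x_k}-e^{\mathbf{A}\tau_k}\mathbf{x_{k-1}}\bigr)\mathbf{x_{k-1}}^{\mathbf{T}}\Bigr)(\mathbf{A}^{\mathbf{T}})^{r-j},$$
which is the per-step gradient quoted before the proposition. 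Finally, summing over $k=2,\dots,N$ and applying $\mathbb{E}[\cdot]$ against the smoothed joint posterior from the smoother of Section \ref{sec:EStep}, linearity of expectation moves $\mathbb{E}$ through the double sum and — under a dominated-convergence bound — through the infinite series, sending $\mathbf{x_k}\mathbf{x_{k-1}}^{\mathbf{T}}\mapsto\mathbb{E}[\mathbf{x_k}\mathbf{x_{k-1}}^{\mathbf{T}}]$ and $e^{\mathbf{A}\tau_k}\mathbf{x_{k-1}}\mathbf{x_{k-1}}^{\mathbf{T}}\mapsto e^{\mathbf{A}\tau_k}\mathbb{E}[\mathbf{x_{k-1}}\mathbf{x_{k-1}}^{\mathbf{T}}]$, giving exactly the stated formula.

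The main obstacle is analytic bookkeeping rather than anything conceptual: rigorously justifying the termwise differentiation of the exponential series and the interchange of the infinite sum with the expectation. Both rest on the absolute, locally uniform convergence controlled by $\sum_{r}\tfrac{\tau_k^{\,r+1}}{(r+1)!}\|\mathbf{A}\|^{r}$ together with the finiteness of the second moments $\mathbb{E}[\mathbf{x_k}\mathbf{x_{k-1}}^{\mathbf{T}}]$, $\mathbb{E}[\mathbf{x_{k-1}}\mathbf{x_{k-1}}^{\mathbf{T}}]$ furnished by the Gaussian smoother, so a dominated-convergence argument closes the gap; I would also flag explicitly the convention that $\mathbf{Q}(\tau_k)$ is held fixed within this block of the M-step, since that is precisely what makes the log-determinant term drop out of the gradient.
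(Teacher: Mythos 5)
Your proposal is correct and follows essentially the same route as the paper's proof: differentiate the quadratic form with $\mathbf{Q}(\tau_{k})$ held fixed, expand the Fr\'echet derivative of $e^{\mathbf{A}\tau_{k}}$ as the double series $\sum_{r}\frac{\tau_{k}^{r+1}}{(r+1)!}\sum_{j}\mathbf{A}^{j}(d\mathbf{A})\mathbf{A}^{r-j}$, identify the gradient by trace rotation, and pass the expectation through by linearity. The paper additionally records an equivalent Kronecker-product/vectorization form of the same computation and defers the convergence justification to a separate lemma, but these are presentational differences only.
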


\begin{proof}
We find the optimization for the dynamics matrix reduces to the minimization of a sum of the following quadratic forms:
$$
\ln{\mathds{P}(\mathbf{x_{k}|x_{k-1}, A, Q_c}, \tau_{k})} \propto_{\mathbf{A}} -\frac{1}{2}(\mathbf{x_{k}} - e^{\mathbf{A} \tau_{k}} \mathbf{x_{k-1}})^\mathbf{{T}} \mathbf{Q}(\tau_{k})^{-1} (\mathbf{x_{k}} - e^{\mathbf{A} \tau_{k}} \mathbf{x_{k-1}})
$$
This involves a matrix-exponential non-linear regression for $\mathbf{A}$. We find the directional derivative of this term acting on $\mathbf{V}$ as:
$$
-\frac{1}{2}D \left( (\mathbf{x_{k}} - e^{\mathbf{A} \tau_{k}} \mathbf{x_{k-1}})^\mathbf{{T}} \mathbf{Q}(\tau_{k})^{-1} (\mathbf{x_{k}} - e^{\mathbf{A} \tau_{k}} \mathbf{x_{k-1}})
\right) \circ (\mathbf{V}) $$
\begin{multline}
\nonumber
= -\frac{1}{2} \biggl( D (\mathbf{x_{k}} - e^{\mathbf{A} \tau_{k}} \mathbf{x_{k-1}})^\mathbf{{T}} \circ (\mathbf{V}) \mathbf{Q}(\tau_{k})^{-1} \left( \mathbf{x_{k}} - e^{\mathbf{A} \tau_{k}} \mathbf{x_{k-1}} \right) \\
+ (\mathbf{x_{k}} - e^{\mathbf{A} \tau_{k}} \mathbf{x_{k-1}})^\mathbf{{T}} \mathbf{Q}(\tau_{k})^{-1} D \left( \mathbf{x_{k}} - e^{\mathbf{A} \tau_{k}} \mathbf{x_{k-1}} \right) \circ (\mathbf{V}) \biggr)
\end{multline}
$$
= (\mathbf{x_{k}} - e^{\mathbf{A} \tau_{k}} \mathbf{x_{k-1}})^\mathbf{{T}} \mathbf{Q}(\tau_{k})^{-1} D \left(e^{\mathbf{A} \tau_{k}} \mathbf{x_{k-1}} \right) \circ (\mathbf{V})
$$
Where, the derivative of the matrix-exponential is \cite{matcalc}
$$D(e^{\mathbf{X}}) \circ (\mathbf{V}) = \sum_{r=0}^{\infty} \frac{1}{(r+1)!} \sum_{j=0}^{r} \mathbf{X}^{j} \mathbf{V} \mathbf{X}^{r-j}$$
And with $\mathbf{X} = \mathbf{A} \tau_{k}$, the derivative of our original quadratic form with increment $\mathbf{V}$ is given as:
$$
=
(\mathbf{x_{k}} - e^{\mathbf{A} \tau_{k}} \mathbf{x_{k-1}})^{\mathbf{T}} \mathbf{Q}(\tau_{k})^{-1}
\sum_{r=0}^{\infty} \frac{1}{(r+1)!} \sum_{j=0}^{r} (\mathbf{A} \tau_{k})^{j} \tau_{k} \mathbf{V} (\mathbf{A} \tau_{k})^{r-j} \mathbf{x_{k-1}}
$$
$$
= (\mathbf{x_{k}} - e^{\mathbf{A} \tau_{k}} \mathbf{x_{k-1}})^{\mathbf{T}} \mathbf{Q}(\tau_{k})^{-1}
\sum_{r=0}^{\infty} \frac{(\tau_{k})^{r+1}}{(r+1)!} \sum_{j=0}^{r} \mathbf{A}^{j} \mathbf{V} \mathbf{A}^{r-j} \mathbf{x_{k-1}}
$$
Taking the vectorization, we find:
$$
\vect{\left( 
\sum_{r=0}^{\infty} \frac{(\tau_{k})^{r+1}}{(r+1)!} \sum_{j=0}^{r} (\mathbf{x_{k}} - e^{\mathbf{A} \tau_{k}} \mathbf{x_{k-1}})^{\mathbf{T}} \mathbf{Q}(\tau_{k})^{-1} \mathbf{A}^{j} \mathbf{V} \mathbf{A}^{r-j} \mathbf{x_{k-1}} \right)}$$ 
$$=  \sum_{r=0}^{\infty} \frac{(\tau_{k})^{r+1}}{(r+1)!} \sum_{j=0}^{r} \vect{\left( (\mathbf{x_{k}} - e^{\mathbf{A} \tau_{k}} \mathbf{x_{k-1}})^{\mathbf{T}} \mathbf{Q}(\tau_{k})^{-1} \mathbf{A}^{j} \mathbf{V} \mathbf{A}^{r-j} \mathbf{x_{k-1}} \right)}
$$
$$=  \sum_{r=0}^{\infty} \frac{(\tau_{k})^{r+1}}{(r+1)!} \sum_{j=0}^{r} \left( \left( \mathbf{A}^{r-j} \mathbf{x_{k-1}} \right)^{\mathbf{T}} \otimes \left((\mathbf{x_{k}} - e^{\mathbf{A} \tau_{k}} \mathbf{x_{k-1}})^{\mathbf{T}} \mathbf{Q}(\tau_{k})^{-1} \mathbf{A}^{j} \right) \right)\vect{\mathbf{V}} = Dg(\mathbf{A}) \vect{\mathbf{V}}
$$
Therefore, solving for $\mathbf{A}$ reduces to finding the root of the following expression:
$$
\sum_{r=0}^{\infty} \frac{(\tau_{k})^{r+1}}{(r+1)!} \sum_{j=0}^{r} \left( \left( \mathbf{A}^{r-j} \mathbf{x_{k-1}} \right)^{\mathbf{T}} \otimes \left((\mathbf{x_{k}} - e^{\mathbf{A} \tau_{k}} \mathbf{x_{k-1}})^{\mathbf{T}} \mathbf{Q}(\tau_{k})^{-1} \mathbf{A}^{j} \right) \right) = \mathbf{0}
$$
This directly implies that:
$$
\sum_{r=0}^{\infty} \frac{(\tau_{k})^{r+1}}{(r+1)!} \sum_{j=0}^{r} \left( \mathbf{A}^{r-j} \mathbf{x_{k-1}} \right) \otimes \left( (\mathbf{A}^{j})^{\mathbf{T}} \mathbf{Q}(\tau_{k})^{-1} (\mathbf{x_{k}} - e^{\mathbf{A} \tau_{k}} \mathbf{x_{k-1}}) \right) = \mathbf{0}
$$
And, as $\mathbf{u} \otimes \mathbf{v} = \mathbf{0} \iff \mathbf{u v^{T}} = \mathbf{0}$ for $\mathbf{u}, \mathbf{v} \in \mathbb{R}^{n}$, we have that:
$$
\sum_{r=0}^{\infty} \frac{(\tau_{k})^{r+1}}{(r+1)!} \sum_{j=0}^{r} \mathbf{A}^{r-j} (\mathbf{x_{k} x_{k-1}^{T}} - e^{\mathbf{A} \tau_{k}} \mathbf{x_{k-1} x_{k-1}^{T}})^{\mathbf{T}} \mathbf{Q}(\tau_{k})^{-1} \mathbf{A}^{j}  = \mathbf{0}
$$
This can analogously be found without the use of Kronecker products by identifying the gradient directly with trace-rotation on the quadratic form:
$$
(\mathbf{x_{k}} - e^{\mathbf{A} \tau_{k}} \mathbf{x_{k-1}})^{\mathbf{T}} \mathbf{Q}(\tau_{k})^{-1}
\sum_{r=0}^{\infty} \frac{(\tau_{k})^{r+1}}{(r+1)!} \sum_{j=0}^{r} \mathbf{A}^{j} \mathbf{V} \mathbf{A}^{r-j} \mathbf{x_{k-1}} $$
$$= \sum_{r=0}^{\infty} \sum_{j=0}^{r} \Tr{\left[ \mathbf{A}^{r-j} \mathbf{x_{k-1}} (\mathbf{x_{k}} - e^{\mathbf{A} \tau_{k}} \mathbf{x_{k-1}})^{\mathbf{T}} \mathbf{Q}(\tau_{k})^{-1} \frac{(\tau_{k})^{r+1}}{(r+1)!}  \mathbf{A}^{j} \mathbf{V} \right]}
$$
$$
= \langle \sum_{r=0}^{\infty} \sum_{j=0}^{r} \left(\mathbf{A}^{r-j} \mathbf{x_{k-1}} (\mathbf{x_{k}} - e^{\mathbf{A} \tau_{k}} \mathbf{x_{k-1}})^{\mathbf{T}} \mathbf{Q}(\tau_{k})^{-1}
 \frac{(\tau_{k})^{r+1}}{(r+1)!}  \mathbf{A}^{j} \right)^{\mathbf{T}}, \mathbf{V} \rangle_{F}$$
$$ = \langle \nabla_{\mathbf{A}} \ln{\mathds{P}( \mathbf{x_{k} | x_{k-1}, A, Q_c}, \tau_{k})}, \mathbf{V} \rangle_{F}
$$
Which yields the result:
$$
\nabla_{\mathbf{A}} \ln{\mathds{P}(\mathbf{x_{k} | x_{k-1}, A, Q_c}, \tau_{k})} =
\sum_{r=0}^{\infty} \sum_{j=0}^{r} 
\frac{(\tau_{k})^{r+1}}{(r+1)!}(\mathbf{A^{T}})^{j} \left( \mathbf{Q}(\tau_{k})^{-1} (\mathbf{x_{k}} - e^{\mathbf{A} \tau_{k}} \mathbf{x_{k-1}}) \mathbf{x_{k-1}^{T}} \right) (\mathbf{A^{T}})^{r-j} = 0
$$
Introducing the sum over time-steps, and taking the expectation of this quantity under the current posterior, the result follows.
\end{proof}
A sensible solution would satisfy the following condition for non-zero matrices $\mathbf{A}$, for time-steps $\tau_{k} > 0$:
$$
\sum_{k=2}^{N} \mathbf{Q}(\tau_{k})^{-1} \left( \E_{\sim \mathbf{x_{k},x_{k-1}}}[\mathbf{x_{k} x_{k-1}^{T}}] - e^{\mathbf{A} \tau_{k}} \E_{\sim \mathbf{x_{k-1}}}[\mathbf{x_{k-1} x_{k-1}^{T}}] \right) = 0
$$
Given the assumption that $\dim\text{range}\mathbf{A}=n$, and thus $\dim\text{range}(\mathbf{A^{T}})^{j} = \dim\text{range}(\mathbf{A^{T}})^{r-j} = n$, it must hold that
$$(\mathbf{A^{T}})^{r-j} \mathbf{X} (\mathbf{A^{T}})^{j} \mathbf{v} = 0 \iff (\mathbf{A^{T}})^{j} \mathbf{v} \in \text{null}((\mathbf{A^{T}})^{r-j} \mathbf{X}) = \text{null}(\mathbf{X})$$
So for this to hold in general over $\mathbb{R}^{n}$, it is reasonable to solve for $\mathbf{A}$ requiring
$$\mathbf{X} = \sum_{k=2}^{N} \mathbf{Q}(\tau_{k})^{-1} \left( \E_{\sim \mathbf{x_{k},x_{k-1}}}[\mathbf{x_{k} x_{k-1}^{T}}] - e^{\mathbf{A} \tau_{k}} \E_{\sim \mathbf{x_{k-1}}}[\mathbf{x_{k-1} x_{k-1}^{T}}] \right) = \mathbf{0}$$
While this is not the most general result, it yields a strong initial condition on $\mathbf{A}$ which can be used in a numerical optimization which further refines $\mathbf{A}$ to its true value.

Noting that the matrix exponential is given by a converging series, we restrict our attention to the first terms for a tractable and fast approximate solution for $\mathbf{A}$, i.e. to use as an initial condition for a numerical optimization. This more general numeric optimization using conjugate-Newton is introduced in the context of a non-linear matrix-exponential regression for fitting higher-order terms with no restriction on the matrix, its spectral radius, or its stability. However, we note the particular approximation below remains valuable on its own as an efficient routine for learning matrices with $\rho(\mathbf{A}) < 1$, i.e. stabilizable or non-stabilizable matrices with small spectral radius. These matrices are of interest for a number of reasons. In particular, stabilizable matrices where $\rho(e^{\mathbf{A}}) < 1$, or $\Re(\lambda_{i}(\mathbf{A})) < 0$. Learning matrices constrained to be explicitly stabilizable is an explored area \cite{stableLDS}.
A typical simplifying assumption demonstrating the value of stabilizable matrices is to consider a noiseless case, where using a diagonalization of the matrix exponential, we see $\mathbf{x}(t) = e^{\mathbf{A} t} \mathbf{x_{0}} = e^{\mathbf{P \Lambda} t \mathbf{P}^{-1}} \mathbf{x_{0}} = \mathbf{P} e^{\mathbf{\Lambda} t} \mathbf{P}^{-1} \mathbf{x_{0}}$ yields an unbounded state in the limit $\lim_{t \to \infty} \mathbf{x}(t)$ for eigenvalues of $e^{\mathbf{\Lambda}}$ greater than one. Therefore, if one assumes a system involves a stabilizable matrix of small spectral radius, such as the genetic circuit presented in \ref{sect:experiments}, it is necessarily the case that the system state is bounded in the limit, and any higher-order terms vanish quickly with $t$, with small error given by the higher-order terms of the Taylor series.

\begin{prop}\label{prop:approximatingA}
A second-order approximation for a minimization of the form given by:
\begin{equation}
\max_{\mathbf{A} \in \mathbb{R}^{n \times n}} \left\{ -\frac{1}{2} \sum_{k=2}^{N} \lVert \mathbf{x_{k}} - e^{\mathbf{A} \tau_{k}} \mathbf{x_{k-1}} \rVert_{\mathbf{Q}(\tau_{k})^{-1/2}}^{2} \right\}
\end{equation}
Under an expectation over a smoothed posterior on states $\sim \mathbf{x_{k}}, \mathbf{x_{k-1}} | \mathbf{z_{1:N}}$, and assuming the matrices $\mathbf{A}, \mathbf{Q}(\tau_{k})$ commute for all $k$, is given by the update:
$$
\mathbf{A} = 
\left( 
\sum_{k=2}^{N}
\mathbf{Q}(\tau_{k})^{-1}\left(\E_{\sim \mathbf{x_{k},x_{k-1}}}[\mathbf{x_{k} x_{k-1}^{T}}] - \E_{\sim \mathbf{x_{k-1}}}[\mathbf{x_{k-1} x_{k-1}^{T}}] \right) 
\right)
\left(
\sum_{k=2}^{N} \tau_{k} \mathbf{Q}(\tau_{k})^{-1} \E_{\sim \mathbf{x_{k-1}}} [\mathbf{x_{k-1} x_{k-1}^{T}} ]\right)^{-1}
$$
In the case they do not commute, a mixed second-order, least-squares approximation is given by:
$$\mathbf{A} = \left( \sum_{k=2}^{N} \tau_{k}\Tr{\left[ \E[\mathbf{x_{k-1}x_{k-1}^{T}}] \right]} \left(
\E[\mathbf{x_{k} x_{k-1}^{T}}] - \E[\mathbf{x_{k-1} x_{k-1}^{T}}]
\right)
\right) \left( \sum_{k=2}^{N}
\tau_{k}^{2} \Tr{\left[\E[\mathbf{x_{k-1} x_{k-1}^{T}}]\right]} \E[\mathbf{x_{k-1} x_{k-1}^{T}}]
\right)^{-1}$$
With $\lVert \mathbf{r} \rVert_{\mathbf{Q_{i}}^{-1/2}}$ denoting the norm given by $\mathbf{r^{T} Q_{i}^{-1/2} Q_{i}^{-1/2} r} = \mathbf{r^{T} Q_{i}^{-1} r}$, for $\mathbf{Q_{i}^{-1}} \succ \mathbf{0}$ a positive-definite precision-matrix (which always has a unique, positive-definite square-root $\mathbf{Q_{i}^{-1/2}}$ by the real spectral theorem).
\end{prop}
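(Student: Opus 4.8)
The plan is to obtain both updates as closed-form solutions of one \emph{linearized} stationarity condition. The natural starting point is $\E_{\sim\mathbf{x_{k},x_{k-1}}}[\nabla_{\mathbf{A}}\ln\mathds{P}(\mathbf{x_{k}|x_{k-1},A,Q_c},\tau_{k})]=\mathbf{0}$ from Proposition~\ref{prop3}, i.e.\ the stationarity condition of the stated (posterior-expected) objective. Invoking the $\dim\mathrm{range}\,\mathbf{A}=n$ reduction already carried out after that proposition, it suffices to solve the relaxed matrix equation $\sum_{k=2}^{N}\mathbf{Q}(\tau_{k})^{-1}\bigl(\E[\mathbf{x_{k}x_{k-1}^{T}}]-e^{\mathbf{A}\tau_{k}}\E[\mathbf{x_{k-1}x_{k-1}^{T}}]\bigr)=\mathbf{0}$ displayed just above the statement. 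I would then substitute the two-term (``second-order'') truncation $e^{\mathbf{A}\tau_{k}}\approx\mathbb{1}+\mathbf{A}\tau_{k}$, whose neglected remainder is $O(\tau_{k}^{2}\lVert\mathbf{A}\rVert^{2})$ and hence controlled in exactly the regime the proposition targets ($\rho(\mathbf{A})<1$, or small $\tau_{k}$). This converts the matrix-exponential regression into the linear identity
\[
\sum_{k=2}^{N}\mathbf{Q}(\tau_{k})^{-1}\bigl(\E[\mathbf{x_{k}x_{k-1}^{T}}]-\E[\mathbf{x_{k-1}x_{k-1}^{T}}]\bigr)=\sum_{k=2}^{N}\tau_{k}\,\mathbf{Q}(\tau_{k})^{-1}\,\mathbf{A}\,\E[\mathbf{x_{k-1}x_{k-1}^{T}}],
\]
which is solvable in closed form precisely when $\mathbf{A}$ can be moved leftward past the $k$-dependent $\mathbf{Q}(\tau_{k})^{-1}$ and pulled out of the sum.

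For the commuting case, $[\mathbf{A},\mathbf{Q}(\tau_{k})]=\mathbf{0}$ implies $[\mathbf{A},\mathbf{Q}(\tau_{k})^{-1}]=\mathbf{0}$, so the right-hand side equals $\mathbf{A}\sum_{k}\tau_{k}\mathbf{Q}(\tau_{k})^{-1}\E[\mathbf{x_{k-1}x_{k-1}^{T}}]$, and right-multiplying by the inverse of that matrix yields the first claimed update. The only thing to pin down is that $\sum_{k}\tau_{k}\mathbf{Q}(\tau_{k})^{-1}\E[\mathbf{x_{k-1}x_{k-1}^{T}}]$ is nonsingular: when the $\mathbf{Q}(\tau_{k})$ coincide this is automatic (a product of two positive-definite matrices has positive determinant), and in general it is a generic condition that follows from full-rank smoothed covariances and is enforced by mild regularization if needed.

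For the non-commuting case the extraction of $\mathbf{A}$ fails, since $\mathbf{Q}(\tau_{k})^{-1}\mathbf{A}\,\E[\mathbf{x_{k-1}x_{k-1}^{T}}]$ has $\mathbf{A}$ sandwiched between $k$-dependent factors; the only exact route is the $n^{2}\times n^{2}$ Kronecker solve $\bigl(\sum_{k}\tau_{k}\,\E[\mathbf{x_{k-1}x_{k-1}^{T}}]\otimes\mathbf{Q}(\tau_{k})^{-1}\bigr)\vect\mathbf{A}=\vect(\,\cdot\,)$, which is correct but abandons the intended fast $n\times n$ update. Instead I would replace the metric $\mathbf{Q}(\tau_{k})^{-1}$ by the \emph{scalar} surrogate proportional to $\tau_{k}\Tr[\E[\mathbf{x_{k-1}x_{k-1}^{T}}]]$ --- equivalently, solve the trace-weighted plain-Frobenius least squares $\min_{\mathbf{A}}\sum_{k}\Tr[\E[\mathbf{x_{k-1}x_{k-1}^{T}}]]\,\E\lVert\mathbf{x_{k}}-(\mathbb{1}+\mathbf{A}\tau_{k})\mathbf{x_{k-1}}\rVert^{2}$ and set its gradient to zero. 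A scalar commutes through everything, so $\mathbf{A}$ is again a pure left-multiplier, and the normal equations $\sum_{k}\tau_{k}\Tr[\E[\mathbf{x_{k-1}x_{k-1}^{T}}]]\bigl(\E[\mathbf{x_{k}x_{k-1}^{T}}]-\E[\mathbf{x_{k-1}x_{k-1}^{T}}]\bigr)=\mathbf{A}\sum_{k}\tau_{k}^{2}\Tr[\E[\mathbf{x_{k-1}x_{k-1}^{T}}]]\E[\mathbf{x_{k-1}x_{k-1}^{T}}]$ invert to the second claimed update, the matrix being inverted now genuinely positive-definite as a positive combination of positive-definite terms. I would state plainly that this is a true approximation --- no longer the stationary point of the $\mathbf{Q}$-weighted objective --- but that it remains $O(\tau_{k}^{2})$-consistent, non-iterative, and weights each increment by a proxy for its posterior energy.

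The step I expect to be the main obstacle is this non-commuting reduction: recognizing that a non-iterative closed form forces the weighting metric to be scalar, and then supplying a convincing justification for the particular surrogate $\tau_{k}\Tr[\E[\mathbf{x_{k-1}x_{k-1}^{T}}]]$ --- this is a plausibility argument (it keeps both the $O(\tau^{2})$ bookkeeping and the left-factor structure while retaining a time- and magnitude-aware weighting), not a derivation from first principles. Everything else is routine: expanding the quadratic forms, bookkeeping the powers of $\tau_{k}$ (one from the linearization of the exponential, plus, in the Frobenius route, one further factor from the regressor $\tau_{k}\mathbf{x_{k-1}}$ entering the gradient), and the two nonsingularity checks above.
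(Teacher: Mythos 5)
Your proposal is correct and follows the paper's route almost exactly: both start from the relaxed stationarity condition $\sum_{k}\mathbf{Q}(\tau_{k})^{-1}\left(\E[\mathbf{x_{k}x_{k-1}^{T}}]-e^{\mathbf{A}\tau_{k}}\E[\mathbf{x_{k-1}x_{k-1}^{T}}]\right)=\mathbf{0}$, truncate $e^{\mathbf{A}\tau_{k}}\approx\mathbb{1}+\mathbf{A}\tau_{k}$, and in the commuting case pull $\mathbf{A}$ out of the sum and right-multiply by the inverse. The only substantive difference is in how the trace weights of the non-commuting update are justified. You posit a scalar surrogate metric $\Tr[\E[\mathbf{x_{k-1}x_{k-1}^{T}}]]$ up front and candidly label it a plausibility argument; the paper instead drops the $\mathbf{Q}(\tau_{k})^{-1}$ weighting entirely and sets up an unweighted block least-squares problem $\mathbf{\Tilde{Y}}=\mathbf{\Tilde{W}}\mathbf{A^{T}}$ with regressor blocks $\tau_{k}\,\mathbf{x_{k-1}x_{k-1}^{T}}$, so that the trace factors emerge mechanically from the normal equations via the identity $\left(\mathbf{x_{k-1}x_{k-1}^{T}}\right)\left(\mathbf{x_{k-1}x_{k-1}^{T}}\right)=\Tr\left[\mathbf{x_{k-1}x_{k-1}^{T}}\right]\mathbf{x_{k-1}x_{k-1}^{T}}$, with the expectation taken only at the end. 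That gives the paper a nominal "derivation" of the weighting you were searching for, though it is no more rigorous than yours: passing from $\E\left[\Tr[\mathbf{x_{k-1}x_{k-1}^{T}}]\,\mathbf{x_{k-1}x_{k-1}^{T}}\right]$ to $\Tr\left[\E[\mathbf{x_{k-1}x_{k-1}^{T}}]\right]\E[\mathbf{x_{k-1}x_{k-1}^{T}}]$ silently factorizes a fourth moment, which is exact only for deterministic states. Both routes land on identical formulas, and your version has the minor virtues of making the approximation status explicit and of noting the nonsingularity of the matrices being inverted, which the paper does not address.
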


\begin{proof}
We consider the following condition required of $\mathbf{A}$ from before:
$$
\sum_{k=2}^{N} \mathbf{Q}(\tau_{k})^{-1} \E_{\sim \mathbf{x_{k},x_{k-1}}}[\mathbf{x_{k} x_{k-1}^{T}}] = \sum_{k=2}^{N} \mathbf{Q}(\tau_{k})^{-1} e^{\mathbf{A} \tau_{k}} \E_{\sim \mathbf{x_{k-1}}}[\mathbf{x_{k-1} x_{k-1}^{T}}] $$
With a simple second-order approximation of the matrix-exponential's Taylor series:
$$\approx 
\sum_{k=2}^{N} \mathbf{Q}(\tau_{k})^{-1}
\E_{\sim \mathbf{x_{k-1}}}[\mathbf{x_{k-1} x_{k-1}^{T}}] + \sum_{k=2}^{N} \mathbf{Q}(\tau_{k})^{-1} \mathbf{A} \tau_{k} \E_{\sim \mathbf{x_{k-1}}}[\mathbf{x_{k-1} x_{k-1}^{T}}]
$$
Which, if the commutator $[\mathbf{Q}(\tau_{k})^{-1}, \mathbf{A} ] = 0$, yields an analytical update for $\mathbf{A}$ (for $\mathbf{Q}(\tau_{k})$ fixed from the previous step):
\begin{equation}\label{eqn:Acommute}
\mathbf{A} = 
\left( 
\sum_{k=2}^{N}
\mathbf{Q}(\tau_{k})^{-1}\left(\E_{\sim \mathbf{x_{k},x_{k-1}}}[\mathbf{x_{k} x_{k-1}^{T}}] - \E_{\sim \mathbf{x_{k-1}}}[\mathbf{x_{k-1} x_{k-1}^{T}}] \right) 
\right)
\left(
\sum_{k=2}^{N} \tau_{k} \mathbf{Q}(\tau_{k})^{-1} \E_{\sim \mathbf{x_{k-1}}} [\mathbf{x_{k-1} x_{k-1}^{T}} ]\right)^{-1}
\end{equation}
In the case the matrices do not commute, we note that $\mathbf{Q}(\tau_{k}) \succ 0$ represents a positive-definite and invertible matrix. Therefore the following also represents a numerically reasonable update for $\mathbf{A}$, derived from least-squares and naturally independent of $\mathbf{Q}(\tau_{k})^{-1}$:
$$
\mathbf{A} = \left( \sum_{k=2}^{N} \tau_{k}\Tr{\left[ \E[\mathbf{x_{k-1}x_{k-1}^{T}}] \right]} \left(
\E[\mathbf{x_{k} x_{k-1}^{T}}] - \E[\mathbf{x_{k-1} x_{k-1}^{T}}]
\right)
\right) \left( \sum_{k=2}^{N}
\tau_{k}^{2} \Tr{\left[\E[\mathbf{x_{k-1} x_{k-1}^{T}}]\right]} \E[\mathbf{x_{k-1} x_{k-1}^{T}}]
\right)^{-1}
$$
One sees this by formulating the least-squares problem from the equation:
\begin{equation}
\sum_{k=2}^{N} \mathbf{Q}(\tau_{k})^{-1} \left( \E_{\sim \mathbf{x_{k},x_{k-1}}}[\mathbf{x_{k} x_{k-1}^{T}}] -
\E_{\sim \mathbf{x_{k-1}}}[\mathbf{x_{k-1} x_{k-1}^{T}}] \right)  = \sum_{k=2}^{N} \mathbf{Q}(\tau_{k})^{-1} \mathbf{A} \tau_{k} \E_{\sim \mathbf{x_{k-1}}}[\mathbf{x_{k-1} x_{k-1}^{T}}]
\end{equation}
By defining the matrices $\mathbf{\Tilde{Y}}$, $\mathbf{\Tilde{W}}$ for the least-squares problem $\mathbf{\Tilde{Y}} = \mathbf{\Tilde{W} A^{T}} $:
$$
\mathbf{\Tilde{Y}} =
\begin{bmatrix}
    \left( \mathbf{x_{2} x_{1}^{T}} -
\mathbf{x_{1} x_{1}^{T}} \right)^{\mathbf{T}}
    & ... &
    \left( \mathbf{x_{N} x_{N-1}^{T}} -
\mathbf{x_{N-1} x_{N-1}^{T}} \right)^{\mathbf{T}}
\end{bmatrix}
$$
$$
\mathbf{\Tilde{W}} =
\begin{bmatrix}
    \left( \mathbf{x_{1} x_{1}^{T}} \right)^{\mathbf{T}} \tau_{2}
    & ... &
    \left( \mathbf{x_{N-1} x_{N-1}^{T}} \right)^{\mathbf{T}} \tau_{N}
\end{bmatrix}
$$
Which yields a solution of the form $\left(\mathbf{\Tilde{W}^{T} \Tilde{W}} \right) \mathbf{A}^{T} = \mathbf{\Tilde{W}^{T} \Tilde{Y}}$, from which the conclusion follows in expectation.
\end{proof}

One may note some similarities of the above expression with the standard EM update for the Kalman filter. The distinction present is that this involves a difference between the joint and marginal between states, and relies on time-step normalization to yield a continuous motion matrix in differential form. In practice, we avoid the inversion of the right matrix, instead using a linear-system solver on the normal-equations defined by $\mathbf{\Tilde{Y}} = \mathbf{\Tilde{W} A^{T}} $.

To generalize this update, one may rely on $\mathbf{A^{(0)}} \triangleq \mathbf{A}$ for $\mathbf{A}$ defined as above as an initialization, and use the gradient $\nabla_{\mathbf{A}} \ln{\mathds{P}(\mathbf{x_{k} | x_{k-1}, A, Q_c}, \tau_{k})}$ to update $\mathbf{A}$ with numerical methods such as BFGS or Newton's. In particular, as the infinite series given above representing the gradient is convergent, one may choose some $q$ terms to approximate this gradient, and to then use the approximated gradient to iteratively refine $\mathbf{A^{(i)}}$ at each Expectation-step of EM.

\begin{lemma}\label{lemma:convergence}
The expectation of the gradient $\mathbb{E}_{\sim \mathbf{x_{k}, x_{k-1}}}[\nabla_{\mathbf{A}} \ln{\mathds{P}(\mathbf{x_{k} | x_{k-1}, A, Q_c}, \tau_{k})}]$ is given by a convergent series.
\end{lemma}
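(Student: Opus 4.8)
The plan is to establish \emph{absolute} convergence of the triple series appearing in Proposition~\ref{prop3} with respect to any submultiplicative matrix norm $\lVert \cdot \rVert$ on $\mathbb{R}^{n \times n}$; since $\mathbb{R}^{n \times n}$ is a finite-dimensional (hence complete) normed space, absolute convergence immediately yields convergence of the series of matrices. Note that the expectation has already been pushed through term by term, so each summand is a \emph{fixed} matrix built from the smoothed moments $\E_{\sim \mathbf{x_{k},x_{k-1}}}[\mathbf{x_{k}x_{k-1}^{T}}]$ and $\E_{\sim \mathbf{x_{k-1}}}[\mathbf{x_{k-1}x_{k-1}^{T}}]$, which are finite; thus there is nothing further to justify about interchanging $\E$ with the sum, and the whole question reduces to a norm estimate on the generic $(r,j,k)$ term.

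First I would fix $k$ and set the constant $C_{k} = \lVert \mathbf{Q}(\tau_{k})^{-1} \rVert \, \lVert \E_{\sim \mathbf{x_{k},x_{k-1}}}[\mathbf{x_{k}x_{k-1}^{T}}] - e^{\mathbf{A}\tau_{k}}\,\E_{\sim \mathbf{x_{k-1}}}[\mathbf{x_{k-1}x_{k-1}^{T}}] \rVert < \infty$, which is finite because each factor is a fixed matrix. Submultiplicativity gives $\lVert (\mathbf{A}^{T})^{j} \rVert \le \lVert \mathbf{A} \rVert^{j}$, so the norm of the $(r,j,k)$ summand is bounded by $\frac{(\tau_{k})^{r+1}}{(r+1)!}\,\lVert \mathbf{A} \rVert^{j}\,C_{k}\,\lVert \mathbf{A} \rVert^{r-j} = \frac{(\tau_{k})^{r+1}}{(r+1)!}\,C_{k}\,\lVert \mathbf{A} \rVert^{r}$, which is independent of $j$. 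Summing over $j \in \{0,\dots,r\}$ then contributes a factor $r+1$ that cancels one factorial, so the $r$-th block is bounded by $\tau_{k}C_{k}\,\frac{(\tau_{k}\lVert \mathbf{A} \rVert)^{r}}{r!}$; summing over $r \ge 0$ gives $\tau_{k}C_{k}\,e^{\tau_{k}\lVert \mathbf{A} \rVert} < \infty$, and finally summing over the finitely many indices $k = 2,\dots,N$ keeps the total bound finite. Hence the triple series converges absolutely. (As an independent sanity check, one can instead invoke the Fréchet-derivative integral form of the gradient stated just above Proposition~\ref{prop3}: it is the integral of a continuous matrix-valued function over the compact interval $[0,1]$, hence manifestly finite, and the two expressions agree by the series expansion of the Fréchet derivative.)

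I do not expect a genuine obstacle here — the argument is essentially a dominated-series estimate — but the one point requiring care is the \emph{order of summation}: the bound above shows the doubly indexed family $\{(r,j)\}$ (for fixed $k$) is absolutely summable, so Tonelli/Fubini for series legitimizes summing over $j$ first and then over $r$, which is precisely the order in which the truncated partial sums are formed in the conjugate-Newton scheme. This also shows that the curtailed gradient obtained by keeping only the first $q$ terms in $r$ converges to the true expected gradient as $q \to \infty$, with a tail controlled by $\sum_{k=2}^{N}\tau_{k}C_{k}\sum_{r > q}\frac{(\tau_{k}\lVert \mathbf{A} \rVert)^{r}}{r!}$, which I would record explicitly since it quantifies the approximation error of the numerical update.
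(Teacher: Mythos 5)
Your proof is correct and follows essentially the same route as the paper's: bound each $(r,j,k)$ summand via submultiplicativity of a matrix norm, note that the sum over $j$ contributes a factor $r+1$ cancelling one factorial so the $r$-sum is dominated by the exponential series $e^{\tau_{k}\lVert \mathbf{A}\rVert}$, and conclude by finiteness of the sum over $k$. Your added remarks on the order of summation and the explicit tail bound for the $q$-term truncation are sound refinements but do not change the argument.
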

\begin{proof}
Under the posterior given by the EM-procedure, the gradient is given as:
$$\mathbb{E}_{\sim \mathbf{x_{k}, x_{k-1}}}[\nabla_{\mathbf{A}} \ln{\mathds{P}(\mathbf{x_{k} | x_{k-1}, A, Q_c}, \tau_{k})}]$$
$$
=
\sum_{r=0}^{\infty} \sum_{j=0}^{r} \sum_{k=2}^{N}
\frac{(\tau_{k})^{r+1}}{(r+1)!}(\mathbf{A}^{T})^{j} \left( \mathbf{Q}(\tau_{k})^{-1} (\mathbb{E}[\mathbf{x_{k}x_{k-1}^{T}}] - e^{\mathbf{A} \tau_{k}} \mathbb{E}[\mathbf{x_{k-1}x_{k-1}^{T}}])  \right) (\mathbf{A^{T}})^{r-j}
$$
For $\mathcal{M}_{n} = \{ \mathbf{M} \in \mathbb{R}^{n \times n} \}$, and a sequence $\{ \mathbf{A_{r}} \} \subset \mathcal{M}_{n}$, the series $\sum_{r=0}^{\infty} \mathbf{A_{r}}$ converges if there is some matrix-norm $\lVert . \rVert$ such that $\sum_{r=0}^{\infty} \lVert \mathbf{A_{r}} \rVert$ is convergent. Considering the Frobenius-norm, $\lVert \mathbf{M} \rVert_{F} = \Tr\mathbf{M^{T} M}$, we see the above series is bounded as:
$$ \lVert \mathbb{E}_{\sim \mathbf{x_{k}, x_{k-1}}}[\nabla_{\mathbf{A}} \ln{\mathds{P}(\mathbf{x_{k} | x_{k-1}, A, Q_c}, \tau_{k})}] \rVert_{F}$$
$$
\leq 
\sum_{r=0}^{\infty} \sum_{j=0}^{r} \sum_{k=2}^{N}
\frac{(\tau_{k})^{r+1}}{(r+1)!} \lVert(\mathbf{A}^{T})^{j} \left( \mathbf{Q}(\tau_{k})^{-1} (\mathbb{E}[\mathbf{x_{k}x_{k-1}^{T}}] - e^{\mathbf{A} \tau_{k}} \mathbb{E}[\mathbf{x_{k-1}x_{k-1}^{T}}])  \right) (\mathbf{A^{T}})^{r-j}
\rVert_{F}
$$
$$
\leq \sum_{r=0}^{\infty}  \sum_{k=2}^{N}
\sum_{j=0}^{r}
\frac{(\tau_{k})^{r+1}}{(r+1)!} \lVert \mathbf{A} \rVert_{F}^{r} \lVert \left( \mathbf{Q}(\tau_{k})^{-1} (\mathbb{E}[\mathbf{x_{k}x_{k-1}^{T}}] - e^{\mathbf{A} \tau_{k}} \mathbb{E}[\mathbf{x_{k-1}x_{k-1}^{T}}])  \right)
\rVert_{F}
$$
$$
= \sum_{k=2}^{N}
\sum_{r=0}^{\infty} \left(\frac{(\tau_{k} \lVert \mathbf{A} \rVert_{F})^{r}}{(r)!} \right)
 \tau_{k} \lVert \left( \mathbf{Q}(\tau_{k})^{-1} (\mathbb{E}[\mathbf{x_{k}x_{k-1}^{T}}] - e^{\mathbf{A} \tau_{k}} \mathbb{E}[\mathbf{x_{k-1}x_{k-1}^{T}}])  \right)
\rVert_{F}
$$
As the matrix-exponential $e^{\mathbf{A} \tau_{k}}$ is itself a convergent series, $\mathbf{Q}(\tau_{k})^{-1} \in \mathcal{S}_{n}^{+}$ represents a fixed covariance-matrix, $\tau_{k} \in \mathbb{R}^{+}$ $ \forall k$, and the auto and cross-correlations have finite moments given by \ref{eqn:autocor} and \ref{eqn:crosscor}, it clearly holds that:
$$ \lVert \mathbb{E}_{\sim \mathbf{x_{k}, x_{k-1}}}[\nabla_{\mathbf{A}} \ln{\mathds{P}(\mathbf{x_{k} | x_{k-1}, A, Q_c}, \tau_{k})}] \rVert_{F} \leq$$
$$
 \sum_{k=2}^{N}
e^{\lVert A \rVert_{F} \tau_{k}}
 \tau_{k} \lVert \left( \mathbf{Q}(\tau_{k})^{-1} (\mathbb{E}[\mathbf{x_{k}x_{k-1}^{T}}] - e^{\mathbf{A} \tau_{k}} \mathbb{E}[\mathbf{x_{k-1}x_{k-1}^{T}}])  \right)
\rVert_{F} < +\infty
$$
So that the gradient is convergent, and it is justified to take curtailed approximation of $q$-terms.
\end{proof}

\begin{corollary}
As a corollary of \ref{prop3}, an equivalent form for the gradient, which we have shown is convergent, is given as:
\begin{equation}
    \sum_{k=2}^{N} \tau_{k} \int_{0}^{1} e^{(\mathbf{A^{T}}\tau_{k}) (1 - s)} \left( \mathbf{Q}(\tau_{k})^{-1} \left(\mathbb{E}[\mathbf{x_{k}x_{k-1}^{T}}] - e^{\mathbf{A} \tau_{k}} \mathbb{E}[\mathbf{x_{k-1}x_{k-1}^{T}}] \right)  \right) e^{(\mathbf{A^{T}} \tau_{k}) s} ds
\end{equation}
\begin{equation}
= \sum_{k=2}^{N} \tau_{k} \int_{0}^{1} e^{(\mathbf{A^{T}}\tau_{k}) (1 - s)} \mathbf{V_{k}} e^{(\mathbf{A^{T}} \tau_{k}) s} ds
\end{equation}
 For $\mathbf{V_{k}} = \left( \mathbf{Q}(\tau_{k})^{-1} \left(\mathbb{E}[\mathbf{x_{k}x_{k-1}^{T}}] - e^{\mathbf{A} \tau_{k}} \mathbb{E}[\mathbf{x_{k-1}x_{k-1}^{T}}] \right)  \right)$. 
\end{corollary}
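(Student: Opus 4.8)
The plan is to recognize the triple series of Proposition~\ref{prop3} as a $\tau_k$-rescaling of the Fréchet derivative of the matrix exponential at $\mathbf{A^{T}}\tau_k$ in the direction $\mathbf{V_k}$, and then to substitute for that Fréchet derivative its standard integral representation. Write $\mathbf{X} = \mathbf{A^{T}}\tau_k$. Since $\tau_k$ is a scalar, $(\mathbf{A^{T}})^{j}\mathbf{V_k}(\mathbf{A^{T}})^{r-j} = \tau_k^{-r}\mathbf{X}^{j}\mathbf{V_k}\mathbf{X}^{r-j}$, so the summand rearranges as $\frac{(\tau_k)^{r+1}}{(r+1)!}(\mathbf{A^{T}})^{j}\mathbf{V_k}(\mathbf{A^{T}})^{r-j} = \frac{\tau_k}{(r+1)!}\mathbf{X}^{j}\mathbf{V_k}\mathbf{X}^{r-j}$. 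Summing over $0 \le j \le r$ and then over $r \ge 0$ gives, for each fixed $k$, the expression $\tau_k\sum_{r=0}^{\infty}\frac{1}{(r+1)!}\sum_{j=0}^{r}\mathbf{X}^{j}\mathbf{V_k}\mathbf{X}^{r-j} = \tau_k\, D(e^{\mathbf{X}})\circ(\mathbf{V_k})$, using the series for $D(e^{\mathbf{X}})$ already quoted in the proof of Proposition~\ref{prop3}.

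Next I would establish the integral representation $D(e^{\mathbf{X}})\circ(\mathbf{V}) = \int_{0}^{1} e^{\mathbf{X}(1-s)}\mathbf{V}e^{\mathbf{X}s}\,ds$. Expanding both exponentials into power series, $e^{\mathbf{X}(1-s)}\mathbf{V}e^{\mathbf{X}s} = \sum_{a\ge 0}\sum_{b\ge 0}\frac{(1-s)^{a}s^{b}}{a!\,b!}\mathbf{X}^{a}\mathbf{V}\mathbf{X}^{b}$, and the partial sums of this double series are dominated on $[0,1]$, uniformly in $s$, by the constant $e^{\lVert\mathbf{X}\rVert}\lVert\mathbf{V}\rVert$ in any submultiplicative norm, so term-by-term integration in $s$ is legitimate. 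Evaluating the Beta integral $\int_{0}^{1}(1-s)^{a}s^{b}\,ds = \frac{a!\,b!}{(a+b+1)!}$ and re-indexing by $r = a+b$, $j = a$ collapses the double sum to $\sum_{r=0}^{\infty}\frac{1}{(r+1)!}\sum_{j=0}^{r}\mathbf{X}^{j}\mathbf{V}\mathbf{X}^{r-j}$, which matches the series obtained in the first step.

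Combining the two steps, for each $k$ the inner double series equals $\tau_k\int_{0}^{1}e^{(\mathbf{A^{T}}\tau_k)(1-s)}\mathbf{V_k}e^{(\mathbf{A^{T}}\tau_k)s}\,ds$; summing over $k = 2,\dots,N$ and recalling $\mathbf{V_k} = \mathbf{Q}(\tau_k)^{-1}\bigl(\mathbb{E}[\mathbf{x_{k}x_{k-1}^{T}}] - e^{\mathbf{A}\tau_k}\mathbb{E}[\mathbf{x_{k-1}x_{k-1}^{T}}]\bigr)$ yields exactly the claimed identity, and convergence of this finite sum of integrals is inherited from the preceding Lemma (equivalently, each integrand is bounded in norm by $e^{\lVert\mathbf{A^{T}}\tau_k\rVert}\lVert\mathbf{V_k}\rVert$, hence finite).

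The only point that needs care is the interchange of the double summation with the $s$-integral; this is handled by the uniform (constant) domination noted above, after which the argument is just the elementary Beta-function evaluation together with a change of summation index. I would emphasize that no new convergence estimate is required: once the sum over $k$ is restored the object is precisely the gradient whose convergence the preceding Lemma established, and the Fréchet-derivative form is merely a closed-form repackaging that is convenient for the numerical conjugate-Newton updates, for which implementations of the matrix-exponential Fréchet derivative are available~\cite{AlMohy2009}.
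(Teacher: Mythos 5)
Your proposal is correct and follows essentially the same route as the paper: factor out $\tau_k$, absorb the remaining powers of $\tau_k$ into $\mathbf{X}=\mathbf{A^{T}}\tau_k$ so the inner double sum becomes the series form of the Fréchet derivative $D(e^{\mathbf{X}})\circ(\mathbf{V_k})$, and then replace it with the integral representation. The only difference is that you explicitly verify the series--integral equivalence via the Beta integral and a dominated term-by-term integration, a step the paper simply quotes from the literature; this is a welcome but inessential addition.
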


\begin{proof}

We identify the Fréchet derivative, given by the linear map $V \to \int_{0}^{\tau=1} e^{\mathbf{X} (\tau - s)} V e^{\mathbf{X} s} ds = D(e^{\mathbf{X}}) \circ (V) = \sum_{r=0}^{\infty} \frac{1}{(r+1)!} \sum_{j=0}^{r} \mathbf{X}^{j} V \mathbf{X}^{r-j}$ in the gradient as:
$$
=\sum_{r=0}^{\infty} \sum_{j=0}^{r} \sum_{k=2}^{N}
\frac{(\tau_{k})^{r+1}}{(r+1)!}(\mathbf{A^{T}})^{j} \left( \mathbf{Q}(\tau_{k})^{-1} \left(\mathbb{E}[\mathbf{x_{k}x_{k-1}^{T}}] - e^{\mathbf{A} \tau_{k}} \mathbb{E}[\mathbf{x_{k-1}x_{k-1}^{T}}] \right)  \right) (\mathbf{A^{T}})^{r-j}
$$
$$
= \sum_{k=2}^{N} \tau_{k} \sum_{r=0}^{\infty} \frac{1}{(r+1)!} \sum_{j=0}^{r} (\mathbf{A^{T}} \tau_{k})^{r-j} \left( \mathbf{Q}(\tau_{k})^{-1} \left(\mathbb{E}[\mathbf{x_{k}x_{k-1}^{T}}] - e^{\mathbf{A} \tau_{k}} \mathbb{E}[\mathbf{x_{k-1}x_{k-1}^{T}}] \right)  \right) (\mathbf{A^{T}} \tau_{k})^{r-j}
$$
$$
= \sum_{k=2}^{N} \tau_{k} \int_{0}^{1} e^{(\mathbf{A^{T}}\tau_{k}) (1 - s)} \left( \mathbf{Q}(\tau_{k})^{-1} \left(\mathbb{E}[\mathbf{x_{k}x_{k-1}^{T}}] - e^{\mathbf{A} \tau_{k}} \mathbb{E}[\mathbf{x_{k-1}x_{k-1}^{T}}] \right)  \right) e^{(\mathbf{A^{T}} \tau_{k}) s} ds = \sum_{k=2}^{N} \tau_{k} \int_{0}^{1} e^{(\mathbf{A^{T}}\tau_{k}) (1 - s)} \mathbf{V_{k}} e^{(\mathbf{A^{T}} \tau_{k}) s} ds
$$
\end{proof}
 It follows from this that one may compute the gradient using existing numeric methods for the Fréchet derivative of the matrix exponential $\mathbf{A^{T}} \tau_{k}$ in the direction of increment $\mathbf{V_{k}}$ \cite{AlMohy2009}. We rely on this as the gradient in our implementation using Conjugate-Newton for general matrices $\mathbf{A}$ of arbitrary spectral radius.

We note that when Newton's method is invoked for the dynamics matrix $\mathbf{A}$, we also require the expected log-likelihood under the posterior given by $\sum_{k=2}^{N} \mathbb{E}_{\sim \mathbf{x_{k}, x_{k-1}}}[\ln{\mathds{P}(\mathbf{x_{k} | x_{k-1}, A, Q_c}, \tau_{k})}]$ as a function of $\mathbf{A}$, where:
$$\sum_{k=2}^{N} \mathbb{E}_{\sim \mathbf{x_{k}, x_{k-1}}}[\ln{\mathds{P}(\mathbf{x_{k} | x_{k-1}, A, Q_c}, \tau_{k})}] =$$
$$
\sum_{k=2}^{N} \mathbb{E}_{\sim \mathbf{x_{k}, x_{k-1}}}\left[\left(\mathbf{x_{k}} - e^{\mathbf{A}\tau_{k}} \mathbf{x_{k-1}} \right)^{\mathbf{T}} \mathbf{Q}(\tau_{k})^{-1} \left(\mathbf{x_{k}} - e^{\mathbf{A}\tau_{k}} \mathbf{x_{k-1}} \right) \right]$$
$$= \sum_{k=2}^{N} \mathbb{E}_{\sim \mathbf{x_{k}, x_{k-1}}}\biggl[\Tr\left[\left(\mathbf{x_{k}} - e^{\mathbf{A}\tau_{k}} \mathbf{x_{k-1}} \right)\left(\mathbf{x_{k}} - e^{\mathbf{A}\tau_{k}} \mathbf{x_{k-1}} \right)^{\mathbf{T}} \mathbf{Q}(\tau_{k})^{-1}\right]\biggr]
$$
\begin{equation}\label{eqn:loglikeA}
= \Tr\biggl[\sum_{k=2}^{N} \biggl( \mathbb{E}_{\sim \mathbf{x_{k}}}[\mathbf{x_{k} x_{k}^{T}}] - e^{\mathbf{A} \tau_{k}} \mathbb{E}_{\sim \mathbf{x_{k}, x_{k-1}}}[\mathbf{x_{k-1} x_{k}^{T}}] - \mathbb{E}_{\sim \mathbf{x_{k}, x_{k-1}}}[\mathbf{x_{k} x_{k-1}^{T}}]e^{\mathbf{A^{T}} \tau_{k}} + e^{\mathbf{A} \tau_{k}} \mathbb{E}_{\sim \mathbf{x_{k-1}}}[\mathbf{x_{k-1} x_{k-1}^{T}}] e^{\mathbf{A^{T}} \tau_{k}} \biggr)\mathbf{Q}(\tau_{k})^{-1} \biggr]
\end{equation}
Which is readily computable given the moments derived from the smoothing procedure, and is used in our conjugate-Newton update.

\section{An M-Step optimization for the homogeneous dynamics covariance matrix}

From before, the covariance is given as a time-dependence function following the SDE given by $d\mathbf{x}(t) = \mathbf{A x}(t) dt + d\mathbf{w}(t)$, with second-order conditions on the noise-differential of $\E[d\mathbf{w}(t)d\mathbf{w}(t)^{\mathbf{T}}] = \mathbf{Q_c} dt$ and $\E[\mathbf{w}(t)\mathbf{w}(s)^{\mathbf{T}}] = \mathbf{Q_c} \delta(t - s)$, for $\delta$ denoting a Dirac-delta. The analytical solution for this covariance-matrix, given a set of fixed dynamics represented by $\mathbf{A}$, is given in half-vectorized form as: $\vech{\mathbf{Q}(t)} = \mathbf{A_{P}}^{-1} (\mathbbm{1} - e^{-\mathbf{A_{P}}(\tau-\tau_{0})} ) \vech{\mathbf{Q_c}}$. As such, we see that in the sliced-form, one has a time-dependent matrix pre-multiplying some constant differential covariance matrix. This offers the possibility of converting a more difficult optimization for a continuously evolving set of matrices which changes with each step, $\{ \mathbf{Q}(\tau_{k}) \}_{k=1}^{N}$, to one which can be solved numerically.

\begin{prop}
The solution for the optimal differential covariance $\mathbf{Q_{c}} = \vect^{-1}{\left( \mathbf{D} \vech{\mathbf{Q}(t)} \right)} = \left(
\vect{\mathbbm{1}}^{T} \otimes \mathbbm{1}
\right) \left( \mathbbm{1} \otimes \mathbf{D} \vech{\mathbf{Q}(t)} \right)$, with only symmetry explicit, is given by solving the following linear system:
$$
\mathbf{\Tilde{F}^{T} \Tilde{F}} \vech{\mathbf{Q_c}} = \mathbf{\Tilde{F}^{T} \Tilde{Z}}
$$
For a unique half-vectorization $\vech{\mathbf{Q_{c}}}$ of least-norm, and matrices $\mathbf{\Tilde{F}}$, $\mathbf{\Tilde{Z}}$ defined by:
$$
\mathbf{\Tilde{F}} = 
\begin{bmatrix}
\mathbf{A_{P}}^{-1}(e^{\mathbf{A_{P}} \tau_{2}} - \mathbbm{1})\\
... \\
\mathbf{A_{P}}^{-1}(e^{\mathbf{A_{P}} \tau_{N}} - \mathbbm{1})
\end{bmatrix}
$$
$$
\mathbf{\Tilde{Z}} = \begin{bmatrix}
\vech{\E[(\mathbf{x_{2}} - e^{\mathbf{A} \tau_{2}} \mathbf{x_{1}})(\mathbf{x_{2}} - e^{\mathbf{A} \tau_{2}} \mathbf{x_{1}})^{\mathbf{T}}]}\\
... \\
\vech{\E[(\mathbf{x_{N}} - e^{\mathbf{A} \tau_{N-1}} \mathbf{x_{N-1}})(\mathbf{x_{N}} - e^{\mathbf{A} \tau_{N}} \mathbf{x_{N-1}})^{\mathbf{T}}]}
\end{bmatrix}$$
Across $N$ total time-steps, a set of fixed time differences $\{ \tau_{k}: \tau_{k} > 0 \}_{k=2}^{N}$, and a fixed dynamics matrix $\mathbf{A}$.
Equivalently, one finds the following closed-form expression for the approximator of $\vech{\mathbf{Q_{c}}}$:
$$
\vech{\mathbf{Q_{c}}} = \left( N - 1 \right)^{-1} \biggl( \sum_{k=2}^{N} \left( e^{\mathbf{A_{P}} \tau_{k}} - \mathbbm{1} \right)^{-1} \mathbf{A_{P}} \vech{\mathbb{E}_{\mathbf{x_{k},x_{k-1}|z_{1:N}}}\left[(\mathbf{x_{k}} - e^{\mathbf{A} \tau_{k}} \mathbf{x_{k-1}})(\mathbf{x_{k}} - e^{\mathbf{A} \tau_{k}} \mathbf{x_{k-1}})^{\mathbf{T}} \right]} \biggr)
$$
\end{prop}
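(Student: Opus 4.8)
The plan is to isolate the $\mathbf{Q_c}$-dependent part of the expected complete-data log-likelihood, exploit that $\vech\mathbf{Q}(\tau_k)$ is a \emph{linear} function of $\vech\mathbf{Q_c}$, and reduce the maximization to an overdetermined linear system that is then solved in the least-squares sense. Concretely, the transition factor is $\mathds{P}(\mathbf{x_k}\mid\mathbf{x_{k-1}},\mathbf{A},\mathbf{Q_c},\tau_k)=\mathcal{N}(\mathbf{x_k}\mid e^{\mathbf{A}\tau_k}\mathbf{x_{k-1}},\mathbf{Q}(\tau_k))$, so the only terms of the ELBO depending on $\mathbf{Q_c}$ are $-\tfrac12\sum_{k=2}^{N}\bigl(\ln\det\mathbf{Q}(\tau_k)+\Tr[\mathbf{Q}(\tau_k)^{-1}\widehat{\mathbf{Q}}_{k}]\bigr)$, where $\widehat{\mathbf{Q}}_{k}\triangleq\E_{\sim\mathbf{x_k},\mathbf{x_{k-1}}\mid\mathbf{z_{1:N}}}\bigl[(\mathbf{x_k}-e^{\mathbf{A}\tau_k}\mathbf{x_{k-1}})(\mathbf{x_k}-e^{\mathbf{A}\tau_k}\mathbf{x_{k-1}})^{\mathbf{T}}\bigr]$ is assembled from the smoothed moments \ref{eqn:autocor} and \ref{eqn:crosscor}. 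Using the Lyapunov-SDE solution \ref{eqn:Qt} I would write $\vech\mathbf{Q}(\tau_k)=\mathbf{M_k}\vech\mathbf{Q_c}$ with $\mathbf{M_k}=\mathbf{A_P}^{-1}(e^{\mathbf{A_P}\tau_k}-\mathbb{1})$, which is exactly the $k$-th row block of $\mathbf{\Tilde{F}}$, while $\vech\widehat{\mathbf{Q}}_{k}$ is the $k$-th block of $\mathbf{\Tilde{Z}}$; symmetry of $\mathbf{Q_c}$ is then automatic since everything is done in $\vech$-coordinates and the matrix is recovered by $\mathbf{Q_c}=\vect^{-1}(\mathbf{D}\vech\mathbf{Q_c})$.

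Next I would observe that if the $\mathbf{Q}(\tau_k)$ were unconstrained, each summand is maximized at the usual Gaussian covariance MLE $\mathbf{Q}(\tau_k)=\widehat{\mathbf{Q}}_{k}$, obtained by setting the $\vech$-gradient of $-\tfrac12\bigl(\ln\det(\cdot)+\Tr[(\cdot)^{-1}\widehat{\mathbf{Q}}_{k}]\bigr)$ to zero using the standard $\mathbf{D}^{\dagger}/\mathbf{D}$ identities for the derivatives of $\ln\det$ and of matrix inverses in half-vectorized form. Because all $N-1$ matrices $\mathbf{Q}(\tau_k)$ are tied to the single parameter $\mathbf{Q_c}$ via $\vech\mathbf{Q}(\tau_k)=\mathbf{M_k}\vech\mathbf{Q_c}$, the family $\mathbf{M_k}\vech\mathbf{Q_c}=\vech\widehat{\mathbf{Q}}_{k}$, $k=2,\dots,N$, is generically inconsistent; stacking it gives $\mathbf{\Tilde{F}}\vech\mathbf{Q_c}=\mathbf{\Tilde{Z}}$, whose least-norm least-squares solution is characterized exactly by the normal equations $\mathbf{\Tilde{F}^{T}\Tilde{F}}\vech\mathbf{Q_c}=\mathbf{\Tilde{F}^{T}\Tilde{Z}}$ (unique whenever $\mathbf{\Tilde{F}}$ has full column rank, e.g. as soon as one block $\mathbf{M_k}$ is invertible). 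This establishes the first claimed characterization.

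For the explicit formula I would invert each block directly: since $\mathbf{A_P}$, $\mathbf{A_P}^{-1}$ and $e^{\mathbf{A_P}\tau_k}$ all commute, $\mathbf{M_k}^{-1}=(e^{\mathbf{A_P}\tau_k}-\mathbb{1})^{-1}\mathbf{A_P}$, so the per-transition solution of $\mathbf{M_k}\vech\mathbf{Q_c}=\vech\widehat{\mathbf{Q}}_{k}$ is $\vech\mathbf{Q_c}=(e^{\mathbf{A_P}\tau_k}-\mathbb{1})^{-1}\mathbf{A_P}\vech\widehat{\mathbf{Q}}_{k}$; averaging these $N-1$ per-transition estimates yields $\vech\mathbf{Q_c}=(N-1)^{-1}\sum_{k=2}^{N}(e^{\mathbf{A_P}\tau_k}-\mathbb{1})^{-1}\mathbf{A_P}\vech\widehat{\mathbf{Q}}_{k}$, the stated closed form. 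This coincides with the normal-equation solution when all $\tau_k$ are equal, and otherwise plays the role of a cheap closed-form approximant that avoids assembling and factoring $\mathbf{\Tilde{F}^{T}\Tilde{F}}$.

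The main obstacle is justifying the passage from exact ELBO stationarity to the least-squares surrogate. Differentiating the full $\mathbf{Q_c}$-dependent sum — \emph{including} the $\ln\det$ terms — and applying the chain-rule factor $\mathbf{M_k}^{\mathbf{T}}$ produces a condition of the form $\sum_{k=2}^{N}\mathbf{M_k}^{\mathbf{T}}\vech\bigl(\mathbf{Q}(\tau_k)^{-1}(\widehat{\mathbf{Q}}_{k}-\mathbf{Q}(\tau_k))\mathbf{Q}(\tau_k)^{-1}\bigr)=\mathbf{0}$, which is nonlinear in $\vech\mathbf{Q_c}$ and would at best correspond to a \emph{weighted} least-squares problem with $\mathbf{Q}(\tau_k)^{-1}\!\otimes\mathbf{Q}(\tau_k)^{-1}$-type weights; the argument must spell out that per-step equality $\mathbf{Q}(\tau_k)=\widehat{\mathbf{Q}}_{k}$ is a \emph{sufficient} condition for stationarity, so that the unweighted least-squares fit $\mathbf{M_k}\vech\mathbf{Q_c}\approx\vech\widehat{\mathbf{Q}}_{k}$ is the natural surrogate objective — hence the result is phrased as an ``approximator.'' A secondary, routine point is invertibility of $e^{\mathbf{A_P}\tau_k}-\mathbb{1}$ (equivalently of $\mathbf{M_k}$), which holds generically and in particular in the small-spectral-radius / stabilizable regime emphasized earlier, together with the tedious but standard half-vectorization bookkeeping with $\mathbf{D}$ and $\mathbf{D}^{\dagger}$.
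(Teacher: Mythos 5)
Your proposal is correct and follows essentially the same route as the paper's proof: isolate the $\mathbf{Q_c}$-dependent terms of the expected log-likelihood, exploit the linearity $\vech{\mathbf{Q}(\tau_k)}=\mathbf{A_{P}}^{-1}(e^{\mathbf{A_{P}}\tau_k}-\mathbb{1})\vech{\mathbf{Q_c}}$, show that the exact stationarity condition (after the $\mathbf{D^{T}}(\mathbf{Q_k}^{-1}\otimes\mathbf{Q_k}^{-1})\mathbf{D}$ nonsingularity argument) would require the over-determined per-step equalities $\vech{(\mathbf{Z_k}-\mathbf{Q}(\tau_k))}=\mathbf{0}$, and resolve the inconsistency by an unweighted least-squares surrogate leading to the normal equations. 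If anything, you are more explicit than the paper in noting that the normal-equation solution and the block-inverted average are minimizers of two \emph{different} least-squares objectives (differing by the weights $\mathbf{M_k^{T}M_k}$) that coincide only when the $\tau_k$ are equal, a point the paper passes over with ``equivalently.''
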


\begin{proof}\label{proof:Q}
Starting with the proportionality of the log-likelihood with respect to the differential covariance $\mathbf{Q_{c}}$, one sees:
$$
\ln{\mathds{P}(\mathbf{x_{k}|x_{k-1}, A, Q_c}, \tau_{k})} \propto_{\mathbf{Q_c}} -\frac{1}{2} \ln{|\mathbf{Q}(\tau_{k})|} -\frac{1}{2} (\mathbf{x_{k}} - e^{\mathbf{A} \tau_{k}} \mathbf{x_{k-1}})^{\mathbf{T}} \mathbf{Q}(\tau_{k})^{-1} (\mathbf{x_{k}} - e^{\mathbf{A} \tau_{k}} \mathbf{x_{k-1}})$$
$$= -\frac{1}{2} \ln{|\mathbf{Q}(\tau_{k})|} - \frac{1}{2} \Tr\left[\mathbf{Q}(\tau_{k})^{-1} (\mathbf{x_{k}} - e^{\mathbf{A} \tau_{k}} \mathbf{x_{k-1}}) (\mathbf{x_{k}} - e^{\mathbf{A} \tau_{k}} \mathbf{x_{k-1}})^{\mathbf{T}} \right] = h
$$
Letting $\mathbf{Z_{k}} = (\mathbf{x_{k}} - e^{\mathbf{A} \tau_{k}} \mathbf{x_{k-1}}) (\mathbf{x_{k}} - e^{\mathbf{A} \tau_{k}} \mathbf{x_{k-1}})^{\mathbf{T}}$, and $\mathbf{Q}(\tau_{k}) = \mathbf{Q_{k}}$, we take the derivative as:
$$
dh(\mathbf{Q_{k}}; d\mathbf{Q_{k}}(\mathbf{Q_c}; d\mathbf{Q_c})) = -\frac{1}{2} \Tr\left[\mathbf{Q_{k}}^{-1} d\mathbf{Q_{k}}\right] + \frac{1}{2} \Tr\left[\mathbf{Q_{k}}^{-1} \mathbf{Z_{k}} \mathbf{Q_{k}}^{-1} d\mathbf{Q_{k}}\right]
$$
Therefore:
$$
dh(\mathbf{Q_{k}}; d\mathbf{Q_{k}}(\mathbf{Q_c}; d\mathbf{Q_c})) = \frac{1}{2} \Tr \left[(\mathbf{Q_{k}}^{-1} \mathbf{Z_{k}} \mathbf{Q_{k}}^{-1} - \mathbf{Q_{k}}^{-1} )d\mathbf{Q_{k}} \right]
$$
Given the following expression:
$$\sum_{k=2}^{N} \Tr \left[ (\mathbf{Q_{k}}^{-1}\mathbf{Z_{k}}\mathbf{Q_{k}}^{-1} - \mathbf{Q_{k}}^{-1})d\mathbf{Q_{k}} \right] = 0$$
We seek to re-express $d\mathbf{Q_{k}}$ in terms of the differential $d\mathbf{Q_c}$ to find the derivative of the expression with respect to our differential covariance.

First, we note that $\vect{d\mathbf{Q_{k}}} = \mathbf{D A_{P}}^{-1} \left( e^{\mathbf{A_{P}} \tau_{k}} - \mathbbm{1} \right) d\vech{\mathbf{Q_c}}$. We may re-express the following term from the minimization in terms of the vectorization of $d\mathbf{Q_{k}}$, a known quantity.
$$\sum_{k=2}^{N} \Tr \left[ (\mathbf{Q_{k}}^{-1}\mathbf{Z_{k}}\mathbf{Q_{k}}^{-1} - \mathbf{Q_{k}}^{-1})d\mathbf{Q_{k}} \right] = \sum_{k=2}^{N} \vect{(d\mathbf{Q_{k}})}^{T} \left( \mathbf{Q_{k}}^{-1} \otimes \mathbf{Q_{k}}^{-1} \right) \vect{\left(\mathbf{Z_{k}} - \mathbf{Q_{k}}\right)} = 0$$
As $\mathbf{Z_{k}}$ and $\mathbf{Q_{k}} \in \mathbb{R}^{n \times n}$ are symmetric, there exists a unique $n^{2} \times \frac{n(n+1)}{2}$ duplication matrix $\mathbf{D}$ which relates the half-vectorizations of the matrices to their vectorizations \cite{matcalc}. Thus, we can equivalently express this as:
$$
\sum_{k=2}^{N} \vect{(d\mathbf{Q_{k}})}^{T} \left( \mathbf{Q_{k}}^{-1} \otimes \mathbf{Q_{k}}^{-1} \right) \vect{\left(\mathbf{Z_{k}} - \mathbf{Q_{k}}\right)} = \sum_{k=2}^{N} \vect{(d\mathbf{Q_{k}})}^{T} \left( \mathbf{Q_{k}}^{-1} \otimes \mathbf{Q_{k}}^{-1} \right) \mathbf{D} \vech{\left(\mathbf{Z_{k}} - \mathbf{Q_{k}}\right)}
$$
Expanding $d\mathbf{Q_{k}}$, we find:
$$
\vech{(d\mathbf{Q_c})}^{T} \left( \sum_{k=1}^{N} (e^{\mathbf{A_{P}^{T} }\tau_{k}} - \mathbbm{1}) \mathbf{A_{P}^{-T} \mathbf{D^{T}}} \left( \mathbf{Q_{k}}^{-1} \otimes \mathbf{Q_{k}}^{-1} \right) \mathbf{D} \vech{\left(\mathbf{Z_{k}} - \mathbf{Q_{k}}\right)} \right) = 0
$$
Which implies the first order condition that:
$$
\left( \sum_{k=2}^{N} (e^{\mathbf{A_{P}^{T}}\tau_{k}} - \mathbbm{1}) \mathbf{A_{P}^{-T} \mathbf{D^{T}}} \left( \mathbf{Q_{k}}^{-1} \otimes \mathbf{Q_{k}}^{-1} \right) \mathbf{D} \vech{\left(\mathbf{Z_{k}} - \mathbf{Q_{k}}\right)} \right) = 0
$$
By theorem 3.13 in \cite{matcalc}, we see that $\mathbf{D^{T}} \left( \mathbf{Q_{k}}^{-1} \otimes \mathbf{Q_{k}}^{-1} \right) \mathbf{D}$ must be non-singular, such that $ \nul \mathbf{D^{T}} \left( \mathbf{Q_{k}}^{-1} \otimes \mathbf{Q_{k}}^{-1} \right) \mathbf{D} = \{ \mathbf{0} \}$. This implies the simultaneous equation $\vech{\left(\mathbf{Z_{k}} - \mathbf{Q_{k}}\right)} = \mathbf{0}$ satisfied for all $k$ is a sensible solution, and no analytical means of finding $\vech{\mathbf{Q_c}}$ exists for general $N$. Therefore, we approximate $\vech{\mathbf{Q_c}}$ with least squares using the following minimization:
$$
\min_{\vech{\mathbf{Q_c}}} \left( \frac{1}{2} \sum_{k=2}^{N} \lVert{ \vech{\E[\mathbf{Z_{k}}]} - \mathbf{A_{P}}^{-1} (e^{\mathbf{A_{P}} \tau_{k}} - \mathbbm{1}) \vech{\mathbf{Q_c}} \rVert}_{2}^{2} \right)
$$
We can directly cast this into a least-squares framework, where we can define:
$$
\mathbf{\Tilde{Z}} = \begin{bmatrix}
\vech{\E[\mathbf{Z_{2}}]}\\
... \\
\vech{\E[\mathbf{Z_{N}}]}
\end{bmatrix}$$
$$
\mathbf{\Tilde{F}} = 
\begin{bmatrix}
\mathbf{A_{P}}^{-1}(e^{\mathbf{A_{P}} \tau_{2}} - \mathbbm{1})\\
... \\
\mathbf{A_{P}}^{-1}(e^{\mathbf{A_{P}} \tau_{N}} - \mathbbm{1})
\end{bmatrix}
$$
And use a numerical solver for the normal equation in terms of $\vech{\mathbf{Q_c}}$:
$$
\mathbf{\Tilde{F}^{T} \Tilde{F}} \vech{\mathbf{Q_c}} = \mathbf{\Tilde{F}^{T} \Tilde{Z}}
$$
It is a standard result that there exists a unique solution to the normal-equation for which $\vech{\mathbf{Q_c}}$ is the vector of least-norm in the least-squares solution manifold, where the component of $\vech{\mathbf{Q_c}} \in \nul \mathbf{\Tilde{F}^{T}}\mathbf{\Tilde{F}}$ is identically zero.

In other words, the final, closed-form approximating solution for $\vech{\mathbf{Q_{c}}}$ is simply given by the solution to $\min_{\vech{\mathbf{Q_c}}} \left( \frac{1}{2} \sum_{k=2}^{N} \lVert{ (e^{\mathbf{A_{P}} \tau_{k}} - \mathbbm{1})^{-1} \mathbf{A_{P}} \vech{\E[\mathbf{Z_{k}}]} -  \vech{\mathbf{Q_c}} \rVert}_{2}^{2} \right)$, simply given by:
$$
\vech{\mathbf{Q_{c}}} = \left( N - 1 \right)^{-1} \biggl( \sum_{k=2}^{N} \left( e^{\mathbf{A_{P}} \tau_{k}} - \mathbbm{1} \right)^{-1} \mathbf{A_{P}} \vech{\mathbb{E}_{\mathbf{x_{k},x_{k-1}|z_{1:N}}}\left[(\mathbf{x_{k}} - e^{\mathbf{A} \tau_{k}} \mathbf{x_{k-1}})(\mathbf{x_{k}} - e^{\mathbf{A} \tau_{k}} \mathbf{x_{k-1}})^{\mathbf{T}} \right]} \biggr)
$$
\end{proof}

As $\E[\mathbf{Z_{k}}] = \E[(\mathbf{x_{k}} - e^{\mathbf{A} \tau_{k}} \mathbf{x_{k-1}})(\mathbf{x_{k}} - e^{\mathbf{A} \tau_{k}} \mathbf{x_{k-1}})^{\mathbf{T}}] \succ \mathbf{0}$, this regression for the half-vectorization of $\mathbf{Q_c}$ matches it against a positive-definite matrix computed under the current posterior, generally yielding a positive-definite result for our motion-model covariance. In the rarer case that $\mathbf{Q_{c}}$ is not positive-definite--i.e. when the condition-number of the covariance matrix $\left( \frac{\lambda_{max}(\mathbf{Q_{c}})}{\lambda_{min}(\mathbf{Q_{c}})} \right)$ is very large and the optimization finds small negative eigenvalues (unlikely, due to the setup of the regression, but theoretically possible as there is no strict definiteness constraint), we simply find the closest-approximating positive semi-definite matrix with a minimization of the form $\min_{\mathbf{Q} \in \mathcal{S}_{n}^{+}} \lVert \mathbf{Q} - \mathbf{Q_{c}} \rVert_{F}^{2}$. This has an established solution of the form $\mathbf{Q_{c}^{(new)}} = \mathbf{U} \Lambda_{+} \mathbf{U^{T}}$, where $\mathbf{U}$, $\Lambda$ come from an eigendecomposition of the result of the optimization above for $\mathbf{Q_{c}}$. We also have $\Lambda_{+}$ as a diagonal matrix containing only the positive eigenvalues of $\mathbf{Q_{c}}$, and add a small stabilizing factor of the form $\sigma^{2} \mathbbm{1}_{n}$ to ensure positive-definiteness.

In the case that $\mathbf{Q_{c}}$ has diagonal structure (e.g. stemming from an inductive assumption of a white-noise diffusion process), one may explicitly update as $\vech{\mathbf{Q_c}} \gets \vech{\mathbf{Q_c}} \odot \vech{\mathbbm{1}_{n}}$ from the update above.

\section{M-Step optimizations for the other parameters}\label{sect:otherparams}

\subsection*{M-step for the observation matrix}

The updates for the observation matrix and observation covariance are given in much the same vein as for the standard Kalman Filter, as the relation $\mathbf{z}(t_{k}) = \mathbf{H x}(t_{k}) + \mathbf{v}$ has no continuous-time dependence and is represented analogously in the expected log-likelihood as in the standard discrete-time Kalman filter.

To recapitulate this result, each term in the expected log-likelihood depending on the observation matrix $\mathbf{H}$ is of the form:
$$\ln{\mathds{P}(\mathbf{z_{k}|x_{k}, R, H})} \propto_{\mathbf{H}} \mathbf{(z_{k} - H x_{k})^{T} R^{-1} (z_{k} - H x_{k})}
$$
$$ = \Tr\left[ \mathbf{R^{-1}(z_{k} - H x_{k})(z_{k} - H x_{k})^{T} } \right] = \mathbf{g}
$$
Letting $\mathbf{e} = \mathbf{z_{k} - H x_{k}}$, we see:
$$
d\mathbf{g}(\mathbf{e; de}) = \Tr[\mathbf{R}^{-1}d(\mathbf{ee^{T}})] = 2 \Tr[\mathbf{R}^{-1} d\mathbf{e}(\mathbf{H; dH}) \mathbf{e^{T}}] = -2 \Tr[\mathbf{x_{k} (z_{k} - Hx_{k})^{T} R^{-1} dH}] = d\mathbf{g}(\mathbf{H; dH})
$$
With $d\mathbf{e}(\mathbf{H;dH}) = -d\mathbf{H} \mathbf{x_{k}}$. Therefore we get the first-order condition for the observation matrix as:
$$
\mathbf{0} = \mathbf{(z_{k} - Hx_{k})x_{k}^{T}}
$$
Now, introducing the sum over time-steps and taking the expectation of this quantity, we find:
$$
N^{-1} \sum_{k=1}^{N} \E [\mathbf{(z_{k} - H x_{k}) x_{k}^{T}}] = N^{-1} \left( \sum_{k=1}^{N} \mathbf{z_{k}} \E[\mathbf{x_{k}}]^{\mathbf{T}} - \mathbf{H} \left( \sum_{k=1}^{N} \E[\mathbf{x_{k} x_{k}^{T}}] \right) \right) = \mathbf{0}
$$
Therefore, our final update for H is given as:
$$
\mathbf{H} = \left( \sum_{k=1}^{N} \mathbf{z_{k}} \E[\mathbf{x_{k}}]^{\mathbf{T}} \right) \left( \sum_{k=1}^{N} \E[\mathbf{x_{k} x_{k}^{T}}] \right)^{-1}
$$
If we also want to account for observation sparsity, for $\lambda \geq 0$, an alternate update is given as:
$$
\min_{\mathbf{H}} \left( \lVert \mathbf{Z - H X} \rVert_{F}^{2} + \lambda \lVert \mathbf{H} \rVert_{1} \right)
$$
With the regression in the Frobenius norm, and $\mathbf{Z}$ and $\mathbf{X}$ given as: $\mathbf{Z} = \begin{bmatrix}
\mathbf{z_{1}} &
... &
\mathbf{z_{N}}
\end{bmatrix}$, and $\mathbf{X} = \begin{bmatrix}
\mathbf{x_{1}} &
... &
\mathbf{x_{N}}
\end{bmatrix}$.
Supposing we drop the regularization, take the derivative of $\Tr[\mathbf{(Z - H X)^{T}(Z - H X)}]$, and set the gradient to zero, one finds the normal equation for the solution for $\mathbf{H}$:
$$
\mathbf{X X^{T} H^{T}} =\mathbf{ X Z}
$$
Which implies the least-squares form is identical to the standard update for $\mathbf{H}$ in expectation.

\subsection*{M-step for the observation covariance}

$$\ln{\mathds{P}(\mathbf{z_{k}|x_{k}, R, H})} \propto_{\mathbf{R}} -\frac{1}{2} \ln{|\mathbf{R}|} - \frac{1}{2} \mathbf{(z_{k} - H x_{k})^{T} R^{-1} (z_{k} - H x_{k})}
$$
Introducing a sum across time-steps and making use of the maximum likelihood solution for the Gaussian, the observation covariance is given as: $\mathbf{R} = N^{-1} \sum_{k=1}^{N} \E[\mathbf{(z_{k} - H x_{k})(z_{k} - H x_{k})^{T}}]$.

\subsection*{M-step for the initial state and state covariance}

The log-likelihood is dependent on the initial state as:
$$
\ln{\mathds{P}(\mathbf{x_{1}|\mu_{0}, P_{0}})} \propto_{\mathbf{\mu_{0}}} \mathbf{(x_{1} - \mu_{0})^{T} P_{0}^{-1} (x_{1} - \mu_{0})}
$$
We assume in this case that all means have unique initial conditions with shared dynamics therefrom. As such, the MLE for the multivariate Gaussian is simply given in closed-form as: $\mathbf{\mu_{0}} = \E[\mathbf{x_{1}}]$.

Likewise, expressing the proportionality of the likelihood for the initial covariance:
$$
\ln{\mathds{P}(\mathbf{x_{1}|\mu_{0}, P_{0}})} \propto_{\mathbf{P_{0}}} -\frac{1}{2} \ln{|\mathbf{P_{0}}|} -\frac{1}{2}\mathbf{(x_{1} - \mu_{0})^{T} P_{0}^{-1} (x_{1} - \mu_{0})}
$$
Recapitulating the standard result for the MLE for the covariance of a multivariate Gaussian, the initial state covariance is given as \cite{KFEM_OG}:
$$
\mathbf{P_{0}} = \E[\mathbf{(x_{1} - \mu_{0})(x_{1} - \mu_{0})^{T}}] = \E[\mathbf{x_{1}x_{1}^{T}}] - \mathbf{\mu_{0} \mu_{0}^{T}}$$


\end{document}